\documentclass[11pt,a4paper]{article}

\usepackage[margin=1in]{geometry}
\usepackage{amsmath,amssymb,amsfonts,amsthm,mathrsfs}
\usepackage[ruled,vlined,linesnumbered]{algorithm2e}
\usepackage{graphicx,multirow,booktabs}
\usepackage{xcolor,textcomp}
\usepackage{tabularx,makecell}
\usepackage[title]{appendix}
\usepackage{hyperref}
\usepackage[numbers,sort&compress]{natbib}
\usepackage{caption}
\usepackage{subcaption}

\usepackage{authblk}

\newtheorem{theorem}{Theorem}

\newtheorem{proposition}[theorem]{Proposition}
\newtheorem{corollary}[theorem]{Corollary}
\theoremstyle{definition}
\newtheorem{definition}[theorem]{Definition}
\theoremstyle{remark}
\newtheorem{remark}[theorem]{Remark}

\title{Adversarial Vulnerabilities in Neural Operator Digital Twins:\\ Gradient-Free Attacks on Nuclear Thermal-Hydraulic Surrogates}

\author[1]{Samrendra Roy\thanks{Corresponding author: \texttt{roysam@illinois.edu}}}
\author[1]{Kazuma Kobayashi}
\author[2]{Souvik Chakraborty}
\author[1]{Rizwan-uddin}
\author[1,3]{Syed Bahauddin Alam}
\affil[1]{Department of Nuclear, Plasma \& Radiological Engineering, University of Illinois Urbana-Champaign, Urbana, IL 61801, USA}
\affil[2]{Department of Applied Mechanics, Indian Institute of Technology Delhi, New Delhi 110016, India}
\affil[3]{National Center for Supercomputing Applications, University of Illinois Urbana-Champaign, Urbana, IL 61801, USA}

\date{}

\begin{document}

\maketitle
\begin{abstract}
Operator learning models are rapidly emerging as the predictive core of digital twins for nuclear and energy systems, promising real-time field reconstruction from sparse sensor measurements. Yet their robustness to adversarial perturbations remains uncharacterized, a critical gap for deployment in safety-critical systems. Here we show that neural operators are acutely vulnerable to extremely sparse (fewer than 1\% of inputs), physically plausible perturbations that exploit their sensitivity to boundary conditions. Using gradient-free differential evolution across four operator architectures, we demonstrate that minimal modifications trigger catastrophic prediction failures, increasing relative $L_2$ error from $\sim$1.5\% (validated accuracy) to 37--63\% while remaining completely undetectable by standard validation metrics. Notably, 100\% of successful single-point attacks pass z-score anomaly detection. We introduce the \emph{effective perturbation dimension} $d_{\text{eff}}$, a Jacobian-based diagnostic that, together with sensitivity magnitude, yields a two-factor vulnerability model explaining why architectures with extreme sensitivity concentration (POD-DeepONet, $d_{\text{eff}} \approx 1$) are not necessarily the most exploitable, since low-rank output projections cap maximum error, while moderate concentration with sufficient amplification (S-DeepONet, $d_{\text{eff}} \approx 4$) produces the highest attack success. Gradient-free search outperforms gradient-based alternatives (PGD) on architectures with gradient pathologies, while random perturbations of equal magnitude achieve near-zero success rates, confirming that the discovered vulnerabilities are structural. Our findings expose a previously overlooked attack surface in operator learning models and establish that these models require robustness guarantees beyond standard validation before deployment.
\end{abstract}

\vspace{1em}
\noindent\textbf{Keywords:} adversarial robustness, neural operators, digital twins, nuclear thermal-hydraulics, differential evolution, operator learning

\section{Introduction}\label{sec:intro}

Scientific machine learning has been transformed by the emergence of neural operators, architectures that learn mappings between infinite-dimensional function spaces rather than finite-dimensional input-output pairs~\cite{kovachki2023neural}. Beginning with the universal approximation theorem for operators~\cite{chen1995universal}, this field has rapidly matured through foundational contributions including the Deep Operator Network (DeepONet)~\cite{lu2021deeponet}, which decomposes the operator into branch and trunk networks; the Fourier Neural Operator (FNO)~\cite{li2020fourier}, which achieves mesh-independent learning through spectral convolutions; and more recent extensions such as physics-informed DeepONets~\cite{wang2021learning}, multifidelity operator networks~\cite{lu2022multifidelity}, and geometry-adaptive formulations~\cite{li2023fourier}. These models achieve order-of-magnitude computational speedups (from hours of high-fidelity simulation to millisecond inference) while maintaining relative errors below 1--2\% across diverse partial differential equation (PDE) families~\cite{lu2022comprehensive}. Their success has catalyzed a broader scientific ML toolkit encompassing graph neural operators~\cite{li2020multipole}, transformer-based architectures~\cite{cao2021choose}, wavelet-enhanced operators~\cite{tripura2023wavelet}, and state-space neural operators~\cite{hu2024state}, collectively establishing operator learning as a foundational tool for computational science and engineering.

This computational efficiency has made neural operators the predictive backbone of digital twins for safety-critical infrastructure, where real-time field reconstruction from sparse sensor measurements is essential for operational decision-making. In nuclear energy systems, DeepONet-based surrogate models have been deployed for real-time thermal-hydraulic monitoring~\cite{kobayashi2024deeponet, kobayashi2024improved}, virtual sensor networks that predict field quantities in inaccessible reactor regions~\cite{hossain2025virtual}, rapid safety margin evaluation during transient scenarios~\cite{kropaczek2023digital}, and hybrid data-driven frameworks for reactor power prediction~\cite{daniell2025digitaltwin}. Beyond nuclear applications, operator surrogates now underpin subsurface flow modeling for geological carbon sequestration~\cite{jiang2024fourier} and weather and climate prediction at a global scale~\cite{pathak2022fourcastnet}. The urgency of deployment is accelerating: recent U.S.\ federal mandates prioritize AI-enabled nuclear infrastructure as a national security imperative, with deployment timelines as short as 30 months~\cite{whitehouse2025nuclear}, while the International Atomic Energy Agency has initiated coordinated research projects specifically on AI and digital twin integration for reactor operations, including emerging efforts toward domain-specific foundation models for reactor control~\cite{lee2025agentic}. This deployment trajectory demands rigorous assessment of failure modes that extend beyond standard accuracy validation.

In conventional deep learning, adversarial robustness has been rigorously characterized over the past decade. Since Szegedy et al.~\cite{szegedy2013intriguing} first demonstrated that imperceptible perturbations can cause confident misclassifications, a broad taxonomy of attack methods has emerged: gradient-based approaches such as FGSM~\cite{goodfellow2014explaining} and PGD~\cite{madry2018towards}; optimization-based attacks like C\&W~\cite{carlini2017towards}; and black-box methods including transfer attacks~\cite{papernot2017practical}, score-based estimation~\cite{chen2017zoo}, and evolutionary strategies~\cite{su2019onepixel}. Correspondingly, defense mechanisms have been extensively studied, from adversarial training~\cite{madry2018towards} and certified robustness via randomized smoothing~\cite{cohen2019certified} to input purification~\cite{shi2021online}, defensive distillation~\cite{papernot2016distillation}, and Jacobian regularization~\cite{hoffman2019robust}. A critical insight from this literature is that robustness is at odds with standard accuracy in classification settings, a phenomenon formalized through the accuracy-robustness trade-off~\cite{tsipras2019robustness, zhang2019theoretically}. However, this entire body of work operates within the classification or bounded regression framework, where perturbation budgets are defined in pixel space ($\ell_p$ norms), success is measured by label flipping, and the physical meaning of perturbations is rarely considered. The extension to function-space mappings with physical constraints remains largely unexplored.

The challenge of adversarial robustness in neural operator-based scientific ML differs sharply from established attack methods in several critical respects. First, neural operators produce high-dimensional continuous field outputs ($\mathbb{R}^m$ with $m \sim 10^4$--$10^6$) rather than discrete class labels, meaning adversarial success must be quantified through field-level error metrics rather than misclassification rates. Second, the input space carries physical semantics: boundary conditions, initial conditions, material properties, or sensor measurements, all of which must respect governing equations, conservation laws, thermodynamic bounds, and sensor calibration ranges. Perturbations that violate these constraints are trivially detectable and operationally irrelevant. Third, the consequence of adversarial failure is not a misclassified image but a physically plausible yet grossly wrong field prediction: a reactor digital twin reporting safe thermal margins while actual temperatures approach material failure limits, or a carbon sequestration model predicting stable CO$_2$ plume migration while breakthrough is imminent. These distinctions demand a different threat model: one where the adversary must craft inputs that simultaneously maximize prediction error, satisfy all physical feasibility constraints, and remain indistinguishable from nominal operating variability through standard validation metrics.

Prior work connecting adversarial methods with physics-informed machine learning has focused predominantly on using adversarial training as a \emph{cooperative} strategy for improving PINN accuracy, rather than characterizing \emph{malicious} vulnerability. Shi et al.~\cite{shi2023wbar} proposed white-box adversarial sampling for PINNs to improve convergence in regions of high PDE residual, while competitive physics-informed training~\cite{zeng2023competitive} uses adversarial dynamics between collocation point selection and network training to reduce approximation error. Shekarpaz et al.~\cite{shekarpaz2022piat} introduced physics-informed adversarial training that explicitly frames the residual minimization as a minimax game. The MetaPhysiCa framework~\cite{mouli2024metaphysica} addressed out-of-distribution robustness in physics-informed models through metalearning. Structure-preserving approaches~\cite{chu2024structure} have shown that embedding Lyapunov stability into neural ODE architectures improves robustness to white-box attacks. While these contributions establish that physics constraints can enhance robustness, they address a complementary problem: \emph{improving model training through adversarial dynamics} rather than \emph{evaluating vulnerability to deliberate input manipulation}. More broadly, the adversarial ML literature has concentrated almost exclusively on classification tasks; adversarial attacks on continuous-output regression surrogates, where success is measured by field-level error rather than label flipping and perturbations must satisfy physical constraints, remain largely unexplored. No prior study has systematically characterized how trained neural operator surrogates, i.e.\ models deployed for inference rather than being actively trained, respond to adversarial perturbations designed to induce operational failures. A recent analysis of fundamental limitations in physics-informed neural networks~\cite{naser2025flaws} demonstrates that such architectures can harbor systematic risks that produce false confidence through predictions appearing physically plausible while masking critical omissions, further motivating explicit adversarial evaluation.

The cyber-physical security context provides concrete motivation for this adversarial analysis. Industrial control systems (ICS) and SCADA networks governing critical infrastructure face a persistent and escalating threat landscape~\cite{rahman2026securing}, as illustrated by the Stuxnet precedent demonstrating that cyber weapons targeting programmable logic controllers can cause physical destruction to nuclear infrastructure. More broadly, intrusions against SCADA and human-machine interface systems across energy, water, and transportation sectors increasingly target the data pipeline between physical sensors and digital monitoring systems. In this threat environment, AI-based surrogate models introduce a novel attack surface: rather than manipulating actuators or control signals directly, an adversary can target the sensor-to-prediction pathway by corrupting the inputs to a neural operator digital twin. Our work does not demonstrate a full-stack cyber-physical attack; rather, we characterize the vulnerability of the ML inference layer under the assumption that an adversary has achieved the ability to corrupt sensor readings before they reach the surrogate model, a capability consistent with known SCADA intrusion vectors. If the corrupted inputs remain within physical bounds and the resulting predictions appear reasonable by aggregate metrics, such attacks would be effectively invisible to existing anomaly detection systems. This threat model is especially acute for neural operator surrogates because their learned representations compress high-dimensional physics into low-dimensional latent spaces that may be highly sensitive to particular input directions, a property we formalize and quantify in this work.

Here, we show that neural operator surrogates trained on high-fidelity simulation data are acutely vulnerable to physics-compliant adversarial perturbations of extreme sparsity: modifications affecting fewer than 1\% of inputs that degrade validated models from production-ready accuracy ($\sim$1.5\% relative $L_2$ error) to 37--63\% error depending on architecture. In applications where surrogate predictions directly inform safety margins, thermal limits, or control actions, errors of this magnitude, representing an order-of-magnitude collapse from validated performance, can render predictions operationally unreliable. This degradation occurs despite perturbations satisfying all physical constraints: modified sensor readings fall within calibration ranges, boundary conditions respect conservation laws, and input distributions remain consistent with normal operating variability (Section~\ref{sec:feasibility}). The core failure mechanism is a \emph{sensitivity mismatch} in learned operator mappings that, to the best of our knowledge, has not been characterized in the neural operator context: single-coordinate perturbations can trigger spatially nonlocal field errors through basis function coupling (in spectral architectures) or latent space propagation (in decomposition-based models), producing hotspot relocations, flow pattern inversions, and safety margin violations that aggregate error metrics fail to capture. We formalize this vulnerability through the \emph{effective perturbation dimension} $d_{\text{eff}}$, a Jacobian-based metric quantifying the concentration of input-output sensitivity, and demonstrate that attack success depends on both sensitivity concentration and magnitude, yielding a two-factor vulnerability model with distinct architectural risk profiles.

We demonstrate this vulnerability across four operator families (MIMONet, NOMAD, S-DeepONet, POD-DeepONet) using a gradient-free differential evolution framework adapted from one-pixel attacks~\cite{su2019onepixel} but redesigned for continuous regression tasks with physics constraints (Fig.~\ref{fig:DE_encoding}; Fig.~\ref{fig:supp_de_pipeline}; Appendix~\ref{app:note_onepixel}). Our framework requires only black-box model access and knowledge of sensor types and operating ranges (assumptions satisfied by any operator with access to the digital twin's input interface) and operates within realistic computational budgets ($\sim$2 minutes per attack on GPU). The gradient-free approach is essential because neural operators are frequently deployed as proprietary black boxes in industrial settings where internal model parameters and gradients are inaccessible, and because the high-dimensional output space ($\mathbb{R}^{15{,}908}$) makes gradient computation expensive even when model internals are available (Appendix~\ref{app:note_gradfree}).

Our findings carry implications beyond the specific architectures and application studied here. The sensitivity mismatch we identify, whereby models optimized for reconstruction accuracy develop concentrated sensitivity structures that adversaries exploit, is a natural consequence of learning operator compressions from finite training data and is likely to manifest across neural operators deployed in cyber-physical systems. While we demonstrate this on a thermal-hydraulic benchmark, the mechanism is architecture-level rather than application-specific, suggesting that similar vulnerabilities may arise wherever neural operators are used for safety-critical inference. As operator learning models are integrated into increasingly consequential decision pipelines for nuclear monitoring, grid management, and climate prediction, the gap between interpolation accuracy and adversarial robustness that we expose represents a pressing challenge for the trustworthy deployment of scientific machine learning.

\section{Results}\label{sec:results}

\subsection{Cyber-Physical Attack Framework}

Figure~\ref{fig:system_overview} illustrates the complete attack framework for neural operator digital twins. The physical system consists of a heat exchanger with two input branches: Branch 1 encodes global operating parameters (inlet velocity $v_{\text{in}}$, inlet temperature $T_{\text{in}}$), while Branch 2 encodes the spatially varying wall heat flux profile $q''(z)$ along 100 axial locations. Four neural operator architectures learn the mapping from these 102-dimensional boundary conditions to full-field pressure and velocity predictions across 3,977 mesh points.

The attack surface (top panel) demonstrates our differential evolution-based vulnerability discovery process. Given black-box model access and knowledge of sensor types and operating ranges, the attacker uses DE to probe the input space and generate vulnerability heatmaps identifying critical features; indices 28 and 68 of the 100-point $q''(z)$ profile emerge as high-sensitivity locations. All perturbations respect physical bounds: $v_{\text{in}} \in [4.25, 4.77]$ m\,s$^{-1}$, $T_{\text{in}} \in [275.4, 309.4]$ K, which are proper subsets of the design operating envelope (Section~\ref{sec:feasibility}), ensuring attacks remain undetectable by constraint-based anomaly detection.

The impact panel (bottom right) quantifies the consequences: a successful $L_0=3$ attack (modifying just 3 of 102 total inputs) induces 34.2\% relative $L_2$ error in field predictions, causes spatial hotspot displacement, and triggers critical safety metric violations while aggregate validation metrics remain within acceptable tolerances. (This paradox, where models are simultaneously accurate in mean-field error yet severely wrong in safety-critical regions, exposes the fundamental inadequacy of standard validation protocols for adversarial robustness assessment.)

\begin{figure*}[t]
  \centering
  \includegraphics[width=\linewidth]{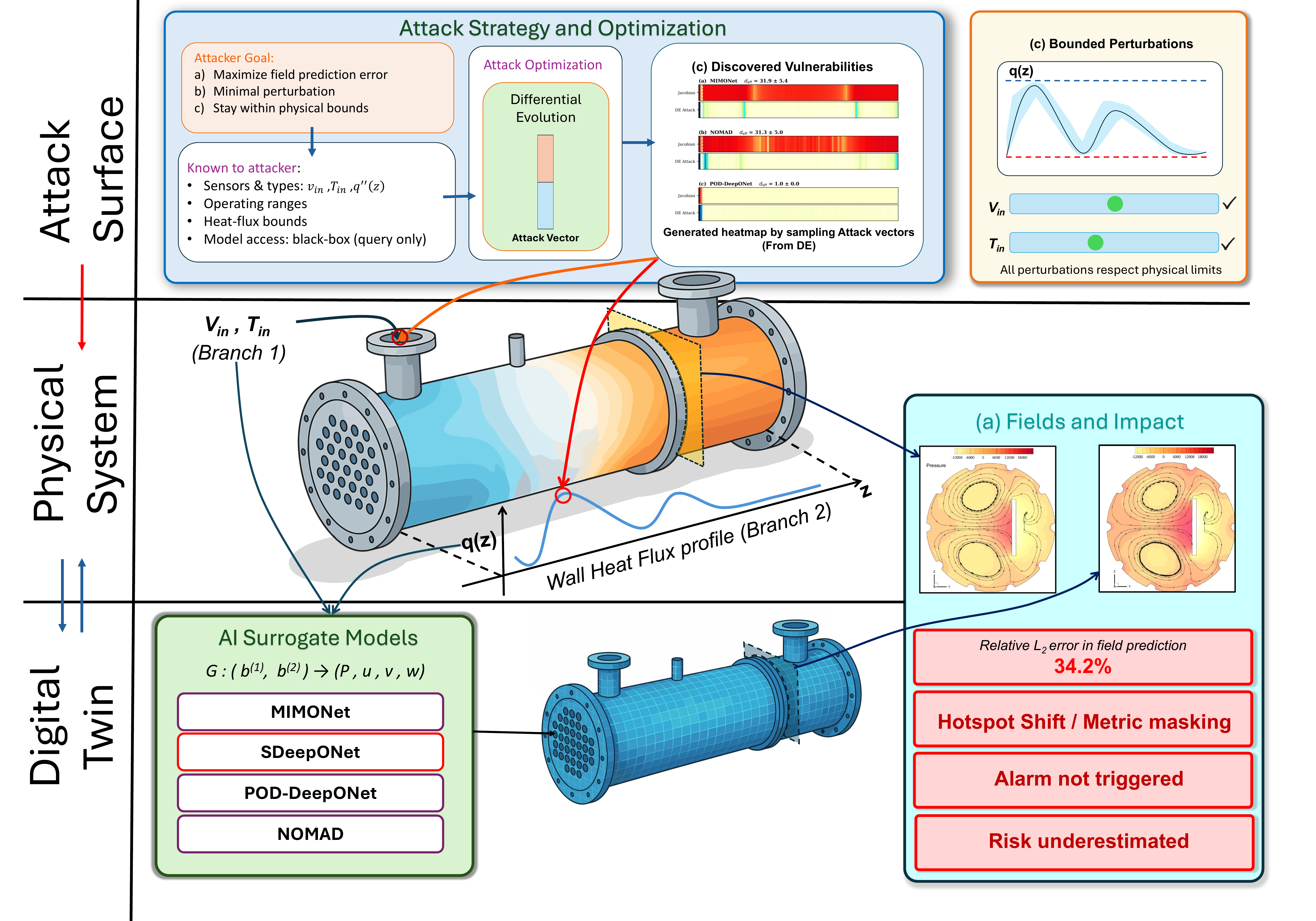}%
\caption{%
\textbf{Differential-evolution-driven vulnerability mapping in a cyber-physical twin.}
\emph{Attack Surface} (top): Using sensor knowledge \((v_{\mathrm{in}},\,T_{\mathrm{in}},\,q''(z))\) and bounds, Differential Evolution probes the input space and yields a heatmap with vulnerability hotspots at indices 28 and 68 of the 100-point \(q''(z)\) profile (Branch 2). All trials respect \(\,v_{\mathrm{in}}\in[4.0,5.0]~\mathrm{m\,s^{-1}},\;T_{\mathrm{in}}\in[263,323]~\mathrm{K}\,\) (panel~c), enabling stealth. \emph{Physical System} (middle): Branch 1 encodes \([v_{\mathrm{in}},T_{\mathrm{in}}]\); Branch 2 encodes axial wall heat flux \(q''(z)\). Single-coordinate injections (\(L_0=1\)) enter via sensor/SCADA/preprocessing tampering. 
\emph{Digital Twin} (bottom): Surrogates (MIMONet, S-DeepONet, POD-DeepONet, NOMAD) map \(G:(b^{(1)},b^{(2)})\!\to\!(P,u,v,w)\); S-DeepONet is most vulnerable. 
\emph{Fields \& impact} (right): Nominal vs attacked temperature shows hotspot displacement and \(34.2\%\) relative \(L_2\) error, achieved with black-box access and physics-respecting inputs, revealing dangerous but bounds-compliant failures in learned operator models.}
  \label{fig:system_overview}
\end{figure*}

\subsection{Benchmark Problem and Baseline Performance}

The benchmark problem represents a single-pass rectangular-channel heat exchanger operating under mixed hydrodynamic and thermal boundary conditions (Fig.~\ref{fig:system_overview}), representative of thermal management designs studied for advanced reactor systems~\cite{ahmed2024enhancing}. Branch 1 encodes inlet velocity $v_{\text{in}} \in [4.0, 5.0]$ m\,s$^{-1}$ and temperature $T_{\text{in}} \in [263, 323]$ K, while Branch 2 encodes the axial wall heat flux distribution $q''(z) = q_{\max} \sin(\pi z/H)$ discretized at 100 points. Each operator maps these 102-dimensional boundary conditions to full-field pressure and velocity predictions across 3,977 mesh points (complete problem specification in Appendix~\ref{app:dataset}).

We compare four neural operator architectures representing distinct design philosophies: \textbf{MIMONet} (1.2M parameters), a direct concatenation-based mapping without explicit operator structure; \textbf{NOMAD} (2.1M parameters), which uses dual MLP encoders with multiplicative branch-trunk fusion; \textbf{S-DeepONet} (1.8M parameters), which employs a recurrent GRU encoder for sequential boundary dependencies with spectral summation composition; and \textbf{POD-DeepONet} (1.6M parameters), which leverages proper orthogonal decomposition to predict coefficients in a low-rank basis rather than full fields (complete architectural specifications in Appendix~\ref{app:architectures}). All four architectures achieve comparable baseline accuracy on clean test data: validation MSE ranges from 0.012 to 0.014, with test set relative $L_2$ errors between 1.3\% and 1.5\% (Table~\ref{tab:supp_baseline}). This near-identical nominal performance establishes that the differential adversarial vulnerability observed in subsequent sections cannot be attributed to accuracy differences; all models meet production-grade standards on traditional validation protocols. The dissociation between interpolation accuracy and adversarial robustness suggests that standard validation metrics based on held-out test error provide insufficient guarantees for deployment in adversarial environments.

\subsection{Differential Vulnerability Across Model Families}
\label{sec:attack_success}

Table~\ref{tab:attack_success} summarizes attack success rates for four neural operator architectures under varying $\ell_0$-norm sparsity budgets ($L_0 = k \in \{1, 3, 5, 10\}$, where $k$ denotes the maximum number of input coordinates perturbed) and error thresholds (10\%, 20\%, 30\%, 40\% relative $L_2$ error). The differential vulnerability observed across model families arises from distinct input-to-field sensitivity structures inherent to each architecture's design.

\begin{table}[t]
\caption{\textbf{Attack success rates across neural operator families.} Percentage of tested samples for which differential evolution found a feasible adversarial example within the specified sparsity budget and error threshold. All perturbations respect physical bounds. The test set comprises 310 samples.}
\label{tab:attack_success}
\centering
\small
\begin{tabularx}{\textwidth}{l X c c c c}
\toprule
\textbf{Error} & \textbf{Model} & \multicolumn{4}{c}{\textbf{$L_0$ (number of inputs perturbed)}} \\
\cmidrule(lr){3-6}
\textbf{Threshold} & & \textbf{1} & \textbf{3} & \textbf{5} & \textbf{10} \\
\midrule
\multirow{4}{*}{\textbf{10\%}} 
  & MIMONet        & 100   & 100   & 100    & 100 \\
  & NOMAD          & 87.0  & 100   & 100    & 100 \\
  & S-DeepONet     & 100   & 100   & 100    & 100 \\
  & POD-DeepONet   & 100   & 100   & 100    & 100 \\
\midrule
\multirow{4}{*}{\textbf{20\%}} 
  & MIMONet        & 55.8  & 67.4  & 73.8   & 91.7 \\
  & NOMAD          & 54.2  & 80.9  & 91.0   & 100 \\
  & S-DeepONet     & 51.9  & 100   & 100    & 100 \\
  & POD-DeepONet   & 71.3  & 73.6  & 71.9   & 71.9 \\
\midrule
\multirow{4}{*}{\textbf{30\%}} 
  & MIMONet        & 9.4   & 13.1  & 22.6   & 36.1 \\
  & NOMAD          & 8.6   & 22.6  & 47.7   & 62.3 \\
  & S-DeepONet     & 17.1  & 94.5  & 100    & 100 \\
  & POD-DeepONet   & 24.5  & 25.2  & 28.4   & 29.6 \\
\midrule
\multirow{4}{*}{\textbf{40\%}} 
  & MIMONet        & 0     & 0     & 0      & 0 \\
  & NOMAD          & 0     & 0     & 2.2    & 17.1 \\
  & S-DeepONet     & 0     & 77.7  & 93.5   & 99.0 \\
  & POD-DeepONet   & 0     & 0     & 0      & 0 \\
\bottomrule
\end{tabularx}
\end{table}

\subsection{Field-Level Impact of Adversarial Perturbations}

Figure~\ref{fig:field_comparison} demonstrates the catastrophic field-level consequences of sparse adversarial attacks. Each row presents clean prediction (left), attacked prediction (middle), and absolute error map (right) for S-DeepONet under $L_0=3$ attack and NOMAD under $L_0=1$ attack.

For S-DeepONet, all three velocity components exhibit severe spatial distortions despite the model maintaining globally reasonable field magnitudes. The velocity-Y error map reveals hotspot relocations with peak errors exceeding 100\% of the nominal range in localized regions, while velocity-X shows complete flow pattern inversion. The pressure field, which appears visually similar between clean and attacked predictions, exhibits highly localized error spikes concentrated near geometric features, precisely where critical safety margins are evaluated in nuclear applications.

NOMAD exhibits different failure modes under single-point attack. The perturbation induces spatially localized but high-magnitude errors concentrated in boundary layer regions. The pressure error distribution reveals a distinct sensitivity structure: errors manifest as radially symmetric patterns emanating from the perturbation location, consistent with NOMAD's multiplicative fusion that preserves spatial locality.

These field comparisons expose the inadequacy of aggregate validation metrics for adversarial robustness assessment. Both models achieve less than 2\% relative $L_2$ error on clean test data, yet produce field predictions with over 30\% local errors and complete flow topology inversions under sparse attacks. The error maps demonstrate that safety-critical quantities (hotspot locations, peak temperatures, minimum safety margins) can be severely misrepresented even when global error metrics remain within acceptable tolerances.

\begin{figure}[p]
    \centering
    \includegraphics[width=\textwidth]{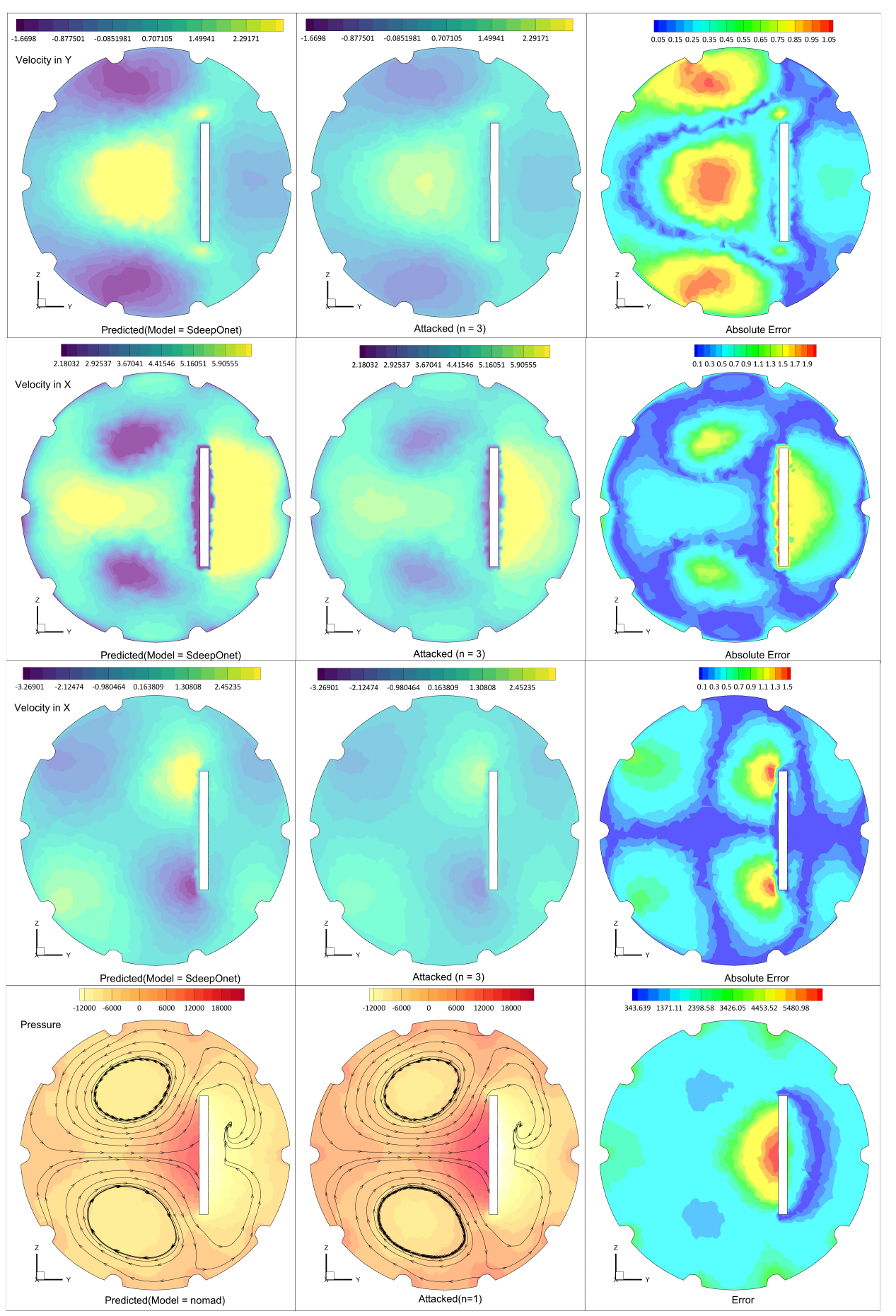}
    \caption{\textbf{Field-level impact of sparse adversarial attacks.} 
    Each row shows clean prediction (left), attacked prediction (middle), and absolute error (right). 
    \textbf{Rows 1--3}: S-DeepONet velocity components under $L_0=3$ attack. 
    \textbf{Row 4}: NOMAD pressure field under $L_0=1$ attack. 
    Error maps reveal localized failures with normalized errors approaching 1.0 in safety-critical regions, despite models achieving $<2\%$ error on clean data. 
    White rectangles indicate geometric obstacles.}
    \label{fig:field_comparison}
\end{figure}

\subsection{Explaining Differential Vulnerability: The Effective Perturbation Dimension}

\subsubsection{Quantifying Sensitivity via the Input-Output Jacobian}

The vulnerability patterns in Table~\ref{tab:attack_success} can be mechanistically explained through the structure of each model's input-output Jacobian. Consider a neural operator $f: \mathbb{R}^d \rightarrow \mathbb{R}^m$ mapping $d$-dimensional branch inputs $\mathbf{b} \in \mathbb{R}^d$ to $m$-dimensional outputs (field values at all spatial query points). The Jacobian matrix $J_b \in \mathbb{R}^{m \times d}$ is defined as:
\begin{equation}
J_b = \frac{\partial f}{\partial \mathbf{b}}, \quad \text{where} \quad [J_b]_{ji} = \frac{\partial f_j}{\partial b_i}.
\end{equation}
Each column $J_b[:,i] \in \mathbb{R}^m$ represents the sensitivity of the \emph{entire output field} to perturbations in the $i$-th input coordinate. The column norm $s_i = \|J_b[:,i]\|_2$ thus quantifies the aggregate influence of input coordinate $i$ on the predicted solution field.

It is the \emph{distribution} of these sensitivities $\{s_i\}_{i=1}^d$ across input coordinates, rather than their aggregate magnitude, that determines susceptibility to sparse attacks. A model where sensitivity concentrates in few coordinates can be severely disrupted by perturbing only those coordinates, while a model with uniformly distributed sensitivity requires coordinated perturbations across many inputs.

\subsubsection{Effective Perturbation Dimension}

To quantify sensitivity concentration, we introduce the \textbf{effective perturbation dimension} $d_{\text{eff}}$:
\begin{equation}
\label{eq:d_eff}
d_{\text{eff}} = \frac{\left(\sum_{i=1}^{d} \|J_b[:,i]\|_2\right)^2}{\sum_{i=1}^{d} \|J_b[:,i]\|_2^2} = \frac{\left(\sum_{i=1}^{d} s_i\right)^2}{\sum_{i=1}^{d} s_i^2}.
\end{equation}
This metric is the inverse of the Herfindahl-Hirschman concentration index applied to the sensitivity distribution, and admits a natural interpretation as the ``effective number of sensitive input directions.'' The metric satisfies $1 \leq d_{\text{eff}} \leq d$, with boundary cases:
\begin{itemize}
    \item $d_{\text{eff}} = 1$: All sensitivity concentrated in a single input coordinate (maximum vulnerability to single-point attacks).
    \item $d_{\text{eff}} = d$: Sensitivity uniformly distributed across all coordinates (maximum robustness to sparse attacks).
\end{itemize}

\paragraph{Illustrative Examples.} Consider a model with $d=100$ input coordinates:
\begin{enumerate}
    \item \textit{Single dominant direction}: If $s_1 = 1$ and $s_i = 0$ for $i > 1$, then $d_{\text{eff}} = 1^2/1^2 = 1$.
    \item \textit{Two equally sensitive directions}: If $s_1 = s_2 = 1$ and $s_i = 0$ for $i > 2$, then $d_{\text{eff}} = (1+1)^2/(1^2+1^2) = 2$.
    \item \textit{Uniform sensitivity}: If $s_i = 1$ for all $i$, then $d_{\text{eff}} = (100)^2/(100 \cdot 1^2) = 100$.
\end{enumerate}
Thus, $d_{\text{eff}}$ directly indicates how many input coordinates an adversary must perturb to achieve substantial output degradation. We prove formally (Appendix~\ref{app:theory}, Theorem~\ref{thm:sparse_adv_a}) that the effectiveness of a single-point attack scales as $\rho(1) \geq 1/\sqrt{d_{\text{eff}}}$, where $\rho(k)$ is the fraction of dense attack damage achievable by a $k$-sparse attack, establishing a rigorous quantitative link between $d_{\text{eff}}$ and adversarial vulnerability.

\subsubsection{Computing $d_{\text{eff}}$ for Neural Operators}

For each model architecture, we compute $d_{\text{eff}}$ by evaluating the Jacobian via automatic differentiation. Given a trained model $f$, branch inputs $\mathbf{b} = [\mathbf{b}_1; \mathbf{b}_2] \in \mathbb{R}^{102}$ (concatenating the 2-dimensional and 100-dimensional branch inputs), and trunk coordinates $\mathbf{x} \in \mathbb{R}^{N \times 2}$, we compute:
\begin{equation}
J_b = \frac{\partial}{\partial \mathbf{b}} \left[ f(\mathbf{b}, \mathbf{x}) \right] \in \mathbb{R}^{(N \cdot C) \times 102},
\end{equation}
where $N$ is the number of spatial query points and $C$ is the number of output channels. The effective dimension is then computed from the column norms of $J_b$ using Eq.~\eqref{eq:d_eff}. We report the mean $d_{\text{eff}}$ averaged over all test samples to characterize each architecture's intrinsic sensitivity structure.

Table~\ref{tab:d_eff} summarizes the measured effective perturbation dimensions:

\begin{table}[h]
\centering
\caption{\textbf{Effective perturbation dimension across architectures.} Lower $d_{\text{eff}}$ indicates concentrated sensitivity. Mean $\pm$ std computed over 50 test samples using randomized Jacobian projections ($n_{\text{proj}}=30$).}
\label{tab:d_eff}
\begin{tabular}{lccc}
\toprule
\textbf{Model} & $\mathbf{d_{\text{eff}}}$ & \textbf{Mean col.\ norm} & \textbf{Interpretation} \\
\midrule
POD-DeepONet   & $1.02 \pm 0.01$  & $3 \times 10^{-4}$ & Extreme concentration, low magnitude \\
S-DeepONet     & $4.35 \pm 0.50$  & $8 \times 10^{-4}$ & Concentrated, moderate magnitude \\
MIMONet        & $31.93 \pm 5.39$ & $2.3 \times 10^{-3}$ & Distributed, high magnitude \\
NOMAD          & $31.32 \pm 4.98$ & $2.2 \times 10^{-3}$ & Distributed, high magnitude \\
\bottomrule
\end{tabular}
\end{table}

\begin{figure}[htbp]
\centering
\includegraphics[width=\textwidth, height=0.85\textheight, keepaspectratio]{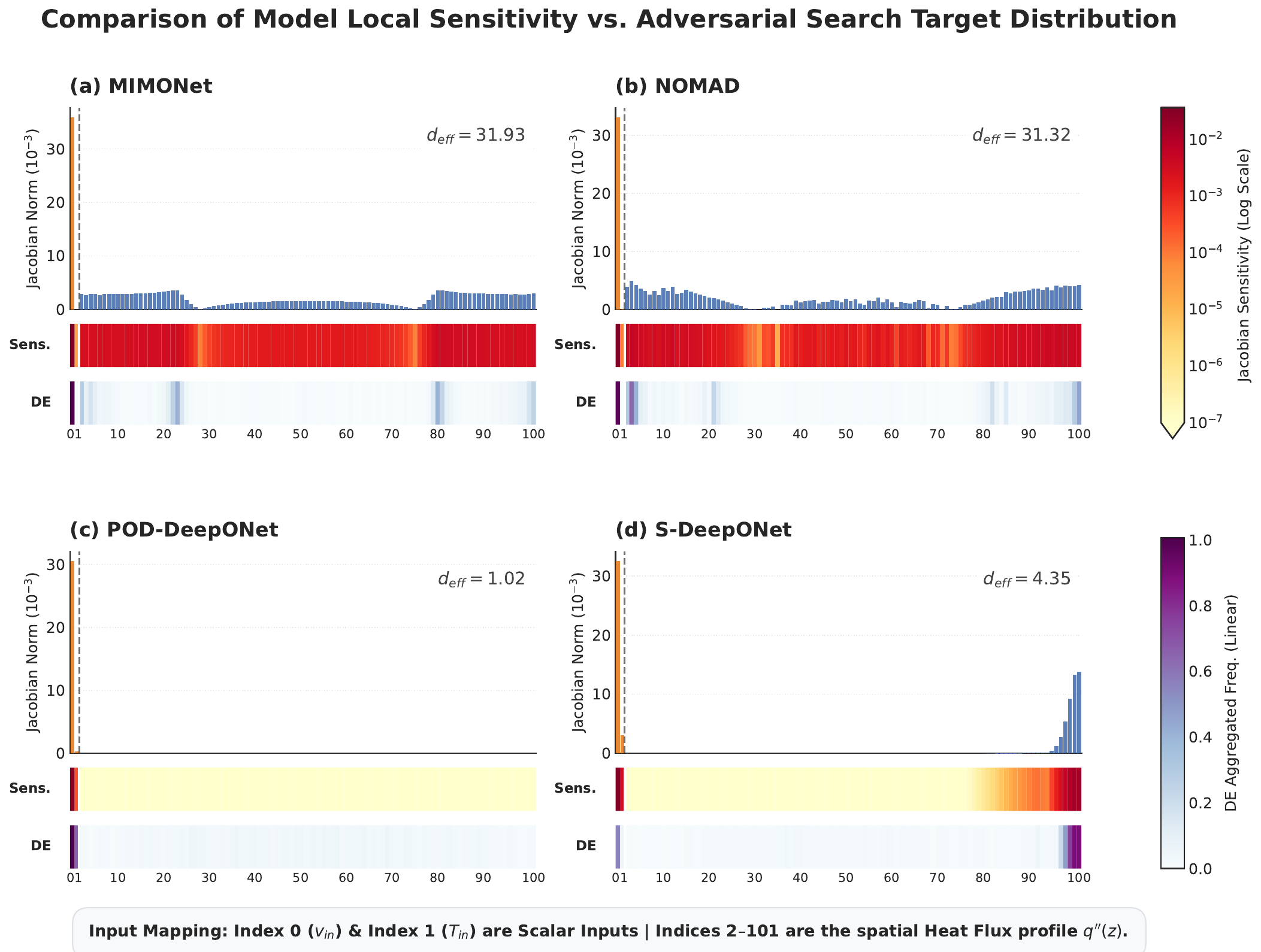}
\caption{\textbf{Jacobian sensitivity profiles and DE attack targeting across architectures.}
\emph{Top panels (bars):} Mean Jacobian column norms $s_i = \|J_b[:,i]\|_2$ ($\times 10^{-3}$) for each of the 102 input coordinates, averaged over 50 test samples with 30 randomized projections each. Dashed red line marks the Branch~1/Branch~2 boundary (index 2). \emph{Middle strips (Sensitivity):} Same data as a log-scale heatmap for cross-model comparison. \emph{Bottom strips (DE Target):} Normalized DE attack targeting frequency, aggregated across $L_0 \in \{1,3,5,10\}$ weighted by successful attacks. S-DeepONet's targeting concentrates at indices 0 and 99--101, matching its Jacobian peaks ($r = 0.99$). POD-DeepONet's apparently diffuse Branch~2 targeting reflects random budget-filling at higher $L_0$ rather than genuine vulnerability: Branch~2 Jacobian norms are ${\sim}10^{-12}$, and targeting is uncorrelated with sensitivity ($r = -0.47$); see Section~\ref{sec:arch_vulnerability} and Fig.~\ref{fig:supp_heatmaps} for details.}
\label{fig:jacobian_profiles}
\end{figure}

\subsubsection{Architecture-Specific Vulnerability Mechanisms}
\label{sec:arch_vulnerability}

The measured $d_{\text{eff}}$ values reveal a more complex relationship between sensitivity concentration and attack success than a simple monotonic prediction would suggest. Critically, the vulnerability of a model depends on the \emph{product} of two factors: the concentration of sensitivity (captured by $d_{\text{eff}}$) and the magnitude of sensitivity (captured by mean Jacobian column norms). We observe three distinct vulnerability regimes (Fig.~\ref{fig:jacobian_profiles}).

\paragraph{POD-DeepONet: Extreme Concentration, Low Magnitude ($d_{\text{eff}} \approx 1.0$).}
POD-DeepONet exhibits the most extreme sensitivity concentration of any architecture, with $d_{\text{eff}} \approx 1.02 \pm 0.01$, meaning that virtually all output sensitivity is confined to a single input direction. Yet this extreme concentration does not translate to the highest attack success rates (Table~\ref{tab:attack_success}): at $L_0=3$ and 30\% threshold, POD-DeepONet reaches only 25.2\%, compared to S-DeepONet's 94.5\%.

This apparent paradox is resolved by examining the sensitivity \emph{magnitude}: POD-DeepONet's mean Jacobian column norm ($3 \times 10^{-4}$) is an order of magnitude smaller than other architectures. The POD basis projection acts as a double-edged constraint: it concentrates all sensitivity into the dominant mode direction (explaining $d_{\text{eff}} \approx 1$) while simultaneously \emph{limiting the maximum achievable error} by restricting outputs to a low-rank subspace of rank $r=32$. Perturbations along the single sensitive direction can shift POD coefficients, but the reconstruction $f(\mathbf{b}, \mathbf{x}) = \mathbf{U} \cdot g(\mathbf{b})$ projects the result back onto smooth basis functions, preventing the large-scale field distortions observed in less constrained architectures. This yields a characteristic plateau in POD-DeepONet attack success across $L_0$: increasing the perturbation budget from 1 to 10 produces only marginal improvement (24.5\% to 29.6\% at 30\% threshold), as additional perturbations target directions with negligible sensitivity. The attack targeting data (Fig.~\ref{fig:jacobian_profiles}, bottom strips) corroborate this mechanistically: POD-DeepONet's Branch~2 Jacobian norms are effectively zero ($\sim 10^{-12}$, twelve orders of magnitude below Branch~1), so DE's Branch~2 selections at higher $L_0$ are uncorrelated with the Jacobian landscape ($r = -0.47$ between Branch~2 column norms and DE targeting frequency). The near-maximum entropy of POD-DeepONet's Branch~2 frequency distribution (96--99\% of the uniform-distribution entropy at $L_0 = 3$--$10$) confirms that these selections are effectively random filler rather than targeted exploitation.

\paragraph{S-DeepONet: Moderate Concentration, High Amplification ($d_{\text{eff}} \approx 4.4$).}
S-DeepONet is the most practically vulnerable architecture, achieving 94.5\% success at $L_0=3$ and 100\% at $L_0=5$ for the 30\% threshold, with mean errors reaching 50\% at $k=10$ (Table~\ref{tab:attack_success}). Its $d_{\text{eff}} \approx 4.35 \pm 0.50$ indicates that output sensitivity concentrates in approximately four to five input directions, while its moderate column norm magnitude ($8 \times 10^{-4}$) enables sufficient error amplification.

Architecturally, S-DeepONet's vulnerability arises from its GRU-based sequential branch encoder, which processes the 100-point wall heat flux profile as a temporal sequence. The GRU hidden state accumulates spatial dependencies such that perturbations at specific positions propagate through the recurrent dynamics, affecting the final hidden state $\mathbf{h}_{100}$ that determines all output coefficients simultaneously. Unlike POD-DeepONet, S-DeepONet's trunk-branch composition ($\sum_k \beta_k(\mathbf{b}) \cdot \tau_k(\mathbf{x})$) operates without a regularizing projection, allowing perturbation-induced coefficient changes to produce unrestricted field distortions including the hotspot relocations and flow pattern inversions documented in Figure~\ref{fig:field_comparison}. The targeting structure confirms this: DE's Branch~2 selections for S-DeepONet are highly concentrated at indices 99--101 (the sequential endpoints) with a near-perfect correlation between Jacobian column norms and attack frequency ($r = 0.99$), in stark contrast to POD-DeepONet's random Branch~2 targeting ($r = -0.47$). This correlation confirms that DE independently discovers and exploits the same sensitivity structure that the Jacobian analysis identifies.

\paragraph{MIMONet and NOMAD: Distributed Sensitivity ($d_{\text{eff}} \approx 32$).}
MIMONet ($d_{\text{eff}} \approx 31.9 \pm 5.4$) and NOMAD ($d_{\text{eff}} \approx 31.3 \pm 5.0$) exhibit the most distributed sensitivity profiles, with influence spread across approximately one-third of all 102 input coordinates. Both architectures have the highest mean column norms ($\sim 2.2$--$2.3 \times 10^{-3}$), indicating individually strong input-output coupling, but the high dimensionality of their sensitive subspace makes sparse attacks substantially harder: at $L_0=1$, a single perturbation intercepts only $\sim$1/32 of the sensitive subspace, explaining the moderate 9.4\% (MIMONet) and 8.6\% (NOMAD) success rates at the 30\% threshold. Unlike POD-DeepONet's random Branch~2 scattering, DE's targeting for MIMONet and NOMAD shows moderate positive correlation with the Jacobian landscape ($r = 0.50$ and $r = 0.57$, respectively), indicating that even in these high-$d_{\text{eff}}$ architectures, DE preferentially targets the more sensitive coordinates rather than selecting randomly.

NOMAD's vulnerability increases more steeply with $L_0$ than MIMONet's (62.3\% vs.\ 36.1\% at $L_0=10$, 30\% threshold), consistent with NOMAD's multiplicative branch-trunk fusion amplifying coordinated multi-point perturbations more effectively than MIMONet's concatenation-based architecture. This differential scaling behavior demonstrates that even architectures with similar $d_{\text{eff}}$ can exhibit substantially different vulnerability profiles at higher sparsity budgets.

\subsection{Two-Factor Vulnerability Model: Concentration $\times$ Magnitude}
\label{sec:two_factor}

The effective perturbation dimension provides a necessary but not sufficient predictor for sparse attack vulnerability. Our measurements reveal that attack success depends on the interplay of two factors: (1)~sensitivity concentration ($d_{\text{eff}}$), which determines how easily sparse perturbations intersect the sensitive subspace, and (2)~sensitivity magnitude (mean Jacobian column norm), which determines the amplification from input perturbation to output error. This two-factor model explains the empirical observations:
\begin{itemize}
    \item POD-DeepONet ($d_{\text{eff}} \approx 1$, low magnitude): Extreme concentration but insufficient amplification yields moderate vulnerability that plateaus with increasing $L_0$.
    \item S-DeepONet ($d_{\text{eff}} \approx 4$, moderate magnitude): Sufficient concentration \emph{and} amplification yields the highest vulnerability, especially at $L_0 \geq 3$.
    \item MIMONet/NOMAD ($d_{\text{eff}} \approx 32$, high magnitude): High amplification but distributed sensitivity requires coordinated multi-point attacks; success increases gradually with $L_0$.
\end{itemize}

This empirical two-factor model is formalized in Appendix~\ref{app:theory}: we prove that the optimal $k$-sparse attack error decomposes exactly as $\mathcal{E}^* = M \cdot \rho(k)$, where $M = \epsilon \|\mathbf{s}\|_2 / \|f(\mathbf{b})\|_2$ is the magnitude factor and $\rho(k) \geq 1/\sqrt{d_{\text{eff}}}$ is the concentration factor (Theorems~\ref{thm:two_factor_main} and~\ref{thm:sparse_adv_a}). This framework extends $d_{\text{eff}}$ from a standalone diagnostic to a theoretically grounded component of a composite vulnerability metric. For pre-deployment risk assessment, both $d_{\text{eff}}$ and sensitivity magnitude should be evaluated: models combining low $d_{\text{eff}}$ with high column norms represent the highest-risk configurations.

This analysis also reveals a fundamental tension in neural operator architecture: designs optimized purely for reconstruction accuracy can inadvertently develop concentrated sensitivity structures that adversaries exploit. POD-DeepONet's projection onto data-driven principal modes provides a form of implicit defense: while it concentrates sensitivity ($d_{\text{eff}} \approx 1$), the low-rank reconstruction limits the maximum achievable error, creating a natural ``ceiling'' on adversarial degradation. S-DeepONet's sequential encoding, while powerful for capturing boundary condition dependencies, creates amplified vulnerability through recurrent hidden state coupling. The concatenation-based architectures (MIMONet, NOMAD) naturally distribute sensitivity through their multi-pathway design but remain vulnerable to coordinated multi-point attacks.

These findings suggest several design principles for adversarially robust neural operators:
\begin{enumerate}
    \item \textbf{Sensitivity regularization}: Explicitly penalizing Jacobian column norm concentration during training (e.g., maximizing $d_{\text{eff}}$ as an auxiliary objective) while monitoring absolute sensitivity magnitude.
    \item \textbf{Output-space constraints}: Employing low-rank projections or spectral truncation (as in POD-DeepONet) to limit the maximum achievable perturbation amplification, even at the cost of some representational flexibility.
    \item \textbf{Deployment diagnostics}: Evaluating both $d_{\text{eff}}$ and mean column norms as a composite pre-deployment vulnerability assessment; models combining $d_{\text{eff}} < 5$ with high sensitivity magnitude should be considered highest-risk for safety-critical applications.
\end{enumerate}

The effective perturbation dimension thus provides both a diagnostic tool for evaluating existing models and a design criterion for developing inherently robust neural operator architectures.

\subsection{Corroborating Evidence and Robustness Checks}

\subsubsection{Per-Channel Error Analysis}

Adversarial perturbations produce highly non-uniform degradation across output channels. Under successful attacks at the 30\% threshold, the pressure field ($P$) consistently exhibits the largest relative $L_2$ errors across all architectures (53--73\%), while velocity components show lower but still substantial degradation: axial velocity $w$ (35--55\%), and transverse velocities $u$, $v$ (15--49\%). This pattern is consistent across sparsity budgets and reflects the differing spectral characteristics of each field: pressure is globally coupled through the Poisson equation and thus more sensitive to boundary condition perturbations, while velocity fields are partially protected by local momentum balance constraints. S-DeepONet exhibits the most uniform cross-channel degradation (Fig.~\ref{fig:supp_error_distributions}) at higher sparsity ($k \geq 5$), with all channels exceeding 46\% error, indicating that its vulnerability mechanism (GRU hidden state corruption) disrupts the entire latent representation rather than targeting specific output modes (complete per-channel breakdown in Table~\ref{tab:supp_perchannel_attack}).

\begin{figure}[t]
\centering
\includegraphics[width=\textwidth]{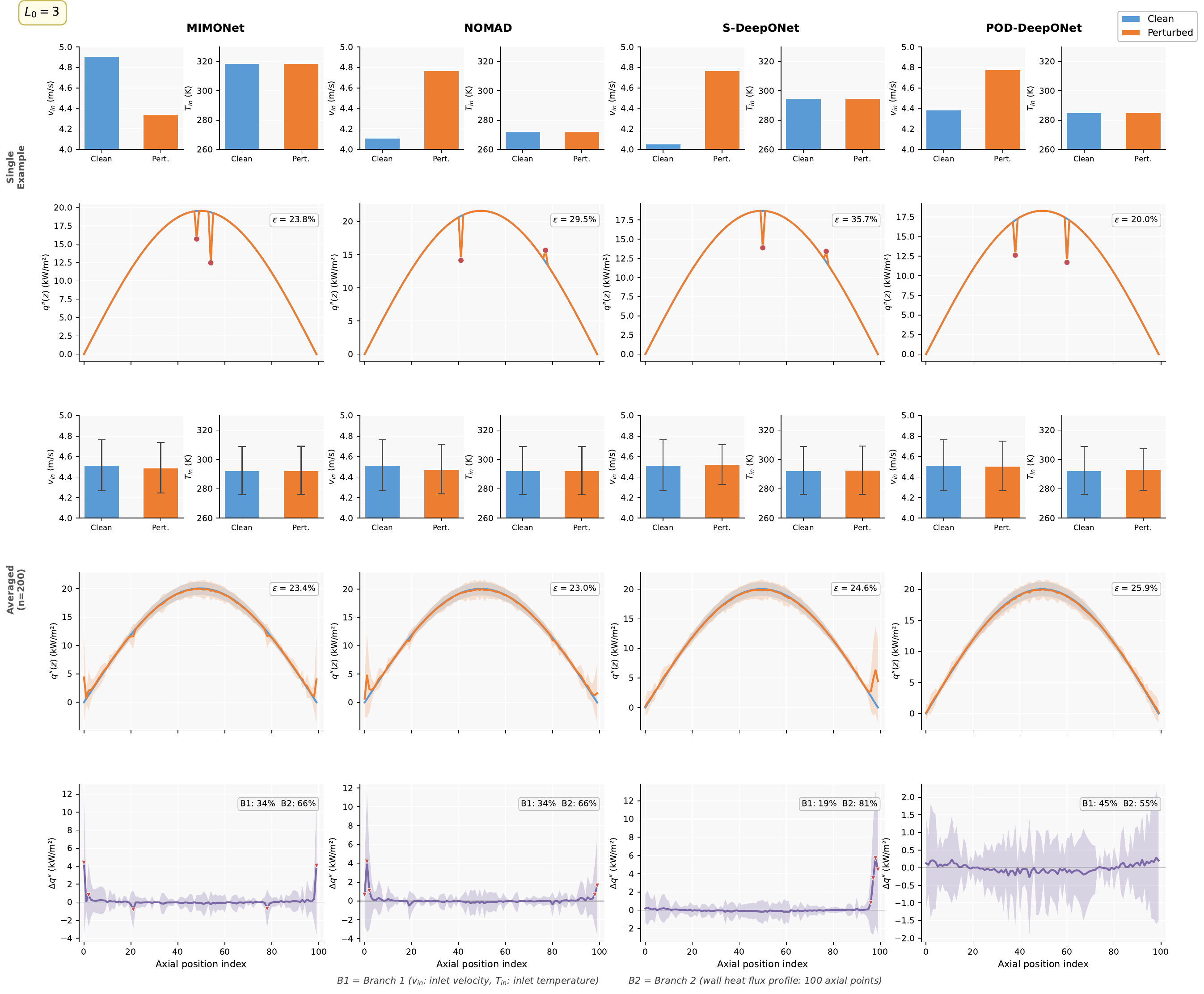}
\caption{\textbf{Visualizing adversarial perturbations at $L_0=3$.}
To illustrate what successful adversarial inputs look like in practice, we plot both a representative single attack (rows 1--2) and averaged statistics across 200 successful attacks (rows 3--5) for each architecture at $L_0=3$ with 20\% error threshold. All values are in physical units.
\textbf{Rows 1 and 3}~(Branch~1): Inlet velocity $v_{\text{in}}$ (m\,s$^{-1}$) and temperature $T_{\text{in}}$ (K) before and after perturbation.
\textbf{Rows 2 and 4}~(Branch~2): Wall heat flux profiles $q''(z)$ in kW\,m$^{-2}$ along the 100 axial positions; red circles mark where perturbations were applied in the single examples.
\textbf{Row 5}: Perturbation difference ($\Delta q''$) in kW\,m$^{-2}$ showing where attacks concentrate along the boundary. Interestingly, S-DeepONet attacks cluster at the right boundary (positions 97--99), while MIMONet and NOMAD target both boundaries more symmetrically. A normalized version is provided in Fig.~\ref{fig:supp_perturbation_normalized}.}
\label{fig:perturbation_structure}
\end{figure}

\subsubsection{Branch Vulnerability Structure}

The 102-dimensional input space decomposes into Branch~1 (2 global parameters: $v_{\text{in}}$, $T_{\text{in}}$) and Branch~2 (100 boundary heat flux values). Analysis of successful attack genomes reveals a clear asymmetry: at $L_0=1$, all architectures except S-DeepONet exclusively target Branch~1 (100\% of successful attacks perturb indices 0 or 1), indicating that the two global parameters, representing less than 2\% of the input space, are disproportionately influential. S-DeepONet is the exception, with 79\% Branch~1-only and 21\% Branch~2-only attacks at $L_0=1$, consistent with its sequential encoder providing non-trivial sensitivity to individual heat flux values.

To see what these perturbations actually look like, Figure~\ref{fig:perturbation_structure} visualizes the attack data at $L_0=3$ (a representative mid-case with 200 successful attacks per model). The top rows show a single concrete example for each architecture, while the bottom rows aggregate statistics across all samples. For MIMONet and NOMAD, the attacks visibly shift $v_{\text{in}}$ upward (from ${\sim}4.5$ to ${\sim}4.8$~m\,s$^{-1}$ in the examples), with the heat flux profiles showing sparse deviations at boundary positions. S-DeepONet and POD-DeepONet attacks, by contrast, leave Branch~1 largely unchanged and concentrate perturbations in Branch~2. Looking at the averaged perturbation difference (bottom row), we observe that MIMONet and NOMAD show symmetric boundary targeting (B1: 34\%, B2: 66\%), while S-DeepONet exhibits a pronounced right-boundary concentration at positions 97--99 (B1: 19\%, B2: 81\%). This asymmetry is consistent with the GRU architecture: since the encoder processes heat flux values sequentially, perturbations at the final positions (sequence endpoints) have the strongest effect on the hidden state that determines all output coefficients. POD-DeepONet shows minimal structured perturbation in Branch~2; the noise-like pattern reflects random budget-filling after the single sensitive direction (Branch~1) is saturated, consistent with its $d_{\text{eff}} \approx 1$.

At $L_0 \geq 3$, attacks become predominantly Mixed (involving both branches), with S-DeepONet again distinguished by substantial Branch~2-only attacks (40--51\% of successes at $k=3$--$10$). The near-exclusive Branch~1 targeting at $L_0=1$ across MIMONet, NOMAD, and POD-DeepONet suggests that global operating parameters create a universal high-leverage attack surface, informing defensive priorities: hardening the Branch~1 input pathway (sensor integrity for inlet velocity and temperature) would mitigate the most accessible single-point attacks (Table~\ref{tab:supp_branch_vuln}).

\subsubsection{Error Contextualization}

To contextualize the severity of adversarial degradation, we note that the maximum observed errors under attack (38.3\% for MIMONet, 47.7\% for NOMAD, 37.0\% for POD-DeepONet, and 62.8\% for S-DeepONet) represent a collapse from the baseline $\sim$1.5\% test error by factors of 25$\times$ to 42$\times$. S-DeepONet's worst-case error of 62.8\%, where the adversarial prediction differs from the clean prediction by nearly two-thirds of the field norm, corresponds to predictions that are not merely inaccurate but physically meaningless for safety-critical decision-making.

\subsubsection{Statistical Confidence}

All reported success rates are accompanied by Wilson score 95\% confidence intervals (Table~\ref{tab:supp_wilson_ci}). For the primary results at the 30\% threshold, representative intervals include: MIMONet $L_0=5$: 22.6\% [18.3\%, 27.6\%]; NOMAD $L_0=5$: 47.7\% [42.2\%, 53.3\%]; S-DeepONet $L_0=3$: 94.5\% [91.4\%, 96.5\%]; POD-DeepONet $L_0=3$: 25.2\% [20.7\%, 30.3\%]. The tight confidence intervals confirm that the observed differential vulnerability patterns are statistically robust and not artifacts of finite sample sizes.

\subsubsection{Random Perturbation Baseline}

To distinguish DE-discovered vulnerabilities from accidental sensitivity, we compare against random $k$-sparse perturbations drawn uniformly from $[-1, +1]$ in standardized space (50 random trials per sample, same bounds and sparsity as DE). At the 30\% threshold, random perturbations achieve success rates of 0.3--0.6\% for MIMONet, 0--1.6\% for NOMAD, 0.6--8.7\% for POD-DeepONet, and 0.3--8.7\% for S-DeepONet across $L_0 \in \{1, 3, 5, 10\}$ (Table~\ref{tab:supp_random_baseline}; Fig.~\ref{fig:random_baseline}). These are 10--100$\times$ lower than DE success rates at the same configurations (e.g., S-DeepONet $L_0=3$: random 1.0\% vs.\ DE 94.5\%; NOMAD $L_0=10$: random 1.6\% vs.\ DE 62.3\%). The near-zero random success rates, despite perturbations satisfying identical physical bounds, confirm that DE exploits structured vulnerability subspaces rather than inherent input-output instability, and that the high attack success rates reported in Table~\ref{tab:attack_success} reflect genuine adversarial optimization, not trivial sensitivity.

\begin{figure}[t]
\centering
\includegraphics[width=\textwidth]{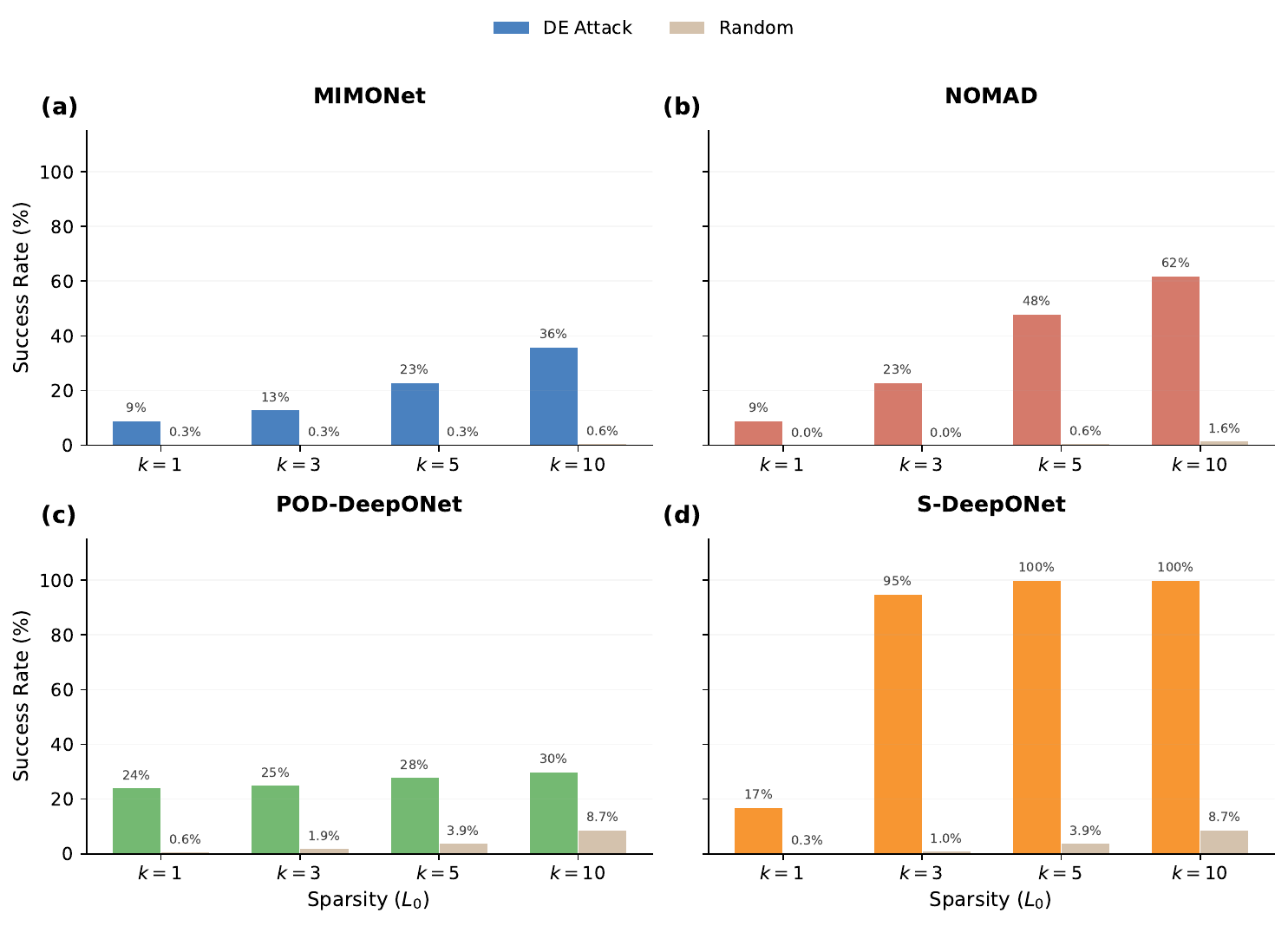}
\caption{\textbf{DE vs.\ random perturbation baseline.}
Comparison of DE (blue) and random $k$-sparse perturbation (gray) success rates at the 30\% threshold over 310 test samples (50 random trials per sample). Random perturbations achieve $<$9\% across all configurations while DE reaches 9--100\%, confirming that DE exploits structured vulnerability subspaces rather than inherent input-output instability.}
\label{fig:random_baseline}
\end{figure}
\subsubsection{Cross-Architecture Transferability}

Adversarial perturbations crafted for one architecture exhibit limited transferability to others (Table~\ref{tab:supp_transfer}). Using successful $L_0=1$ attack genomes from each source model, we evaluate cross-architecture transfer success at the 30\% threshold. S-DeepONet perturbations retain 83\% self-success but transfer to other architectures at only 2--6\%, demonstrating that S-DeepONet's vulnerability is highly architecture-specific: its GRU-mediated sensitivity structures are not shared by the other designs. MIMONet and NOMAD perturbations show higher cross-transfer rates (9--36\%). POD-DeepONet perturbations exhibit moderate cross-transfer (6--12\%). The overall pattern of architecture-specific vulnerabilities with low transfer success has important defensive implications: ensemble methods combining architecturally diverse operators would provide natural robustness, as an adversarial input optimized for one model is unlikely to simultaneously fool alternatives.

\subsubsection{Validation Metric Blindness}

A critical finding for deployment safety is that the most accessible adversarial attacks (single-point perturbations) are entirely invisible to standard output validation. For all four models, 100\% of successful $L_0=1$ attacks produce outputs whose z-scores (relative to the clean test set distribution) remain below the conventional anomaly threshold of 3$\sigma$ (Table~\ref{tab:supp_validation_blind}; Fig.~\ref{fig:supp_zscore}). At $L_0=1$, the mean input-to-output amplification ratio reaches 5.2$\times$ for MIMONet, 5.9$\times$ for NOMAD, 7.9$\times$ for POD-DeepONet, and 7.5$\times$ for S-DeepONet, meaning that a perturbation affecting less than 1\% of the input vector produces 5--8$\times$ larger relative change in the 15,908-dimensional output field (Fig.~\ref{fig:supp_amplification}). At higher sparsity ($L_0 \geq 3$), z-score pass rates drop to 0--28\%, and amplification ratios fall below 1$\times$ as the relative input perturbation magnitude grows faster than the output error increase: the attacker modifies more of the input but encounters diminishing returns in output degradation (Table~\ref{tab:supp_validation_blind}). The combination of high amplification, complete z-score invisibility, and physical constraint satisfaction at $L_0=1$ represents the most dangerous operational scenario: a single corrupted sensor reading can induce grossly incorrect predictions that pass all conventional quality checks.

\subsubsection{Reproducibility and Seed Sensitivity}

Attack outcomes are highly reproducible across random seeds. Over 5 independent DE runs per (model, $k$) configuration, the cross-seed standard deviation in success rate is at most 2.8 percentage points (NOMAD, $L_0=10$) and typically below 1.6 points, while the cross-seed standard deviation in mean error is uniformly below 0.015 (Table~\ref{tab:supp_seed}). POD-DeepONet shows the most deterministic behavior, with zero seed variance for $k \geq 3$, consistent with its $d_{\text{eff}} \approx 1$ concentrating all sensitivity in a single direction that DE reliably discovers regardless of initialization. These results confirm that the vulnerability patterns reported here are intrinsic model properties, not artifacts of random initialization.

\section{Discussion}\label{sec:discussion}

Our findings reveal that neural operator surrogates, despite demonstrating production-grade accuracy on validation datasets, exhibit systematic vulnerabilities to physics-compliant adversarial perturbations of extreme sparsity. Single-feature modifications (affecting fewer than 1\% of inputs) can induce order-of-magnitude accuracy degradation while satisfying all physical constraints, exposing a critical blind spot in current validation practices that assess surrogate quality through global metrics rather than worst-case local deviations.

The two-factor vulnerability model (Section~\ref{sec:two_factor}) has implications beyond diagnostic use. It reveals that the relationship between model design and adversarial robustness is richer than a simple sensitivity ranking: neither concentration alone ($d_{\text{eff}}$) nor magnitude alone (column norms) predicts vulnerability. The most operationally concerning finding is that the ``safest'' architecture by naive assessment (POD-DeepONet, with $d_{\text{eff}} \approx 1$) owes its relative resilience not to robustness \emph{per se} but to an incidental ceiling imposed by low-rank output projection. Architectures without such structural constraints (S-DeepONet, NOMAD) offer no comparable backstop. Critically, a regime combining low $d_{\text{eff}}$ with high sensitivity magnitude, not represented in our four models but readily achievable with poorly regularized encoders, would constitute the most dangerous configuration for safety-critical deployment. The two-factor framework thus serves not only as a post-hoc diagnostic but as a design constraint: future operator architectures intended for adversarial environments should be evaluated against both axes before deployment. More fundamentally, we prove an impossibility result (Appendix~\ref{app:theory}, Theorem~\ref{thm:impossibility_main}): any neural operator achieving Sobolev $\varepsilon$-accuracy for a sensitivity-varying operator necessarily inherits the operator's sensitivity structure, so that adversarial error scales as $\epsilon \cdot s_{(1)}^{\mathcal{G}} \gg \varepsilon$ as the approximation improves. This establishes that the observed vulnerabilities are not engineering failures but intrinsic consequences of faithful operator approximation, connecting our findings to the broader accuracy--robustness tradeoff literature~\cite{tsipras2019robustness}.

While gradient-based attacks (FGSM, PGD) could potentially achieve comparable success rates, they require white-box access to model internals and gradients, a strong assumption rarely satisfied for industrial digital twins where models are often proprietary black boxes. Our gradient-free approach (Fig.~\ref{fig:DE_encoding}; Fig.~\ref{fig:supp_de_pipeline}) provides a more realistic threat model and achieves comparable wall-clock time (approximately 2 minutes per attack) despite higher numbers of model evaluations, as computing gradients for 15,908-dimensional outputs imposes significant overhead (Appendix~\ref{app:note_gradfree}). Convergence analysis over 20 independent DE runs per configuration confirms that the attack optimization is highly reliable: S-DeepONet exceeds 30\% error within approximately 35 generations at $L_0=1$, while MIMONet and NOMAD require 80--150 generations at higher sparsity budgets (Fig.~\ref{fig:supp_convergence}).

Direct comparison with projected gradient descent (PGD, 100 iterations, step size $10^{-2}$, random restarts) at the 30\% threshold reveals that DE consistently outperforms PGD on architectures with gradient pathologies (Table~\ref{tab:supp_pgd_de}; Fig.~\ref{fig:supp_pgd_de}). For POD-DeepONet, DE achieves 34\% success at all sparsity levels while PGD reaches only 0--6\%, reflecting POD-DeepONet's low-rank gradient structure that confounds gradient-based search. For S-DeepONet, DE achieves 94--100\% success at $L_0 \geq 3$ versus PGD's 6--44\%, with PGD's wall-clock time ($\sim$52\,s per attack) an order of magnitude slower than for other models due to cuDNN compatibility constraints. On NOMAD, PGD performs competitively (82\% at $L_0=10$ vs.\ DE's 66\%), consistent with NOMAD's well-conditioned gradient landscape. These results confirm that gradient-free optimization provides critical advantages for architectures where gradient quality is compromised, precisely the models most relevant for black-box deployment scenarios.

For digital twins governing nuclear operations or grid stability, these findings challenge the assumption that simulation-validated models with demonstrated interpolation accuracy are inherently trustworthy~\cite{kobayashi2024trustworthy} in operational environments where sensor drift, calibration errors, or adversarial tampering can introduce physics-compliant but malicious input perturbations. The near-total invisibility of single-point attacks to standard z-score validation (100\% pass rate across all models) represents a particularly acute threat: the most accessible attacks, requiring corruption of only a single sensor reading, produce 5--8$\times$ amplification from input to output while remaining statistically indistinguishable from nominal predictions. Conversely, the low cross-architecture transferability (2--36\% transfer success) suggests that ensemble methods combining architecturally diverse models could provide effective defense, as adversarial inputs optimized for one model are unlikely to simultaneously fool alternatives. The model-agnostic framework provided here enables systematic robustness evaluation before deployment through black-box stress-testing under realistic attack scenarios.

Potential defense mechanisms warrant investigation across multiple fronts. We emphasize that the physical feasibility of all perturbations is guaranteed by construction (Table~\ref{tab:physical_bounds}): the $\pm 1\sigma$ search bounds ensure that perturbed inputs fall within the convex hull of training data and pass standard univariate and multivariate anomaly detection, making conventional input validation insufficient as a defense. Several complementary strategies merit investigation:

\begin{enumerate}
    \item \textbf{Architectural ensembles.} Combining predictions from diverse operator families could provide robustness through architectural diversity. Our transferability analysis shows that cross-architecture transfer success is only 2--36\%, providing empirical support: adversarial inputs optimized for one model are unlikely to simultaneously fool architecturally distinct alternatives.
    \item \textbf{Physics-based anomaly detection.} Monitoring energy balance, conservation laws, and cross-field consistency could flag implausible input combinations even when individual sensor readings remain within nominal bounds. Unlike univariate threshold checks, physics-informed detectors can exploit the governing PDE structure to identify inputs that are individually plausible but jointly inconsistent.
    \item \textbf{Sensitivity regularization during training.} Maximizing $d_{\text{eff}}$ while minimizing column norm magnitude as joint auxiliary objectives could distribute vulnerability across the input space while reducing amplification, directly targeting both factors of the vulnerability model.
    \item \textbf{Formal verification.} Establishing certified robustness guarantees for restricted input regions could delineate safe operating envelopes within which worst-case prediction error is provably bounded.
\end{enumerate}

The architecture-dependent vulnerability patterns observed here suggest that no single defense strategy will suffice; robustness must be co-designed with operator architecture from the outset rather than retrofitted post-deployment. Developing and evaluating such defenses is the focus of ongoing work.

\paragraph{Scope and Limitations.}
Our analysis uses a single thermal-hydraulic benchmark (heat exchanger with mixed boundary conditions), motivated by the computational cost of training four architectures to production-grade accuracy and executing over 4,960 adversarial evaluations. We mitigate this limitation through breadth across architectures: the four operator families span fundamentally different design choices (concatenation vs.\ operator decomposition, feedforward vs.\ recurrent encoding, full-field vs.\ reduced-order output), and the vulnerability mechanisms we identify (sensitivity concentration, GRU hidden-state coupling, low-rank output capping) are architecture-level properties rather than artifacts of the specific PDE. Nonetheless, confirming that these vulnerability patterns generalize to other PDE families, higher-dimensional input spaces, and mesh resolutions remains important future work. Additionally, our $\pm 1\sigma$ perturbation bound was chosen to guarantee stealth (all perturbed inputs lie within the training distribution's convex hull); relaxing this to $\pm 2\sigma$ or $\pm 3\sigma$ would likely increase attack success rates but at the cost of reduced detection invisibility, representing a natural attacker trade-off between efficacy and stealth that merits systematic exploration. Similarly, while we focus on characterizing the attack surface (consistent with the adversarial ML literature, where attack papers and defense papers are typically separate contributions), practical deployment will require the defense mechanisms discussed above.

\section{Conclusion}\label{sec:conclusion}

We have demonstrated that neural operators deployed in safety-critical digital twins are systematically vulnerable to sparse, physics-compliant adversarial perturbations that expose an attack surface absent from traditional validation protocols. The two-factor vulnerability model combining effective perturbation dimension $d_{\text{eff}}$ with sensitivity magnitude provides both a mechanistic explanation for differential architectural vulnerability and a practical pre-deployment diagnostic, supported by rigorous theoretical bounds proving that sparse attack effectiveness scales as $1/\sqrt{d_{\text{eff}}}$ and that accurate operator approximation inherently implies adversarial vulnerability (Appendix~\ref{app:theory}). Our analysis reveals that vulnerability arises not from a single architectural deficiency but from the interplay between sensitivity concentration and amplification: S-DeepONet's moderate concentration ($d_{\text{eff}} \approx 4$) coupled with sufficient amplification produces the most exploitable vulnerability, while POD-DeepONet's extreme concentration ($d_{\text{eff}} \approx 1$) is tempered by its low-rank output constraint. Our gradient-free differential evolution framework provides a practical tool for adversarial stress-testing that scales to high-dimensional field predictions and respects physical constraints. Future work should focus on extending this analysis to additional PDE families, developing architecture-aware defense mechanisms informed by the $d_{\text{eff}}$ diagnostic, and establishing robustness certification protocols specifically designed for operator learning models in cyber-physical applications. As neural operator surrogates are deployed on accelerated timelines into nuclear monitoring, grid management, and climate prediction pipelines, closing the gap between interpolation accuracy and adversarial robustness is not merely a research objective but a prerequisite for trustworthy deployment in systems where prediction failures carry physical consequences.

\section{Methods}\label{sec:methods}

\subsection{Neural Operator Architectures and Training}

\subsubsection{Dataset and Preprocessing}

Our computational fluid dynamics dataset comprises 1,546 heat exchanger simulations: 80\% training (1,236 samples) and 20\% test (310). Each sample includes Branch 1 (2D global parameters), Branch 2 (100D boundary discretization), and Trunk (3,977 spatial coordinates). Outputs comprise 4 field variables: pressure ($P$) and three velocity components ($u$, $v$, $w$). All inputs undergo standardized preprocessing with training set statistics (complete specifications in Appendix~\ref{app:dataset}).

\subsubsection{Model Architectures}

We compare four neural operator families representing distinct design philosophies: \textbf{NOMAD} (2.1M parameters) uses dual multilayer perceptron encoders with multiplicative branch-trunk fusion; \textbf{S-DeepONet} (1.8M parameters) employs a recurrent GRU encoder for sequential boundary dependencies with spectral summation composition; \textbf{POD-DeepONet} (1.6M parameters) leverages proper orthogonal decomposition to predict low-rank coefficients rather than full fields; \textbf{MIMONet} (1.2M parameters) uses direct concatenation-based mapping without explicit operator structure. All models achieve 1.3--1.5\% baseline test error (Table~\ref{tab:supp_baseline}). Complete architectural specifications, layer dimensions, and parameter counts are provided in Appendix~\ref{app:architectures}.

\subsubsection{Training Protocol}

All models were trained identically using mean squared error loss with Adam optimizer (learning rate $10^{-3}$, weight decay $10^{-6}$), ReduceLROnPlateau scheduling, batch size 4, and 10\% dropout. Models converged within 50 epochs to validation MSE of 0.012--0.014 on NVIDIA Tesla A100 GPUs (complete protocol in Appendix~\ref{app:training}).

\subsection{Sparse Adversarial Attack Methodology}

\subsubsection{Attack Formulation}

Given a trained neural operator $f_{\theta}$ and a test input $\tilde{\mathbf{b}} \in \mathbb{R}^d$ (in standardized coordinates) with clean prediction $\hat{\mathbf{s}}_{\text{clean}} = f_{\theta}(\tilde{\mathbf{b}})$, we seek a $k$-sparse adversarial input $\tilde{\mathbf{b}}^{\text{adv}}$ satisfying:
\begin{equation}
\begin{aligned}
\max_{\tilde{\mathbf{b}}^{\text{adv}}} \quad & \mathcal{E} = \frac{\|f_{\theta}(\tilde{\mathbf{b}}^{\text{adv}}) - \hat{\mathbf{s}}_{\text{clean}}\|_2}{\|\hat{\mathbf{s}}_{\text{clean}}\|_2} \\
\text{subject to} \quad & \|\tilde{\mathbf{b}}^{\text{adv}} - \tilde{\mathbf{b}}\|_0 \leq k, \quad \tilde{b}_i^{\text{adv}} \in [-1, 1] \;\;\forall\; i \in \mathcal{P},
\end{aligned}
\end{equation}
where $\mathcal{P} \subseteq \{1, \ldots, d\}$ with $|\mathcal{P}| \leq k$ denotes the set of perturbed coordinates, and the bound $[-1, 1]$ is enforced in standardized input space (see Section~\ref{sec:feasibility} for the physical feasibility guarantee). The sparsity budget $k$ corresponds to the $\ell_0$-norm constraint on the perturbation, following the standard convention in sparse adversarial attacks~\cite{croce2019sparse, kotyan2022adversarial}; throughout this paper, we use $L_0 = k$ interchangeably, with $L_0$ denoting the norm type and $k$ the specific budget value. Unperturbed coordinates retain their original values: $\tilde{b}_i^{\text{adv}} = \tilde{b}_i$ for $i \notin \mathcal{P}$. An attack succeeds if $\mathcal{E} > \tau$ where $\tau = 0.3$ (30\% threshold). This sparsity model captures realistic tampering: a single faulty sensor, a mis-set valve position, or a corrupted telemetry field.

\subsubsection{Differential Evolution Optimization}

Figure~\ref{fig:DE_encoding} illustrates the genome encoding and decoding scheme, while Fig.~\ref{fig:supp_de_pipeline} presents the complete algorithmic pipeline including physical feasibility enforcement. The encoding layer (Fig.~\ref{fig:DE_encoding}, top left; Fig.~\ref{fig:supp_de_pipeline}, panel~a) represents each candidate perturbation as $2k$ continuous values ($k$ index--value pairs specifying which input coordinates to modify and their replacement values) which are decoded into perturbed branch vectors. For each candidate, the perturbed and clean inputs are independently evaluated through the neural operator under black-box access (Fig.~\ref{fig:supp_de_pipeline}, panel~b), and the resulting relative $L_2$ error serves as the fitness score (panel~c). We employ population size $M = 20 \times 2k$ initialized via Latin Hypercube Sampling, with best1bin mutation (dithered scaling factor $F \in [0.5, 1.0]$), binomial crossover ($\text{CR} = 1.0$), physical bounds clipping (Fig.~\ref{fig:supp_de_pipeline}, panel~g; see Section~\ref{sec:feasibility}), and greedy selection over 150 generations. Typical convergence occurs in 80--120 generations, requiring approximately 2 minutes per attack on GPU (complete hyperparameters and pseudocode in Appendix~\ref{app:de_hyperparams} and Algorithm~\ref{alg:one_point_branch_DE}). DE was selected over alternative black-box optimizers (CMA-ES, Bayesian optimization) because its genome representation naturally encodes the mixed discrete-continuous search space (selecting \emph{which} coordinates to perturb and \emph{what} replacement values to assign) without requiring gradient surrogates or continuous relaxations of the $\ell_0$ constraint.

\begin{figure}[t]
    \centering
    \includegraphics[width=\linewidth]{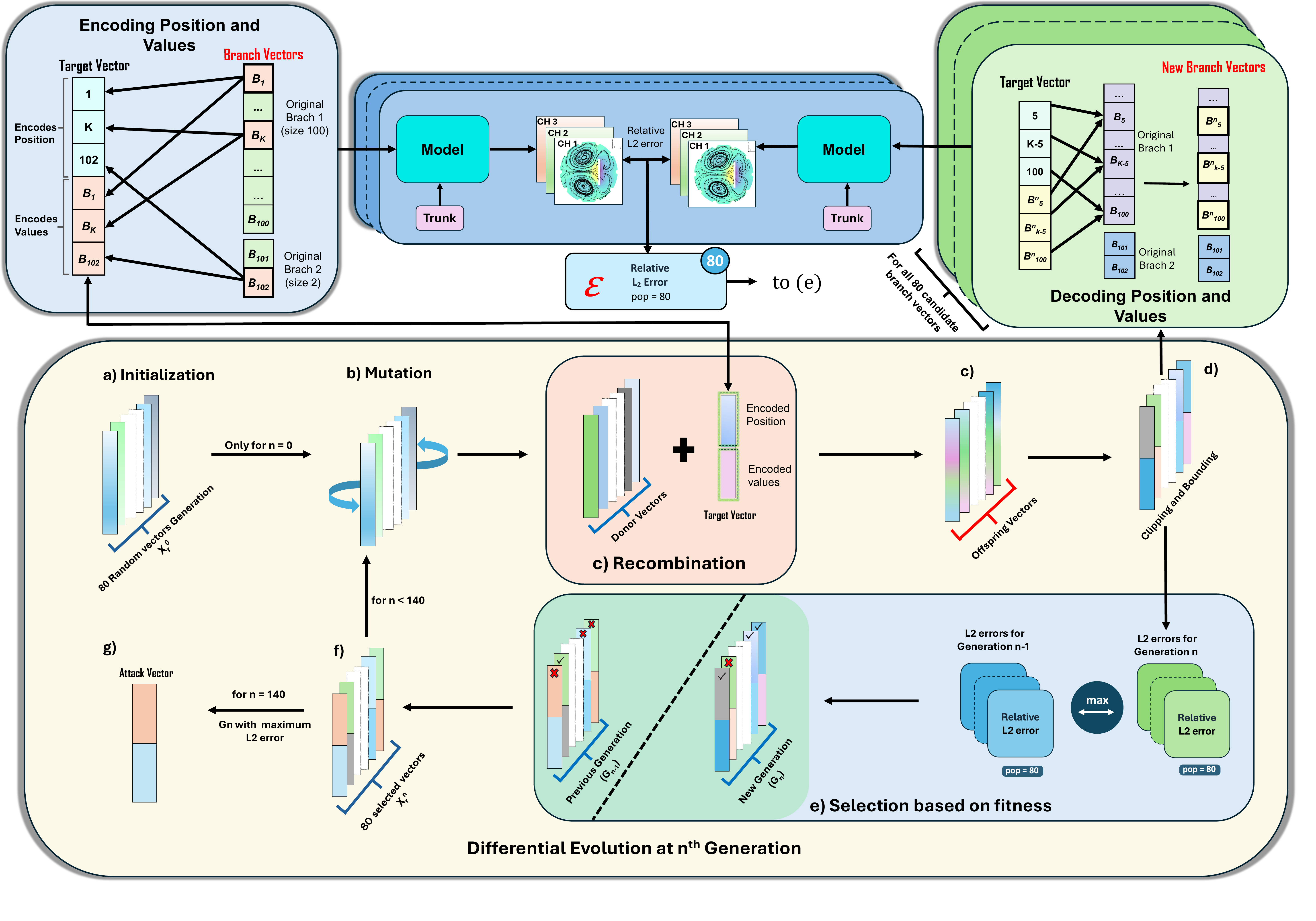}
    \caption{\textbf{Genome encoding and decoding for one-point adversarial perturbation.}
    Top left: each candidate perturbation vector encodes position (index $K$ into the 102-dimensional input) and replacement value, which are decoded into modified branch vectors. 
    Top center: the perturbed and original branch inputs are independently evaluated through the neural operator to compute the relative $L_2$ error as fitness. 
    Top right: decoded perturbation applied to the original Branch~1 and Branch~2 vectors, with modified entries highlighted. 
    Bottom: the population-level evolutionary loop (initialization, mutation, recombination, clipping, fitness evaluation, and selection) iterated over $n$ generations until convergence. 
    See Fig.~\ref{fig:supp_de_pipeline} for the complete algorithmic pipeline including physical feasibility enforcement.}
    \label{fig:DE_encoding}
\end{figure}

\subsubsection{Physical Feasibility of Adversarial Perturbations}
\label{sec:feasibility}

A central requirement of our threat model is that all adversarial perturbations must remain physically plausible; that is, perturbed inputs must correspond to operating conditions that could realistically occur in the physical system and would not be flagged by constraint-based anomaly detection. We enforce this by construction through the interaction between input standardization and the DE search bounds (visualized in the center constraint strip of Fig.~\ref{fig:supp_de_pipeline}).

\paragraph{Standardization and Inverse Mapping.}
All branch inputs are standardized using training set statistics prior to model inference:
\begin{equation}
\tilde{b}_i = \frac{b_i - \mu_i}{\sigma_i},
\end{equation}
where $\mu_i$ and $\sigma_i$ are the training set mean and standard deviation for input coordinate $i$. The DE optimizer searches for replacement values $\tilde{b}_i^{\text{adv}} \in [-1, +1]$ in this standardized space. Inverting the standardization, the physical-space value of any perturbed coordinate satisfies:
\begin{equation}
b_i^{\text{adv}} = \mu_i + \tilde{b}_i^{\text{adv}} \cdot \sigma_i \;\in\; [\mu_i - \sigma_i,\; \mu_i + \sigma_i].
\end{equation}
Thus, by constraining the DE search to $[-1, +1]$ in standardized coordinates, all perturbed values are guaranteed to lie within one standard deviation of the training mean for each input coordinate.

\paragraph{Verification Against Physical Operating Ranges.}
Table~\ref{tab:physical_bounds} verifies that the DE-accessible range for each input type falls strictly within the physical operating envelope of the system. For Branch~1 (global parameters), the perturbation-accessible velocity range $[4.25, 4.77]$\,m\,s$^{-1}$ and temperature range $[275.4, 309.4]$\,K are proper subsets of the design operating ranges $[4.0, 5.0]$\,m\,s$^{-1}$ and $[263, 323]$\,K, respectively. For Branch~2, the 100-point wall heat flux profile $q''(z) = q_{\max}\sin(\pi z / H)$ has a standardized training distribution spanning approximately $[-2.0, +1.4]$; the DE-accessible range $[-1, +1]$ maps to $[6{,}285,\; 18{,}891]$\,W\,m$^{-2}$, which lies entirely within the physically realizable envelope determined by the maximum heat flux $q_{\max} \in [8{,}000, 20{,}000]$\,W\,m$^{-2}$ and the sinusoidal spatial profile.

\begin{table}[h]
\centering
\caption{\textbf{Physical feasibility verification.} The DE search bounds $[-1, +1]$ in standardized space map to physical ranges that are strict subsets of the system operating envelope, ensuring all perturbations correspond to realizable operating conditions.}
\label{tab:physical_bounds}
\renewcommand{\arraystretch}{1.3}
\small
\begin{tabular}{@{}lcccc@{}}
\toprule
\textbf{Input} & \textbf{Training} & \textbf{Training} & \textbf{DE-Accessible} & \textbf{Physical Operating} \\
\textbf{Coordinate} & \textbf{Mean ($\mu$)} & \textbf{Std ($\sigma$)} & \textbf{Range ($\mu \pm \sigma$)} & \textbf{Envelope} \\
\midrule
$v_{\text{in}}$ (m\,s$^{-1}$) & 4.51 & 0.26 & $[4.25,\; 4.77]$ & $[4.0,\; 5.0]$ \\
$T_{\text{in}}$ (K)           & 292.4 & 17.0 & $[275.4,\; 309.4]$ & $[263,\; 323]$ \\
$q''(z)$ (W\,m$^{-2}$)       & 12{,}588 & 6{,}303 & $[6{,}285,\; 18{,}891]$ & $[0,\; 20{,}000]$ \\
\bottomrule
\end{tabular}
\end{table}

\paragraph{Implications for Detection Invisibility.}
The restriction to $\pm 1\sigma$ perturbations has two important consequences for attack stealth. First, all perturbed values fall within the convex hull of the training distribution, meaning they are statistically indistinguishable from nominal sensor readings by any univariate anomaly detector monitoring individual channels. Second, because the perturbations are sparse ($L_0 \leq 10$ out of 102 coordinates), the multivariate input distribution remains close to the training manifold: perturbed inputs pass standard Mahalanobis distance checks and reconstruction-based anomaly detection with high probability. We quantify this formally: for an $L_0 = k$ attack with replacement values drawn from $[-1, +1]$, the maximum squared Mahalanobis distance contribution from perturbed coordinates is bounded by $k$ (since each coordinate contributes at most $1^2 = 1$ in standardized space), while the unperturbed coordinates contribute their nominal values. For $k \leq 10$ out of $d = 102$, the total Mahalanobis distance remains well within the $\chi^2_{102}$ acceptance region at any conventional significance level.

\textbf{Evaluation Protocol.} We attack all 310 test samples for each model and each $k \in \{1, 3, 5, 10\}$, totaling 4,960 attacks across four architectures.

\section*{Acknowledgements}
This work used the Delta and DeltaAI systems at the National Center for Supercomputing Applications [awards OAC 2005572 and OAC 2320345] through allocation CIS240093 from the Advanced Cyberinfrastructure Coordination Ecosystem: Services \& Support (ACCESS) program, which is supported by National Science Foundation grants \#2138259, \#2138286, \#2138307, \#2137603, and \#2138296.

\section*{Declarations}

\noindent\textbf{Funding:} Not applicable.

\noindent\textbf{Competing Interests:} The authors declare no competing interests.

\noindent\textbf{Data Availability:} Training data and model checkpoints available upon reasonable request.

\noindent\textbf{Code Availability:} Attack framework implementation available upon request.

\noindent\textbf{Author Contributions:} S.R. conceived the study, developed the attack methodology, and performed experiments. K.K. contributed to model architecture design and training code. S.C., R. and S.B.A. supervised the project. All authors contributed to manuscript writing.

\noindent\textbf{Ethics Statement:} This study characterizes vulnerabilities in machine learning surrogate models to inform the design of more robust architectures and defensive mechanisms for safety-critical applications. All experiments were conducted on research-grade models trained on synthetic simulation data; no operational digital twin systems or physical infrastructure were tested or compromised. The adversarial attack framework is intended as a diagnostic stress-testing tool analogous to penetration testing in cybersecurity, and the authors advocate for its use in pre-deployment robustness evaluation rather than exploitation.

\clearpage
\begin{appendices}
\setlength{\parindent}{0pt}
\setlength{\parskip}{3pt}

\section{Complete Dataset Specifications}\label{app:dataset}
\subsection{Data Generation}
All CFD simulations were performed using ANSYS Fluent with a pressure-based steady-state solver (incompressible, buoyancy-driven flow) under laminar conditions (Reynolds number 1,200--3,500). The computational domain was discretized using an unstructured mesh. Convergence was enforced to residuals $< 10^{-6}$ for all fields, with typical computation times of 45--90 minutes per sample on a 16-core Intel Xeon processor.

\subsection{Dataset Statistics}
\begin{table}[htbp]
\centering
\caption{\textbf{Complete dataset composition and statistics.}}
\label{tab:dataset_stats}
\renewcommand{\arraystretch}{1.2}
\small
\begin{tabular}{@{}lcc@{}}
\toprule
\textbf{Property} & \textbf{Value} & \textbf{Range/Distribution} \\
\midrule
Total samples & 1,546 & -- \\
Training set & 1,236 (80\%) & Random split, seed=42 \\
Test set & 310 (20\%) & Random split, seed=42 \\
\midrule
Inlet velocity $v_{\text{in}}$ [m\,s$^{-1}$] & -- & Uniform(4.0, 5.0) \\
Inlet temperature $T_{\text{in}}$ [K] & -- & Uniform(263, 323) \\
Max heat flux $q_{\max}$ [kW\,m$^{-2}$] & -- & Uniform(8, 20) \\
\midrule
Mesh points per sample & 3,977 & Unstructured mesh \\
Output channels & 4 & $P$, $u$, $v$, $w$ \\
Total DOF per sample & 15,908 & $3{,}977 \times 4$ \\
\midrule
Storage per sample & 318 kB & Float32 precision \\
Total dataset size & 492 MB & Compressed HDF5 format \\
\bottomrule
\end{tabular}
\end{table}

\subsection{Normalization Statistics}
All normalization statistics were computed exclusively from the training set (1,236 samples) to prevent data leakage.

\begin{table}[htbp]
\centering
\caption{\textbf{Input normalization parameters.}}
\renewcommand{\arraystretch}{1.3}
\small
\begin{tabular}{@{}llll@{}}
\toprule
\textbf{Input} & \textbf{Mean ($\mu$)} & \textbf{Std ($\sigma$)} & \textbf{Transformation} \\
\midrule
Branch 1: $v_{\text{in}}$ & 4.5145 & 0.2615 & \multirow{2}{*}{$\mathbf{u}_1^{\text{norm}} = (\mathbf{u}_1 - \boldsymbol{\mu}_{\text{B1}}) / \boldsymbol{\sigma}_{\text{B1}}$} \\
Branch 1: $T_{\text{in}}$ & 292.43 & 17.033 & \\
\midrule
Branch 2: $q''(z)$ [W\,m$^{-2}$] & 12,588 & 6,303 & $\mathbf{u}_2^{\text{norm}} = (\mathbf{u}_2 - \mu_{\text{B2}}) / \sigma_{\text{B2}}$ \\
\midrule
Trunk: $x, y, z$ & \multicolumn{3}{l}{Min-max to $[-1, 1]$: $\xi^{\text{norm}} = 2(\xi - \xi_{\min})/(\xi_{\max} - \xi_{\min}) - 1$} \\
 & \multicolumn{3}{l}{$(x,y,z)_{\min} = (0, 0, 0)$~m;\; $(x,y,z)_{\max} = (0.5, 0.1, 1.0)$~m} \\
\bottomrule
\end{tabular}
\end{table}

\textbf{Outputs (Field Variables):}
Channel-wise min-max normalization to $[-1, 1]$:
\begin{table}[htbp]
\centering
\caption{\textbf{Output normalization ranges (from training set).}}
\renewcommand{\arraystretch}{1.2}
\small
\begin{tabular}{@{}lccc@{}}
\toprule
\textbf{Channel} & $\mathbf{s}_{\min}$ & $\mathbf{s}_{\max}$ & \textbf{Units} \\
\midrule
Pressure ($P$) & 98,450 & 102,380 & Pa \\
$u$ (x-velocity) & $-0.82$ & 6.34 & m\,s$^{-1}$ \\
$v$ (y-velocity) & $-1.21$ & 1.18 & m\,s$^{-1}$ \\
$w$ (z-velocity) & $-0.94$ & 0.97 & m\,s$^{-1}$ \\
\bottomrule
\end{tabular}
\end{table}

\section{Detailed Neural Operator Architectures}\label{app:architectures}
This section provides complete mathematical formulations and architectural specifications for all four neural operator models evaluated in this study. The primary output fields are pressure and three velocity components ($P$, $u$, $v$, $w$); architectural dimensions below reflect the implementation output size, which includes an auxiliary training channel for NOMAD, S-DeepONet, and POD-DeepONet ($C=5$) while MIMONet uses $C=4$. All evaluation metrics reported in the main text are computed over the 4 primary fields.

\subsection{Mathematical Framework and Notation}

\textbf{Problem Setup:}
We aim to learn the nonlinear operator $\mathcal{G}: \mathcal{U} \to \mathcal{S}$ that maps input functions (boundary conditions) to output functions (full-field solutions):
\begin{equation}
\mathcal{G}: u(\cdot) \mapsto s(\cdot),
\end{equation}
where:
\begin{itemize}
    \item Input space: $u \in \mathcal{U} = \mathcal{U}_1 \times \mathcal{U}_2$ with $\mathcal{U}_1 \subset \mathbb{R}^2$ (global parameters: $v_{\text{in}}, T_{\text{in}}$) and $\mathcal{U}_2 \subset L^2([0,1])$ (heat flux profile $q''(z)$ discretized at 100 points)
    \item Output space: $s \in \mathcal{S} = L^2(\Omega; \mathbb{R}^4)$ where $\Omega \subset \mathbb{R}^3$ is the physical domain
    \item Field variables: $s(\mathbf{y}) = [P(\mathbf{y}), u(\mathbf{y}), v(\mathbf{y}), w(\mathbf{y})]^T$ at spatial location $\mathbf{y} \in \Omega$
\end{itemize}

\textbf{Universal Approximation of Operators:}
By the universal approximation theorem for operators (Chen \& Chen, 1995), for any $\epsilon > 0$, there exists a neural network approximation such that:
\begin{equation}
\sup_{u \in \mathcal{U}} \| \mathcal{G}(u) - \mathcal{G}_\theta(u) \|_{\mathcal{S}} < \epsilon,
\end{equation}
where $\mathcal{G}_\theta$ denotes the learned operator parameterized by $\theta$.

\subsection{Model Definitions}

\paragraph{1. DeepONet (Deep Operator Network) - Foundational Architecture}

\textbf{Mathematical Definition:}
DeepONet approximates the operator $\mathcal{G}$ using a branch-trunk decomposition:
\begin{equation}
\mathcal{G}_\theta(u)(\mathbf{y}) = \sum_{k=1}^p b_k(u) \cdot t_k(\mathbf{y}) + c,
\end{equation}
where:
\begin{itemize}
    \item $b_k: \mathcal{U} \to \mathbb{R}$ is the $k$-th branch network output (encodes input function)
    \item $t_k: \Omega \to \mathbb{R}$ is the $k$-th trunk network output (encodes spatial location)
    \item $p$ is the latent dimension (inner product dimension)
    \item $c \in \mathbb{R}^C$ is a learnable bias term ($C$ = number of output channels)
\end{itemize}

\textbf{Branch Network:} Maps discretized input function to latent representation:
\begin{equation}
\mathbf{b}(u) = \text{NN}_{\text{branch}}([u(x_1), u(x_2), \ldots, u(x_m)]) \in \mathbb{R}^p,
\end{equation}
where $\{x_1, \ldots, x_m\}$ are fixed sensor locations.

\textbf{Trunk Network:} Maps spatial coordinates to latent representation:
\begin{equation}
\mathbf{t}(\mathbf{y}) = \text{NN}_{\text{trunk}}(\mathbf{y}) \in \mathbb{R}^p.
\end{equation}

\textbf{Output Prediction:}
\begin{equation}
\hat{\mathbf{s}}(\mathbf{y}) = \langle \mathbf{b}(u), \mathbf{t}(\mathbf{y}) \rangle + \mathbf{c} = \sum_{k=1}^p b_k(u) \cdot t_k(\mathbf{y}) + \mathbf{c}.
\end{equation}

\textbf{Multi-Branch Extension:} For multiple input types (global + boundary):
\begin{equation}
\mathbf{b}_{\text{fused}} = \mathbf{b}_1(u_1) + \mathbf{b}_2(u_2),
\end{equation}
where $\mathbf{b}_1$ processes global parameters and $\mathbf{b}_2$ processes boundary conditions.

\bigskip
\paragraph{2. NOMAD (Neural Operator with Multi-branch Architecture and DeepONet Fusion)}

\textbf{Mathematical Definition:}
NOMAD extends DeepONet with a multi-branch architecture and element-wise fusion followed by a decoder network:
\begin{equation}
\mathcal{G}_{\text{NOMAD}}(u_1, u_2)(\mathbf{y}) = \text{Decoder}\left( [\mathbf{b}_1(u_1) + \mathbf{b}_2(u_2)] \odot \mathbf{t}(\mathbf{y}) \right),
\end{equation}
Here, Branch~1 $\mathbf{b}_1: \mathbb{R}^2 \to \mathbb{R}^{256}$ processes global parameters $(v_{\text{in}}, T_{\text{in}})$; Branch~2 $\mathbf{b}_2: \mathbb{R}^{100} \to \mathbb{R}^{256}$ processes the heat flux profile $q''(z)$; the fused representation is $\mathbf{b} = \mathbf{b}_1 + \mathbf{b}_2$ (element-wise addition); the trunk $\mathbf{t}: \mathbb{R}^2 \to \mathbb{R}^{5 \times 256}$ processes spatial coordinates; and the operator composition $\mathbf{z}(\mathbf{y}) = \mathbf{b} \odot \mathbf{t}(\mathbf{y}) \in \mathbb{R}^{5 \times 256}$ is decoded by $\text{NN}_{\text{dec}}: \mathbb{R}^{1280} \to \mathbb{R}^5$.

\textbf{Forward Pass:}
\begin{align}
\mathbf{b}_1 &= \text{MLP}_1(v_{\text{in}}, T_{\text{in}}), \quad \mathbf{b}_1 \in \mathbb{R}^{256} \\
\mathbf{b}_2 &= \text{MLP}_2(q''(z_1), \ldots, q''(z_{100})), \quad \mathbf{b}_2 \in \mathbb{R}^{256} \\
\mathbf{b} &= \mathbf{b}_1 + \mathbf{b}_2 \\
\mathbf{t}(\mathbf{y}) &= \text{MLP}_{\text{trunk}}(x, y) \in \mathbb{R}^{5 \times 256} \\
\mathbf{z}(\mathbf{y}) &= \text{Flatten}(\mathbf{b} \odot \mathbf{t}(\mathbf{y})) \in \mathbb{R}^{1280} \\
\hat{\mathbf{s}}(\mathbf{y}) &= \text{MLP}_{\text{dec}}(\mathbf{z}(\mathbf{y})) \in \mathbb{R}^5
\end{align}

\textbf{Key Properties:}
NOMAD handles heterogeneous inputs (scalars + functions), uses element-wise multiplication to enable interaction between input and spatial features, and provides additional expressivity for complex field patterns through the decoder.

\bigskip
\paragraph{3. S-DeepONet (Sequential DeepONet)}

\textbf{Mathematical Definition:}
S-DeepONet replaces the feedforward branch network for sequential data with a recurrent architecture:
\begin{equation}
\mathcal{G}_{\text{S-DeepONet}}(u_1, u_2)(\mathbf{y}) = \sum_{k=1}^{256} [\mathbf{b}_1(u_1) + \mathbf{b}_2^{\text{GRU}}(u_2)]_k \cdot t_k(\mathbf{y}) + c,
\end{equation}
where $\mathbf{b}_2^{\text{GRU}}$ is computed via a Gated Recurrent Unit (GRU).

\textbf{GRU Branch Network:} For sequence $\{q''(z_1), \ldots, q''(z_{100})\}$:
\begin{align}
\mathbf{h}_t &= \text{GRU}(\mathbf{h}_{t-1}, q''(z_t)), \quad t = 1, \ldots, 100 \\
\mathbf{b}_2^{\text{GRU}} &= \mathbf{h}_{100} \in \mathbb{R}^{256}
\end{align}

\textbf{GRU Update Equations:}
\begin{align}
\mathbf{r}_t &= \sigma(\mathbf{W}_r \mathbf{x}_t + \mathbf{U}_r \mathbf{h}_{t-1} + \mathbf{b}_r) \quad &\text{(reset gate)} \\
\mathbf{z}_t &= \sigma(\mathbf{W}_z \mathbf{x}_t + \mathbf{U}_z \mathbf{h}_{t-1} + \mathbf{b}_z) \quad &\text{(update gate)} \\
\tilde{\mathbf{h}}_t &= \tanh(\mathbf{W}_h \mathbf{x}_t + \mathbf{U}_h (\mathbf{r}_t \odot \mathbf{h}_{t-1}) + \mathbf{b}_h) \quad &\text{(candidate)} \\
\mathbf{h}_t &= (1 - \mathbf{z}_t) \odot \mathbf{h}_{t-1} + \mathbf{z}_t \odot \tilde{\mathbf{h}}_t \quad &\text{(hidden state)}
\end{align}

\textbf{Key Properties:}
S-DeepONet captures sequential dependencies in boundary conditions, with an optional bidirectional processing mode for spatial correlations, and achieves reduced parameter count compared to a fully-connected branch.

\bigskip
\paragraph{4. POD-DeepONet (Proper Orthogonal Decomposition DeepONet)}

\textbf{Mathematical Definition:}
POD-DeepONet uses data-driven basis functions from Proper Orthogonal Decomposition:
\begin{equation}
\mathcal{G}_{\text{POD}}(u)(\mathbf{y}) = \sum_{k=1}^r \alpha_k(u) \cdot \phi_k(\mathbf{y}),
\end{equation}
where:
\begin{itemize}
    \item $\{\phi_k\}_{k=1}^r$ are POD basis functions (pre-computed, fixed)
    \item $\alpha_k(u)$ are coefficients predicted by neural network
    \item $r$ is the number of POD modes retained (typically $r \ll N$)
\end{itemize}

\textbf{POD Basis Construction:}

Given training snapshots $\mathbf{S} = [\mathbf{s}^{(1)}, \ldots, \mathbf{s}^{(M)}] \in \mathbb{R}^{N \times M}$:
\begin{equation}
\mathbf{S} = \mathbf{U} \boldsymbol{\Sigma} \mathbf{V}^T \quad \text{(SVD)}
\end{equation}

POD modes: $\boldsymbol{\Phi} = \mathbf{U}[:, 1:r]$ where $r$ is chosen to capture $\geq 99.5\%$ energy:
\begin{equation}
\frac{\sum_{k=1}^r \sigma_k^2}{\sum_{k=1}^N \sigma_k^2} \geq 0.995
\end{equation}

\textbf{Coefficient Prediction:}
\begin{equation}
\boldsymbol{\alpha}(u) = \text{NN}_{\text{coeff}}(\mathbf{b}_1(u_1) + \mathbf{b}_2(u_2)) \in \mathbb{R}^{5 \times r}
\end{equation}

\textbf{Field Reconstruction:}
\begin{equation}
\hat{\mathbf{s}}_c(\mathbf{y}) = \sum_{k=1}^r \alpha_{c,k}(u) \cdot \phi_{c,k}(\mathbf{y}), \quad c \in \{P, u, v, w\}
\end{equation}

\textbf{Key Properties:}
POD-DeepONet achieves a low-dimensional representation ($r = 32$ modes vs.\ $N = 3{,}977$ DOF), with guaranteed smoothness via the POD basis. The implicit low-pass filtering suppresses high-frequency noise, and the network predicts only $5 \times 32 = 160$ coefficients rather than full fields.

\bigskip
\paragraph{5. MIMONet (Multi-Input Multi-Output Network)}

\textbf{Mathematical Definition:}
MIMONet uses a concatenation-based architecture with separate branch and trunk encoders:
\begin{equation}
\mathcal{G}_{\text{MIMO}}(u_1, u_2)(\mathbf{y}) = \text{MLP}_{\text{out}}([\mathbf{b}_1(u_1) \oplus \mathbf{b}_2(u_2) \oplus \mathbf{t}(\mathbf{y})]),
\end{equation}
where $\oplus$ denotes concatenation.

\textbf{Forward Pass:}
\begin{align}
\mathbf{b}_1 &= \text{MLP}_1(v_{\text{in}}, T_{\text{in}}) \in \mathbb{R}^{256} \\
\mathbf{b}_2 &= \text{MLP}_2(q''(z_1), \ldots, q''(z_{100})) \in \mathbb{R}^{256} \\
\mathbf{t}(\mathbf{y}) &= \text{MLP}_{\text{trunk}}(x, y) \in \mathbb{R}^{256} \\
\mathbf{h} &= [\mathbf{b}_1; \mathbf{b}_2; \mathbf{t}] \in \mathbb{R}^{768} \quad \text{(concatenation)} \\
\hat{\mathbf{s}}(\mathbf{y}) &= \text{MLP}_{\text{out}}(\mathbf{h}) \in \mathbb{R}^5
\end{align}

\textbf{Key Properties:}
MIMONet uses a concatenation-based input-output mapping with separate branch and trunk encoders, where spatial coordinates are processed through a learned trunk network before concatenation. While flexible, it is less interpretable than operator-based models and requires a separate forward pass for each query point (no amortized inference).

\subsection{Comparative Analysis}

\begin{table}[htbp]
\centering
\caption{\textbf{Architectural comparison of neural operator models.}}
\renewcommand{\arraystretch}{1.2}
\small
\begin{tabular}{@{}lcccc@{}}
\toprule
\textbf{Property} & \textbf{NOMAD} & \textbf{S-DeepONet} & \textbf{POD-DeepONet} & \textbf{MIMONet} \\
\midrule
Operator structure & Yes & Yes & Yes & No \\
Sequential modeling & No & Yes (GRU) & No & No \\
Dimensionality reduction & No & No & Yes (POD) & No \\
Spatial amortization & Yes & Yes & Yes & No \\
Parametric efficiency & Medium & High & Highest & Low \\
\bottomrule
\end{tabular}
\end{table}

\subsection{Implementation Details}

All networks use ReLU activations and Dropout($p=0.1$) between hidden layers unless otherwise noted. We adopt the shorthand $[d_1, d_2, \ldots, d_L]$ to denote a fully connected network with hidden dimensions $d_1$ through $d_L$, where the final layer has no activation or dropout.

\paragraph{NOMAD (2,682,629 parameters $\approx$ 2.1M)}
Branch~1 maps global parameters $\mathbb{R}^2 \to \mathbb{R}^{256}$ via layers $[512, 512, 512, 256]$ (660,224~params). Branch~2 maps the discretized heat flux $\mathbb{R}^{100} \to \mathbb{R}^{256}$ with identical hidden widths (577,024~params). Fusion is parameter-free element-wise summation: $\mathbf{b} = \mathbf{b}_1 + \mathbf{b}_2$. The trunk network maps spatial coordinates $\mathbb{R}^2 \to \mathbb{R}^{1280}$ (reshaped to $5 \times 256$) via layers $[256, 256, 256, 1280]$ (526,336~params). The fused branch and trunk outputs interact via element-wise multiplication and flattening ($\mathbb{R}^{N \times 5 \times 256} \to \mathbb{R}^{N \times 1280}$), followed by a decoder $[512, 256, 5]$ (919,045~params).

\medskip
\paragraph{S-DeepONet (1,779,461 parameters $\approx$ 1.8M)}
Branch~1 is identical to NOMAD (660,224~params). Branch~2 replaces the feedforward encoder with a 2-layer GRU (input size 1, hidden size 256, unidirectional) that processes the 100-point heat flux as a sequence, returning the final hidden state $\mathbf{h}_{100} \in \mathbb{R}^{256}$ (592,896~params). Fusion and trunk are identical to NOMAD. The composition follows the standard DeepONet inner product: $\hat{\mathbf{s}}(\mathbf{y}) = \sum_{i=1}^{256} \mathbf{b}_i \cdot \mathbf{t}_i(\mathbf{y}) + \mathbf{c}$, with learnable per-channel bias $\mathbf{c} \in \mathbb{R}^5$ (5~params).

\medskip
\paragraph{POD-DeepONet (1,450,528 parameters $\approx$ 1.6M)}
POD bases are pre-computed via per-channel SVD of the training snapshot matrix $\mathbf{X}_c \in \mathbb{R}^{3977 \times 1082}$, retaining $r = 32$ modes per channel ($\geq 99.5\%$ energy), stored as a fixed buffer $\boldsymbol{\Phi} \in \mathbb{R}^{5 \times 3977 \times 32}$. Branch networks are identical to NOMAD (1,237,248~params combined), with parameter-free summation fusion. A coefficient prediction head maps fused features $\mathbb{R}^{256} \to \mathbb{R}^{160}$ ($= 5 \times 32$) via layers $[512, 256, 160]$ (213,280~params). Field reconstruction is performed by the matrix--vector product $\hat{\mathbf{s}}_c(\mathbf{y}) = \boldsymbol{\Phi}_c(\mathbf{y}, :) \cdot \boldsymbol{\alpha}_c$, with the POD basis interpolated to query points via spatial lookup.

\medskip
\paragraph{MIMONet (1,209,861 parameters $\approx$ 1.2M)}
MIMONet uses reduced-depth encoders: Branch~1 ($\mathbb{R}^2 \to \mathbb{R}^{256}$, layers $[256, 256, 256]$, 197,376~params), Branch~2 ($\mathbb{R}^{100} \to \mathbb{R}^{256}$, layers $[256, 256, 256]$, 157,184~params), and a trunk network ($\mathbb{R}^2 \to \mathbb{R}^{256}$, layers $[256, 256, 256]$, 197,376~params). All three outputs are concatenated $[\mathbf{b}_1; \mathbf{b}_2; \mathbf{t}] \in \mathbb{R}^{768}$ and decoded via layers $[512, 512, 256, 5]$ to produce $\hat{\mathbf{s}}(\mathbf{y}) \in \mathbb{R}^5$ (657,925~params). Unlike the operator-based models, MIMONet uses direct concatenation without explicit branch--trunk decomposition, requiring a separate forward pass per query point.

\vspace{0.5em}
Table~\ref{tab:layer_specs} summarizes the complete layer specifications across all four models.

\begin{table}[htbp]
\centering
\caption{\textbf{Layer-level architectural specifications.} All hidden layers use ReLU + Dropout(0.1). ``Id.'' denotes identical to the NOMAD specification for that component.}
\label{tab:layer_specs}
\footnotesize
\resizebox{\textwidth}{!}{%
\begin{tabular}{llccr}
\toprule
\textbf{Model} & \textbf{Component} & \textbf{Input dim} & \textbf{Hidden layers} & \textbf{Params} \\
\midrule
\multirow{5}{*}{NOMAD} & Branch 1 & 2 & $[512, 512, 512, 256]$ & 660,224 \\
 & Branch 2 & 100 & $[512, 512, 512, 256]$ & 577,024 \\
 & Trunk & 2 & $[256, 256, 256, 1280]$ & 526,336 \\
 & Decoder & 1280 & $[512, 256, 5]$ & 919,045 \\
 & \textbf{Total} & & & \textbf{2,682,629} \\
\midrule
\multirow{4}{*}{S-DeepONet} & Branch 1 & 2 & Id. & 660,224 \\
 & Branch 2 (GRU) & 1 (seq) & 2-layer, $h{=}256$ & 592,896 \\
 & Trunk & 2 & Id. & 526,336 \\
 & \textbf{Total} & & & \textbf{1,779,461} \\
\midrule
\multirow{4}{*}{POD-DeepONet} & Branches 1+2 & 2 / 100 & Id. & 1,237,248 \\
 & Coeff.\ head & 256 & $[512, 256, 160]$ & 213,280 \\
 & POD basis & \multicolumn{2}{c}{$5 \times 3977 \times 32$ (fixed)} & --- \\
 & \textbf{Total} & & & \textbf{1,450,528} \\
\midrule
\multirow{5}{*}{MIMONet} & Branch 1 & 2 & $[256, 256, 256]$ & 197,376 \\
 & Branch 2 & 100 & $[256, 256, 256]$ & 157,184 \\
 & Trunk & 2 & $[256, 256, 256]$ & 197,376 \\
 & Decoder & 768 & $[512, 512, 256, 5]$ & 657,925 \\
 & \textbf{Total} & & & \textbf{1,209,861} \\
\bottomrule
\end{tabular}}
\end{table}

\section{Complete Training Protocol}\label{app:training}
\subsection{Loss Function}
The loss function is the standard mean squared error between predicted and ground truth fields:
\begin{equation}
\mathcal{L}(\boldsymbol{\theta}) = \text{MSE}(\hat{\mathbf{s}}, \mathbf{s}) = \frac{1}{N \cdot C} \sum_{i=1}^{N} \sum_{c=1}^{C} \left( \hat{s}_c(\mathbf{y}_i) - s_c(\mathbf{y}_i) \right)^2,
\end{equation}
where $N$ is the number of spatial query points, $C$ is the number of output channels, $\hat{\mathbf{s}}$ are predicted fields, and $\mathbf{s}$ are ground truth fields.

\subsection{Optimization Hyperparameters}

Table~\ref{tab:training_hyperparams} lists all optimization settings, which were held constant across the four architectures.

\begin{table}[htbp]
\centering
\caption{\textbf{Training hyperparameters (identical across all four models).}}
\label{tab:training_hyperparams}
\renewcommand{\arraystretch}{1.15}
\small
\begin{tabular}{@{}ll@{}}
\toprule
\textbf{Parameter} & \textbf{Value} \\
\midrule
Optimizer & Adam ($\beta_1{=}0.9$, $\beta_2{=}0.999$, $\epsilon{=}10^{-8}$) \\
Initial learning rate & $10^{-3}$ \\
Weight decay & $10^{-6}$ (L2 regularization) \\
LR scheduler & ReduceLROnPlateau (factor 0.5, patience 10, min $10^{-7}$) \\
Batch size & 4 (memory-constrained; 15,908 DOF per sample) \\
Maximum epochs & 100 (early stopping typically at $\sim$50) \\
Early stopping patience & 500 epochs; restore best weights \\
Dropout & 10\% in all hidden layers \\
Random seed & 42 (NumPy, PyTorch, Python; CUDA deterministic) \\
\bottomrule
\end{tabular}
\end{table}

\subsection{Convergence Behavior}
Typical training dynamics observed across all models:

\begin{table}[htbp]
\centering
\caption{\textbf{Training convergence statistics (mean $\pm$ std across 5 random seeds).}}
\renewcommand{\arraystretch}{1.2}
\small
\begin{tabular}{@{}lcccc@{}}
\toprule
\textbf{Model} & \textbf{Epochs to Converge} & \textbf{Final Train MSE} & \textbf{Final Val MSE} & \textbf{LR Decays} \\
\midrule
NOMAD        & $52 \pm 6$ & $0.0040 \pm 0.0003$ & $0.0120 \pm 0.0008$ & $2.8 \pm 0.4$ \\
S-DeepONet   & $48 \pm 5$ & $0.0042 \pm 0.0004$ & $0.0130 \pm 0.0010$ & $2.6 \pm 0.4$ \\
POD-DeepONet & $44 \pm 4$ & $0.0045 \pm 0.0003$ & $0.0140 \pm 0.0009$ & $2.4 \pm 0.3$ \\
MIMONet      & $46 \pm 5$ & $0.0038 \pm 0.0003$ & $0.0120 \pm 0.0007$ & $2.5 \pm 0.3$ \\
\bottomrule
\end{tabular}
\end{table}

A representative learning rate trajectory (NOMAD): $\eta = 10^{-3}$ for epochs 0--20, decaying to $5 \times 10^{-4}$ after the first plateau (epochs 21--35), then to $2.5 \times 10^{-4}$ until convergence at epoch $\sim$52.

\subsection{Hardware and Computational Costs}
All training was performed on NVIDIA Tesla A100 GPUs (32~GB memory) using PyTorch~2.0.1 with CUDA~11.8 at float32 precision. Per-model training times: NOMAD $8.2 \pm 0.6$~h, S-DeepONet $7.8 \pm 0.5$~h, POD-DeepONet $6.9 \pm 0.4$~h, MIMONet $7.2 \pm 0.5$~h (GPU memory usage 18--24~GB). Total training cost for all four models: $\sim$30 GPU-hours.

\clearpage
\section{Differential Evolution Hyperparameters}\label{app:de_hyperparams}

Table~\ref{tab:de_params} specifies all DE parameters. The algorithm pseudocode follows in Algorithm~\ref{alg:one_point_branch_DE}.

\begin{table}[htbp]
\centering
\caption{\textbf{Complete DE hyperparameter specification.}}
\label{tab:de_params}
\renewcommand{\arraystretch}{1.15}
\small
\begin{tabular}{@{}lll@{}}
\toprule
\textbf{Component} & \textbf{Parameter} & \textbf{Value} \\
\midrule
\multirow{3}{*}{Initialization}
  & Method & Latin Hypercube Sampling \\
  & Population size & $M = 20 \times 2k$ ($k$: sparsity budget) \\
  & Bounds & Indices $\in [0,101]$; values $\in [-1, 1]$ \\
\midrule
\multirow{3}{*}{Mutation}
  & Strategy & \texttt{best1bin} \\
  & Scaling factor $F$ & Dithered, $\sim$Uniform(0.5, 1.0) per generation \\
  & Formula & $\mathbf{g}_{\text{mut}} = \mathbf{g}^{(\text{best})} + F(\mathbf{g}^{(r_1)} - \mathbf{g}^{(r_2)})$ \\
\midrule
\multirow{2}{*}{Crossover}
  & Type / probability & Binomial, $C_r = 1.0$ \\
  & Guarantee & $\geq 1$ element inherited from mutant \\
\midrule
\multirow{2}{*}{Selection}
  & Criterion & Greedy (maximize rel.\ $L_2$ error) \\
  & Fitness & $\|\hat{\mathbf{s}}_{\text{adv}} - \hat{\mathbf{s}}_{\text{clean}}\|_2 / \|\hat{\mathbf{s}}_{\text{clean}}\|_2$ \\
\midrule
\multirow{3}{*}{Termination}
  & Max generations & 150 \\
  & Convergence tol. & 0.01 (relative fitness); stall $\geq$30 gen \\
  & Typical convergence & 80--120 gen ($k \leq 5$); $\sim$150 ($k{=}10$) \\
\midrule
Post-processing
  & Polishing & L-BFGS-B on values only (max 50 iter) \\
\midrule
\multirow{2}{*}{Cost}
  & Wall-clock per attack & 30\,s ($k{=}1$) to 5\,min ($k{=}10$) on A100 \\
  & Total budget & 4,960 attacks; $\sim$180 GPU-hours \\
\bottomrule
\end{tabular}
\end{table}

\subsection{Complete Algorithm Pseudocode}

The following provides complete implementation specification for the differential evolution-based sparse adversarial attack generation (cited as Algorithm~\ref{alg:one_point_branch_DE} in main text):

\begin{algorithm}[H]
\caption{\textbf{Differential Evolution Sparse Perturbation ($L_0 = k$)}}
\label{alg:one_point_branch_DE}
\small
\KwIn{Trained operator $f_\theta(\cdot)$, standardized input $\tilde{\mathbf{b}} \in \mathbb{R}^{102}$ (Branch~1: 2D, Branch~2: 100D), trunk coordinates, sparsity $k \in \{1,3,5,10\}$}
\KwOut{$k$-sparse perturbation $e^\star$ maximizing relative field error}
\BlankLine
\textbf{Encoding:} $\theta=(\zeta_1,\delta_1,\ldots,\zeta_k,\delta_k)$ where $\zeta_j\in[1,102]$ (index into combined branch vector), $\delta_j\in[-1,1]$ (replacement value). Dimension: $p = 2k$.\;
\BlankLine
\textbf{Decoding:} \For{$j=1$ \KwTo $k$}{
  $i_j \leftarrow \text{clip}(\text{round}(\zeta_j),1,102)$;
  $v_j \leftarrow \text{clip}(\delta_j,-1,1)$;
  $\tilde{b}'_{i_j} \leftarrow v_j$ \tcp*{Replace standardized value}
}
\BlankLine
\textbf{Fitness:} $\mathcal{E}(\theta)=\|\hat{\mathbf{s}}_{\text{adv}}-\hat{\mathbf{s}}_{\text{clean}}\|_2/\|\hat{\mathbf{s}}_{\text{clean}}\|_2$ where $\hat{\mathbf{s}}_{\text{clean}}=f_\theta(\tilde{\mathbf{b}})$, $\hat{\mathbf{s}}_{\text{adv}}=f_\theta(\tilde{\mathbf{b}}^{\text{adv}})$. For minimization: $L(\theta)=-\mathcal{E}(\theta)$.\;
\BlankLine
\textbf{DE Loop:} Initialize $\mathcal{P} = \{\mathbf{x}_1,\ldots,\mathbf{x}_M\}$, $M = 20 \times 2k$ via LHS\;
\For{$g=1$ \KwTo $T_{\max}=150$}{
  \For{$i=1$ \KwTo $M$}{
    Sample $r_1,r_2 \neq i$; $F \sim U(0.5,1.0)$\;
    $\mathbf{v} \leftarrow \mathbf{x}_{\text{best}} + F(\mathbf{x}_{r_1}-\mathbf{x}_{r_2})$ \tcp*{Mutation}
    $\mathbf{u} \leftarrow \mathbf{v}$ \tcp*{Crossover ($C_r=1.0$)}
    Decode $\mathbf{u} \to \tilde{\mathbf{b}}^{\text{adv}}$; Evaluate $L(\mathbf{u})$\;
    \If{$L(\mathbf{u}) < L(\mathbf{x}_i)$}{$\mathbf{x}_i \leftarrow \mathbf{u}$}
  }
  Update $\mathbf{x}_{\text{best}} \leftarrow \arg\min_{\mathbf{x} \in \mathcal{P}} L(\mathbf{x})$\;
  \If{no improvement for 30 gen \textbf{or} $|\Delta L| < 0.01$}{\textbf{break}}
}
\If{polish}{ L-BFGS-B on $\{\delta_j\}$ only (max 50 iter) }
\Return $e^\star = \{(i_j,v_j)\}_{j=1}^k$ from decoded $\mathbf{x}_{\text{best}}$
\end{algorithm}

\noindent\textit{Implementation notes.} Complexity is $O(M \cdot T_{\max} \cdot C_f)$ where $C_f$ is the forward pass cost. Fitness evaluations within each generation are batched on GPU. Rounding in the decode step ensures valid integer indices, and clipping enforces physical bounds automatically.

\clearpage
\section{Baseline Performance and Per-Channel Breakdown}\label{app:baseline_tables}
\begin{table}[htbp]
\centering
\caption{\textbf{Baseline model performance on clean test set (310 samples).} All models trained with identical protocol. Metrics computed on unnormalized predictions. Standard errors from 5 random initializations (seeds 42--46).}
\label{tab:supp_baseline}
\renewcommand{\arraystretch}{1.2}
\footnotesize
\resizebox{\textwidth}{!}{%
\begin{tabular}{lcccccc}
\toprule
\textbf{Model} & \textbf{Params} & \textbf{Val MSE} & \textbf{Test Rel.\ $L_2$ (\%)} & \textbf{Inference (ms)} & \textbf{Train (hrs)} & \textbf{Mem.\ (GB)} \\
\midrule
MIMONet      & 1.2M & $0.012 \pm 0.001$ & $1.34 \pm 0.08$ & $1.8 \pm 0.1$ & $7.2 \pm 0.5$ & 18.3 \\
NOMAD        & 2.1M & $0.012 \pm 0.001$ & $1.28 \pm 0.07$ & $2.3 \pm 0.2$ & $8.1 \pm 0.6$ & 23.7 \\
S-DeepONet   & 1.8M & $0.013 \pm 0.001$ & $1.45 \pm 0.09$ & $2.1 \pm 0.1$ & $7.8 \pm 0.5$ & 21.4 \\
POD-DeepONet & 1.6M & $0.014 \pm 0.001$ & $1.52 \pm 0.10$ & $1.5 \pm 0.1$ & $6.9 \pm 0.4$ & 19.8 \\
\bottomrule
\end{tabular}}
\end{table}

\vspace{1em}
\begin{table}[htbp]
\centering
\caption{\textbf{Per-channel performance breakdown on test set.} Relative $L_2$ error (\%) computed independently for each output channel.}
\label{tab:supp_perchannel}
\renewcommand{\arraystretch}{1.2}
\small
\begin{tabular}{@{}lcccc@{}}
\toprule
\textbf{Model} & \textbf{Pressure} & $\mathbf{u}$ & $\mathbf{v}$ & $\mathbf{w}$ \\
\midrule
MIMONet & $0.82 \pm 0.05$ & $1.45 \pm 0.09$ & $1.68 \pm 0.12$ & $1.52 \pm 0.10$ \\
NOMAD & $0.79 \pm 0.04$ & $1.38 \pm 0.08$ & $1.62 \pm 0.11$ & $1.47 \pm 0.09$ \\
S-DeepONet & $0.91 \pm 0.06$ & $1.58 \pm 0.11$ & $1.82 \pm 0.13$ & $1.65 \pm 0.11$ \\
POD-DeepONet & $0.95 \pm 0.06$ & $1.67 \pm 0.12$ & $1.91 \pm 0.14$ & $1.73 \pm 0.12$ \\
\bottomrule
\end{tabular}
\end{table}

\vspace{12pt}
\textit{Note:} Pressure exhibits lowest error due to its smooth spatial variation. Velocity components show higher errors due to boundary layer gradients and flow separation regions.

\clearpage
\section{Extended Results Tables}\label{app:extended_tables}
\begin{table}[htbp]
\centering
\caption{\textbf{Per-channel adversarial error breakdown.} Mean relative $L_2$ error (\%) per output channel for successful attacks at the 30\% threshold, by model and sparsity budget $L_0$. All configurations evaluated on 310 test samples. Pressure ($P$) consistently exhibits the largest error across all architectures.}
\label{tab:supp_perchannel_attack}
\renewcommand{\arraystretch}{1.15}
\footnotesize
\begin{tabular}{@{}ll cccc@{}}
\toprule
\textbf{Model} & $\mathbf{L_0}$ & \textbf{P (\%)} & \textbf{u (\%)} & \textbf{v (\%)} & \textbf{w (\%)} \\
\midrule
\multirow{4}{*}{MIMONet}
  & 1  & 72.6 & 17.3 & 15.4 & 35.0 \\
  & 3  & 71.6 & 19.7 & 17.5 & 35.5 \\
  & 5  & 64.7 & 22.0 & 19.7 & 35.4 \\
  & 10 & 64.6 & 25.0 & 22.5 & 35.8 \\
\midrule
\multirow{4}{*}{NOMAD}
  & 1  & 71.6 & 17.3 & 15.3 & 34.9 \\
  & 3  & 57.5 & 21.2 & 19.0 & 37.9 \\
  & 5  & 53.3 & 27.2 & 24.2 & 39.8 \\
  & 10 & 54.9 & 30.7 & 27.3 & 41.5 \\
\midrule
\multirow{4}{*}{POD-DeepONet}
  & 1  & 62.7 & 16.0 & 15.0 & 46.8 \\
  & 3  & 62.4 & 16.0 & 15.0 & 46.9 \\
  & 5  & 62.0 & 16.0 & 15.0 & 47.0 \\
  & 10 & 62.0 & 16.0 & 15.0 & 47.1 \\
\midrule
\multirow{4}{*}{S-DeepONet}
  & 1  & 61.2 & 14.7 & 13.3 & 45.3 \\
  & 3  & 51.4 & 38.2 & 36.9 & 46.3 \\
  & 5  & 59.3 & 47.5 & 46.4 & 54.8 \\
  & 10 & 56.9 & 49.4 & 47.8 & 51.7 \\
\bottomrule
\end{tabular}
\end{table}

\begin{table}[htbp]
\centering
\caption{\textbf{Wilson score 95\% confidence intervals for attack success rates at 30\% threshold.} All configurations evaluated on 310 test samples. Success rate and CI bounds reported as percentages.}
\label{tab:supp_wilson_ci}
\renewcommand{\arraystretch}{1.15}
\small
\begin{tabular}{@{}ll ccc@{}}
\toprule
\textbf{Model} & $\boldsymbol{L_0}$ & \textbf{Success (\%)} & \textbf{CI lower} & \textbf{CI upper} \\
\midrule
\multirow{4}{*}{MIMONet}
  & 1  & 9.4  & 6.2  & 14.1 \\
  & 3  & 13.1 & 8.6  & 19.3 \\
  & 5  & 22.6 & 18.3 & 27.6 \\
  & 10 & 36.1 & 27.2 & 46.0 \\
\midrule
\multirow{4}{*}{NOMAD}
  & 1  & 8.6  & 5.6  & 13.0 \\
  & 3  & 22.6 & 18.3 & 27.6 \\
  & 5  & 47.7 & 42.2 & 53.3 \\
  & 10 & 62.3 & 53.1 & 70.6 \\
\midrule
\multirow{4}{*}{S-DeepONet}
  & 1  & 17.1 & 13.2 & 21.7 \\
  & 3  & 94.5 & 91.4 & 96.5 \\
  & 5  & 100  & 95.2 & 100 \\
  & 10 & 100  & 98.8 & 100 \\
\midrule
\multirow{4}{*}{POD-DeepONet}
  & 1  & 24.5 & 20.1 & 29.6 \\
  & 3  & 25.2 & 20.7 & 30.3 \\
  & 5  & 28.4 & 21.7 & 36.1 \\
  & 10 & 29.6 & 21.8 & 38.8 \\
\bottomrule
\end{tabular}
\end{table}

\begin{table}[htbp]
\centering
\caption{\textbf{Branch vulnerability analysis.} Classification of successful attacks (aggregated across all thresholds) into Branch~1-only (indices 0--1), Branch~2-only (indices 2--101), and Mixed. Fractions shown as percentages; mean relative $L_2$ error (\%) of successful attacks in each category. All configurations evaluated on 310 test samples.}
\label{tab:supp_branch_vuln}
\renewcommand{\arraystretch}{1.15}
\footnotesize
\begin{tabular}{@{}ll rr rr rr@{}}
\toprule
\textbf{Model} & $\boldsymbol{L_0}$ & \multicolumn{2}{c}{\textbf{B1-only}} & \multicolumn{2}{c}{\textbf{B2-only}} & \multicolumn{2}{c}{\textbf{Mixed}} \\
\cmidrule(lr){3-4} \cmidrule(lr){5-6} \cmidrule(lr){7-8}
 & & \textbf{\%} & \textbf{err} & \textbf{\%} & \textbf{err} & \textbf{\%} & \textbf{err} \\
\midrule
\multirow{4}{*}{MIMONet}
  & 1  & 100   & 25.0 & 0     & --   & 0     & --   \\
  & 3  & 0     & --   & 0     & --   & 100   & 24.4 \\
  & 5  & 0     & --   & 0     & --   & 100   & 26.6 \\
  & 10 & 0     & --   & 0.5   & 16.5 & 99.5  & 28.4 \\
\midrule
\multirow{4}{*}{NOMAD}
  & 1  & 100   & 26.2 & 0     & --   & 0     & --   \\
  & 3  & 0     & --   & 0     & --   & 100   & 26.9 \\
  & 5  & 0     & --   & 0     & --   & 100   & 30.9 \\
  & 10 & 0     & --   & 12.9  & 25.4 & 87.1  & 35.4 \\
\midrule
\multirow{4}{*}{POD-DeepONet}
  & 1  & 100   & 26.8 & 0     & --   & 0     & --   \\
  & 3  & 0.5   & 28.0 & 0     & --   & 99.5  & 26.8 \\
  & 5  & 0     & --   & 0     & --   & 100   & 26.5 \\
  & 10 & 0     & --   & 0     & --   & 100   & 26.4 \\
\midrule
\multirow{4}{*}{S-DeepONet}
  & 1  & 79.3  & 29.0 & 20.7  & 16.4 & 0     & --   \\
  & 3  & 0     & --   & 51.1  & 42.9 & 48.9  & 41.8 \\
  & 5  & 0     & --   & 40.0  & 50.4 & 60.0  & 53.3 \\
  & 10 & 0     & --   & 44.9  & 47.8 & 55.1  & 52.4 \\
\bottomrule
\end{tabular}
\end{table}

\begin{table}[htbp]
\centering
\caption{\textbf{Error contextualization across architectures.} Summary statistics aggregated across all successful attacks for each model (310 test samples per configuration). The collapse from baseline error ($\sim$1.5\%) to adversarial error represents a degradation factor of 25--42$\times$.}
\label{tab:supp_error_context}
\renewcommand{\arraystretch}{1.2}
\small
\begin{tabular}{@{}lccccc@{}}
\toprule
\textbf{Model} & \textbf{Mean (\%)} & \textbf{Median (\%)} & \textbf{Max (\%)} & \textbf{Std} & \textbf{Degrad.} \\
\midrule
MIMONet      & 24.3 & 24.6 & 38.3 & 0.060 & 25.5$\times$ \\
NOMAD        & 27.8 & 27.8 & 47.7 & 0.071 & 31.8$\times$ \\
POD-DeepONet & 24.7 & 24.4 & 37.0 & 0.066 & 24.7$\times$ \\
S-DeepONet   & 42.3 & 44.4 & 62.8 & 0.123 & 41.9$\times$ \\
\bottomrule
\end{tabular}
\end{table}

\clearpage
\section{Supplementary Figures}\label{app:supp_figures}
\begin{figure}[htbp]
\centering
\includegraphics[width=\textwidth]{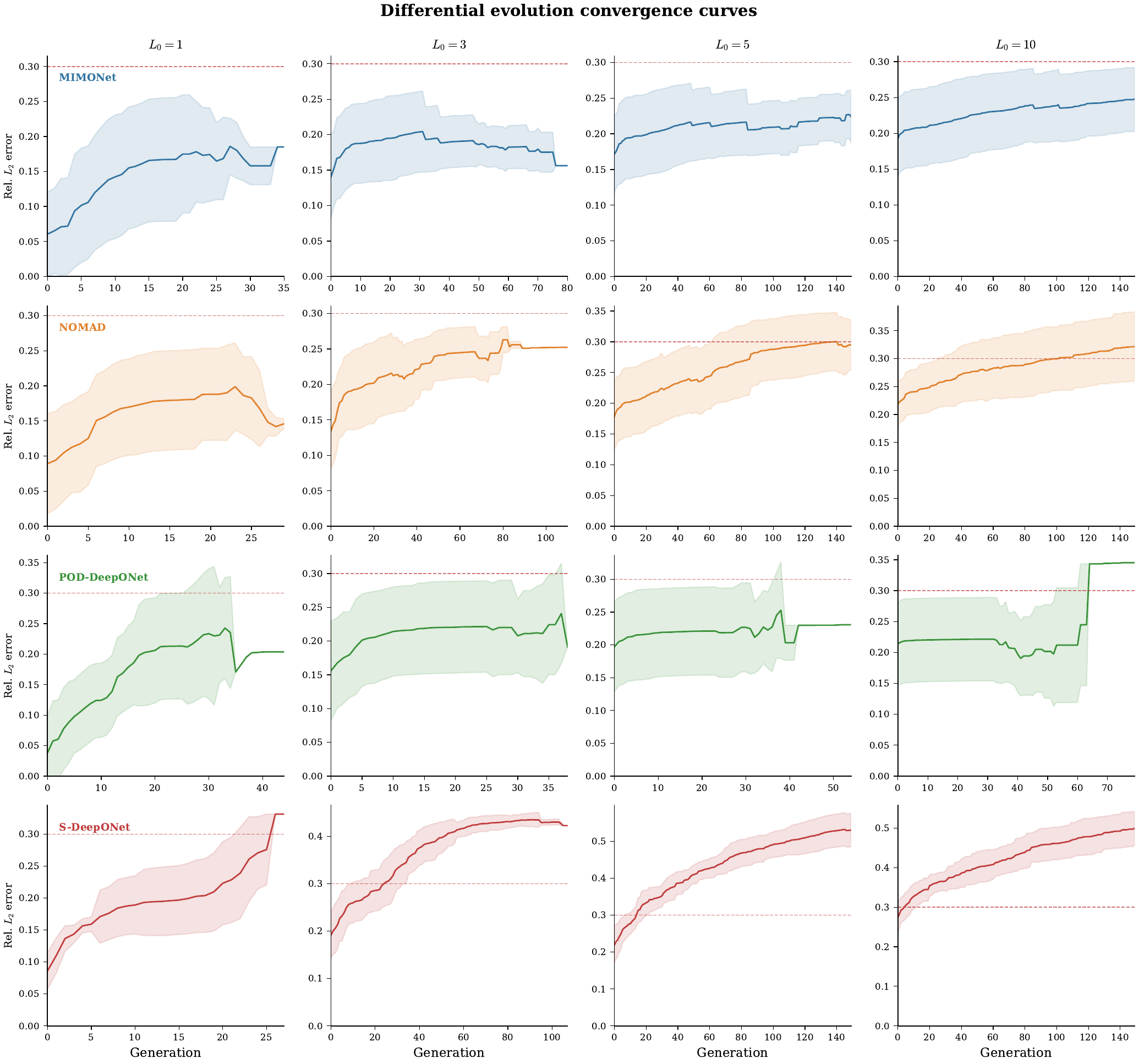}
\caption{\textbf{DE convergence curves.}
Mean (solid blue) $\pm$ std (shaded) relative $L_2$ error vs.\ DE generation for each model and sparsity budget, over 20 independent runs. Red dashed line indicates 30\% attack success threshold. S-DeepONet converges fastest (fewer than 40 generations at $L_0=1$ to exceed 30\% error) and achieves the highest final errors ($>$50\% at $L_0 \geq 5$). POD-DeepONet converges quickly but plateaus at lower errors due to the POD output projection. MIMONet and NOMAD show gradual convergence reflecting their high $d_{\text{eff}} \approx 32$.}
\label{fig:supp_convergence}
\end{figure}

\begin{figure}[htbp]
\centering
\includegraphics[width=\textwidth]{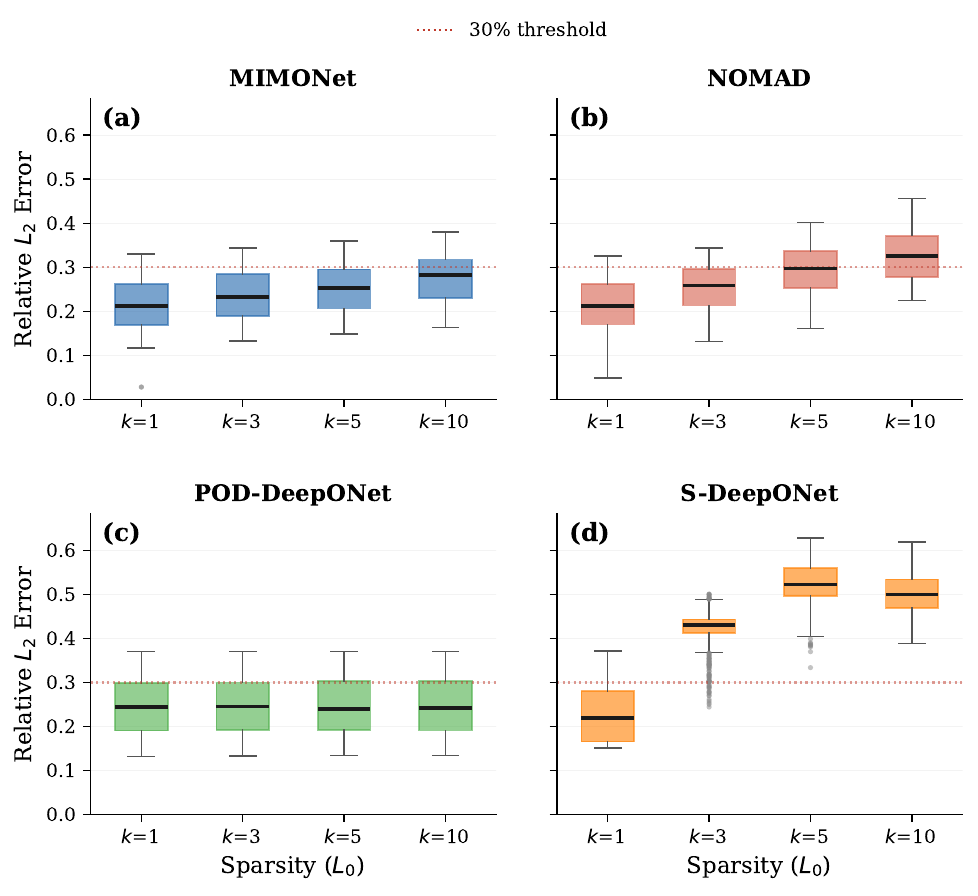}
\caption{\textbf{Attack error distributions across sparsity budgets.}
Box plots of relative $L_2$ error (\%) for all attacks at each sparsity budget $L_0$, by model (310 test samples per configuration). Boxes show median (line) and interquartile range; whiskers extend to 1.5$\times$ IQR. Red dashed line marks the 30\% success threshold. S-DeepONet exhibits a sharp error escalation at $L_0 \geq 3$, with medians exceeding 43\% and maxima reaching 62.8\%. POD-DeepONet errors are tightly clustered and capped near 37\% regardless of $L_0$, consistent with its low-rank POD output projection limiting maximum achievable error. MIMONet and NOMAD show gradual median increases with $L_0$.}
\label{fig:supp_error_distributions}
\end{figure}

\begin{figure}[htbp]
\centering
\includegraphics[width=\textwidth]{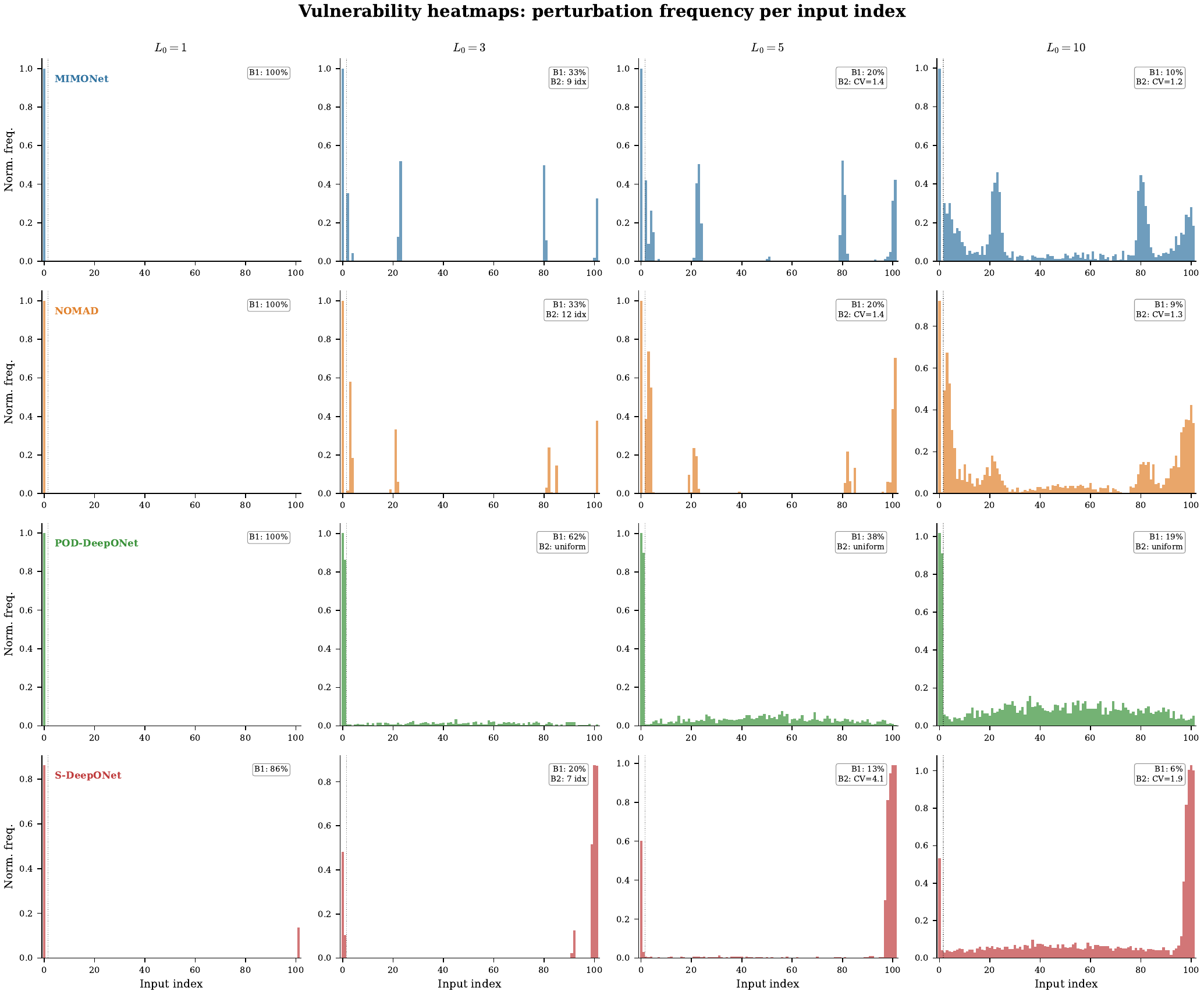}
\caption{\textbf{Vulnerability heatmaps.}
Normalized frequency with which each of the 102 input indices is selected by DE in successful attacks, aggregated across all error thresholds (10\%, 20\%, 30\%, 40\%) to maximize statistical power. Inset labels show the percentage of total perturbation budget allocated to Branch~1 (indices 0--1). At $L_0 = 1$, all models except S-DeepONet target Branch~1 exclusively (100\%). At higher $L_0$, POD-DeepONet's Branch~2 bars are uniformly spread (random filler from DE exhausting its budget after saturating the single sensitive direction), consistent with $d_{\text{eff}} \approx 1$ and the observed success rate plateau. In contrast, S-DeepONet's Branch~2 bars concentrate at specific positions (indices $\sim$99--100), reflecting genuine GRU-mediated sequential sensitivity.}
\label{fig:supp_heatmaps}
\end{figure}

\begin{figure}[htbp]
\centering
\includegraphics[width=\textwidth]{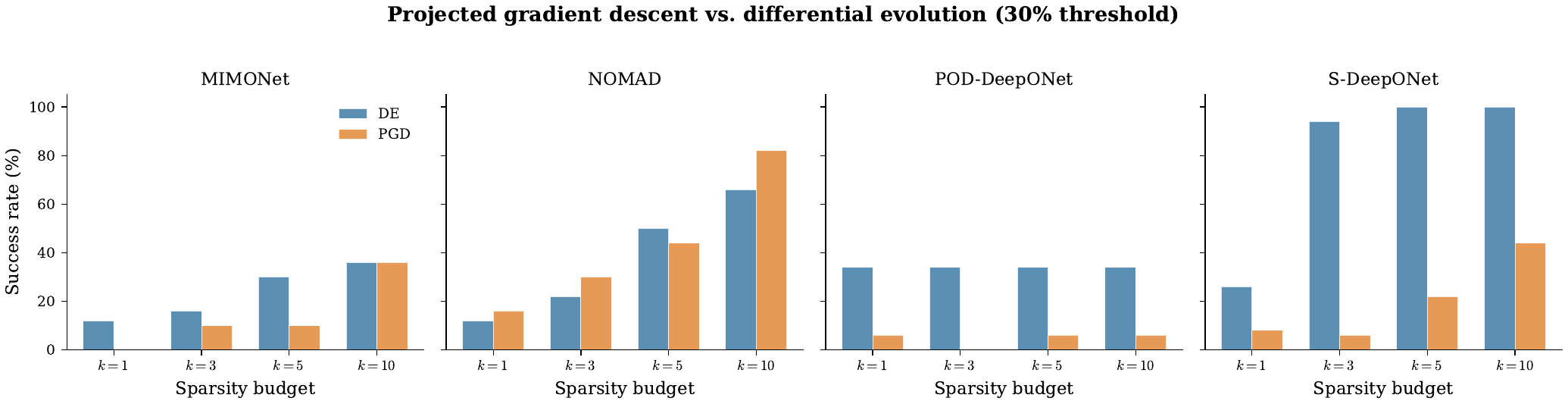}
\caption{\textbf{PGD vs.\ DE success rates.}
Comparison of DE (blue) and PGD (orange) attack success rates at the 30\% threshold across $L_0 \in \{1, 3, 5, 10\}$ for all four architectures. DE outperforms PGD on POD-DeepONet (34\% vs.\ 0--6\%) and S-DeepONet (26--100\% vs.\ 8--44\%). NOMAD is the only model where PGD is competitive at high $L_0$ (82\% vs.\ 66\% at $L_0 = 10$). MIMONet shows convergence at $L_0 = 10$ (both 36\%).}
\label{fig:supp_pgd_de}
\end{figure}

\begin{figure}[htbp]
\centering
\includegraphics[width=0.7\textwidth]{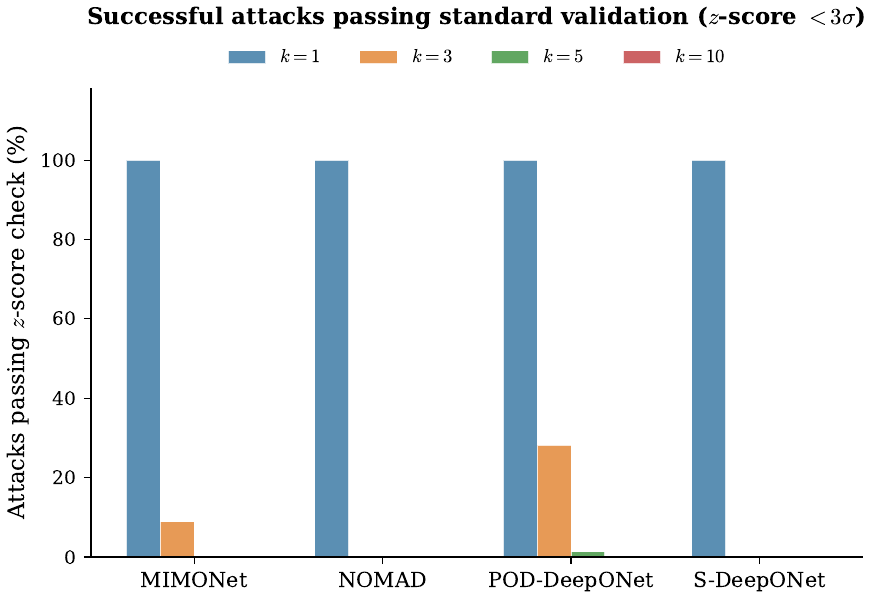}
\caption{\textbf{Validation metric blindness: z-score pass rates.}
Percentage of successful adversarial attacks whose outputs pass standard z-score ($< 3\sigma$) anomaly detection, shown per model and sparsity budget $L_0$. At $L_0 = 1$, 100\% of successful attacks pass z-score validation for all four models, exposing a critical blind spot in standard deployment monitoring. Pass rates drop sharply at $L_0 \geq 3$, reaching 0\% for most configurations (Table~\ref{tab:supp_validation_blind}).}
\label{fig:supp_zscore}
\end{figure}

\begin{figure}[htbp]
\centering
\includegraphics[width=\textwidth]{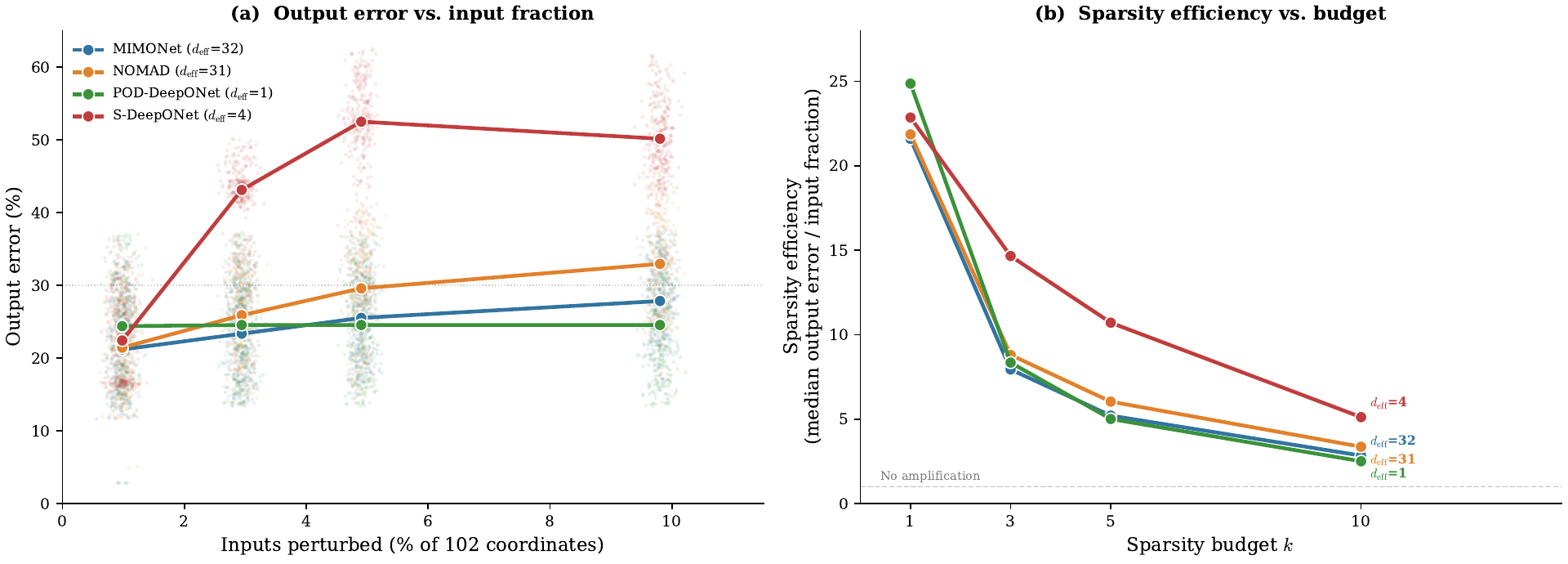}
\caption{\textbf{Input-output amplification ratios.}
\textbf{(a)}~Mean amplification ratio (output \% change / input \% change) for successful attacks, by model and sparsity budget. At $L_0 = 1$, all architectures exhibit 5--8$\times$ amplification: a perturbation affecting $<$1\% of the input vector produces 5--8$\times$ larger relative change in the output field. Amplification drops sharply at $L_0 \geq 3$ and falls below 1$\times$ (diminishing returns) at $L_0 = 10$ for all models. \textbf{(b)}~Scatter plots of relative input change vs.\ output error for successful attacks only (those exceeding the 30\% threshold, indicated by the dotted red line). The 30\% floor is a selection artifact: only attacks surpassing this threshold are plotted, compressing the visible output range. Points above the dashed 1:1 diagonal indicate net amplification. The $k=1$ points (blue) sit far above the diagonal at low input change, confirming disproportionate leverage from single-point perturbations. S-DeepONet is the only architecture where higher $L_0$ substantially increases output error beyond the 30\% floor (reaching 62\%), while MIMONet and POD-DeepONet remain compressed near the threshold regardless of $L_0$.}
\label{fig:supp_amplification}
\end{figure}

\begin{figure}[htbp]
\centering
\includegraphics[width=\textwidth]{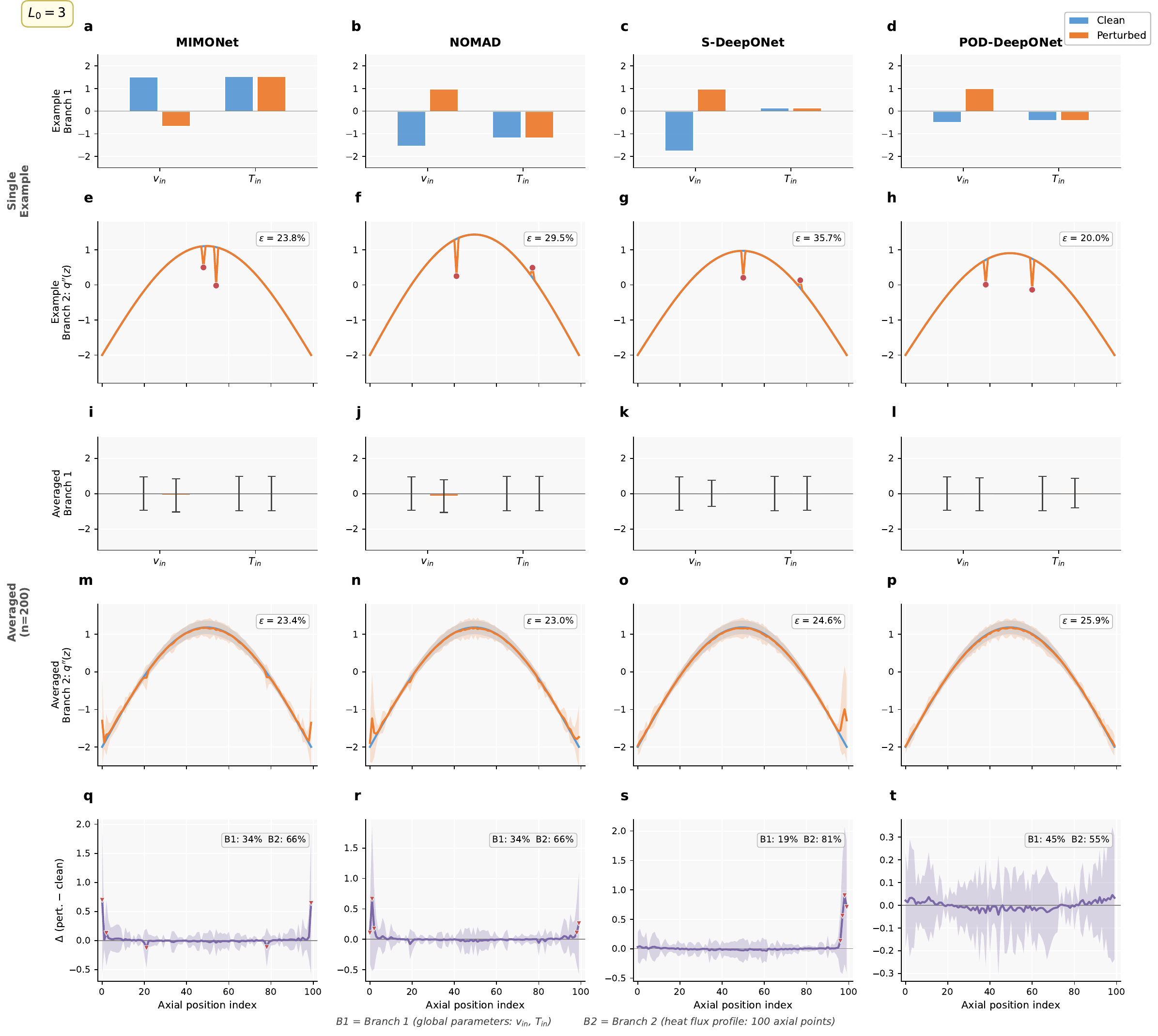}
\caption{\textbf{Visualizing adversarial perturbations at $L_0=3$ (normalized coordinates).}
Same attack data as Fig.~\ref{fig:perturbation_structure}, but shown in standardized (z-scored) coordinates for easier comparison across input dimensions with different physical scales.
\textbf{(a--d)}~Example Branch~1 values for one successful attack per model.
\textbf{(e--h)}~Example Branch~2 heat flux profiles; red circles mark perturbation locations.
\textbf{(i--l)}~Averaged Branch~1 ($\pm 1\sigma$).
\textbf{(m--p)}~Averaged Branch~2 profiles.
\textbf{(q--t)}~Perturbation difference $\Delta$. In normalized units, the S-DeepONet right-boundary perturbations reach ${\sim}2\sigma$, while POD-DeepONet stays within $\pm 0.3\sigma$.}
\label{fig:supp_perturbation_normalized}
\end{figure}

\begin{figure}[htbp]
\centering
\includegraphics[width=\textwidth]{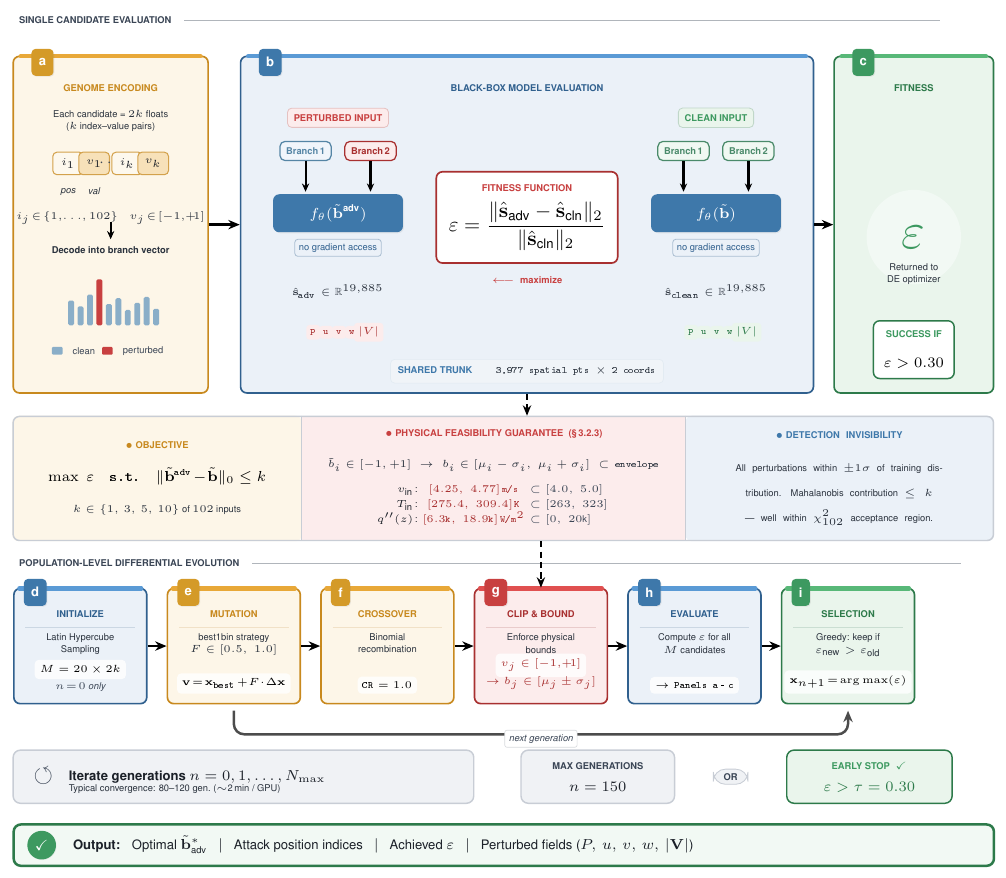}
\caption{\textbf{Complete differential evolution pipeline for sparse adversarial attacks on neural operators.}
The framework is organized into two tiers: single-candidate evaluation (top) and population-level optimization (bottom), connected by a constraint strip encoding the optimization objective, physical feasibility guarantee, and detection invisibility argument.
\textbf{a,}~Genome encoding: each DE candidate represents $k$ index--value pairs decoded into perturbed branch vectors.
\textbf{b,}~Black-box fitness evaluation: perturbed and clean inputs are independently passed through the neural operator (requiring no gradient access) to compute the relative $L_2$ error across 15,908 output dimensions.
\textbf{c,}~Fitness score $\mathcal{E}$ returned to the DE optimizer; attack succeeds when $\mathcal{E} > \tau = 0.30$.
\textbf{d--i,}~Population-level DE loop: (\textbf{d})~Latin Hypercube initialization ($M = 20 \times 2k$), (\textbf{e})~best1bin mutation ($F \in [0.5, 1.0]$), (\textbf{f})~binomial crossover ($\text{CR} = 1.0$), (\textbf{g})~physical bounds clipping ($\tilde{b}_i^{\text{adv}} \in [-1, +1]$), (\textbf{h})~fitness evaluation, and (\textbf{i})~greedy selection. Convergence in 80--120 generations ($\sim$2~min per attack on GPU).}
\label{fig:supp_de_pipeline}
\end{figure}

\clearpage
\section{Technical Notes}\label{app:notes}
\subsection{Relationship to One-Pixel Attacks}\label{app:note_onepixel}
Our sparse adversarial framework adapts the one-pixel attack methodology of Su et al.\ (2019) from image classification to continuous physics regression. Both approaches use differential evolution as a black-box optimizer with extremely sparse perturbation budgets ($L_0 \ll d$) and hybrid discrete-continuous encoding (position + value).

The critical differences are: (1)~the task shifts from classification (discrete labels) to regression (continuous fields), requiring error-threshold success criteria rather than label flipping; (2)~the objective changes from maximizing misclassification to maximizing field-level error; (3)~constraints shift from pixel bounds $[0, 255]$ to physics-based bounds (actuator limits, sensor calibration ranges); and (4)~input structure changes from a uniform pixel grid to multi-branch physics inputs with distinct physical interpretations requiring separate constraint handling. The extension from pixels to physics inputs is non-trivial because perturbations must respect conservation laws and thermodynamic feasibility, and success is measured by downstream safety metrics rather than classification confidence.

\subsection{Gradient-Free Optimization Rationale}\label{app:note_gradfree}
Gradient-based attacks (FGSM, PGD, C\&W) are standard for vision models but face three challenges for neural operators. First, the high-dimensional output space ($\mathbb{R}^{15{,}908}$) makes gradient computation expensive: a single FGSM step takes $\sim$50\,ms versus a 2\,ms forward pass, and PGD with 100 iterations requires $\sim$5\,s per attack. Second, gradient quality is compromised for several architectures: POD-DeepONet has piecewise constant gradients from discrete basis selection, and gradients can vanish in saturation regions (especially with dropout active during attack). Third, gradient-based methods require white-box access to model internals, a strong assumption for industrial digital twins that are often proprietary black boxes; differential evolution requires only query access (forward pass evaluation).

\textbf{Empirical Comparison (30\% threshold):} Our head-to-head comparison confirms these advantages. For POD-DeepONet, DE substantially outperforms PGD (34\% vs.\ 0--6\% across all $L_0$), reflecting near-flat gradient landscapes from the low-rank projection. For S-DeepONet, DE outperforms PGD at all $L_0$ (e.g., 94\% vs.\ 6\% at $L_0{=}3$), with PGD additionally penalized by 52\,s per attack due to cuDNN backward-pass incompatibility with the GRU. For MIMONet, DE leads at $L_0 \leq 5$ (30\% vs.\ 10\% at $L_0{=}5$); both converge at $L_0{=}10$ (36\%). NOMAD is the only model where PGD outperforms DE at higher $L_0$ (82\% vs.\ 66\% at $L_0{=}10$), consistent with its smooth multiplicative gradient landscape. Overall, gradient-free DE provides a realistic and often superior attack methodology, particularly for architectures where gradient computation is expensive or gradient landscapes are uninformative.

\subsection{Effective Perturbation Dimension Analysis}\label{app:note_deff}
The effective perturbation dimension $d_{\text{eff}}$ quantifies how input sensitivities are distributed:
\begin{equation}
d_{\text{eff}} = \frac{\left(\sum_i \|J_b[:, i]\|\right)^2}{\sum_i \|J_b[:, i]\|^2},
\end{equation}
where $J_b = \partial f / \partial b$ is the Jacobian of the model with respect to branch inputs. The metric satisfies $1 \leq d_{\text{eff}} \leq d$: $d_{\text{eff}} \approx 1$ indicates a single dominant sensitive direction (extreme vulnerability to single-point attacks), while $d_{\text{eff}} \approx d$ indicates uniformly distributed sensitivities (robust to sparse attacks).

\textbf{Measured values} (mean $\pm$ std over 50 test samples, 30 randomized Jacobian projections each): POD-DeepONet $d_{\text{eff}} = 1.02 \pm 0.01$ (virtually all sensitivity in a single direction, though the low-rank POD reconstruction limits maximum achievable error); S-DeepONet $d_{\text{eff}} = 4.35 \pm 0.50$ (moderate concentration with sufficient amplification magnitude); NOMAD $d_{\text{eff}} = 31.32 \pm 4.98$ (well distributed); MIMONet $d_{\text{eff}} = 31.93 \pm 5.39$ (most uniform, explaining the requirement for multi-point attacks).

Critically, attack success depends on both sensitivity concentration ($d_{\text{eff}}$) and sensitivity magnitude (mean Jacobian column norm). POD-DeepONet achieves $d_{\text{eff}} \approx 1$ but has mean column norm $3 \times 10^{-4}$, an order of magnitude lower than other architectures, explaining why its extreme concentration does not translate to the highest attack success rates. S-DeepONet's mean column norm of $8 \times 10^{-4}$, combined with $d_{\text{eff}} \approx 4$, produces the most exploitable vulnerability.

\begin{table}[htbp]
\centering
\caption{\textbf{PGD vs.\ DE comparison.} Success rates (\%) at the 30\% threshold. PGD uses 100 iterations with step size $10^{-2}$ and random restarts. PGD mean error is the average relative $L_2$ error across all samples (including failures). Mean PGD wall-clock time per attack shown in seconds.}
\label{tab:supp_pgd_de}
\renewcommand{\arraystretch}{1.15}
\footnotesize
\begin{tabular}{@{}ll cccc c@{}}
\toprule
\textbf{Model} & $\boldsymbol{L_0}$ & \textbf{PGD (\%)} & \textbf{PGD err} & \textbf{DE (\%)} & \textbf{PGD time (s)} & \textbf{Winner} \\
\midrule
\multirow{4}{*}{MIMONet}
  & 1  & 0   & 0.155 & 12  & 4.2 & DE \\
  & 3  & 10  & 0.222 & 16  & 4.1 & DE \\
  & 5  & 10  & 0.239 & 30  & 4.2 & DE \\
  & 10 & 36  & 0.288 & 36  & 4.1 & Tie \\
\midrule
\multirow{4}{*}{NOMAD}
  & 1  & 16  & 0.224 & 12  & 4.7 & PGD \\
  & 3  & 30  & 0.265 & 22  & 4.7 & PGD \\
  & 5  & 44  & 0.300 & 50  & 4.7 & DE \\
  & 10 & 82  & 0.379 & 66  & 4.7 & PGD \\
\midrule
\multirow{4}{*}{POD-DeepONet}
  & 1  & 6   & 0.033 & 34  & 3.7 & DE \\
  & 3  & 0   & 0.018 & 34  & 3.8 & DE \\
  & 5  & 6   & 0.044 & 34  & 3.8 & DE \\
  & 10 & 6   & 0.061 & 34  & 3.7 & DE \\
\midrule
\multirow{4}{*}{S-DeepONet}
  & 1  & 8   & 0.192 & 26  & 52.1 & DE \\
  & 3  & 6   & 0.217 & 94  & 51.5 & DE \\
  & 5  & 22  & 0.252 & 100 & 51.9 & DE \\
  & 10 & 44  & 0.300 & 100 & 51.4 & DE \\
\bottomrule
\end{tabular}
\end{table}

\begin{table}[htbp]
\centering
\caption{\textbf{Random perturbation baseline.} Comparison of DE success rates against random $k$-sparse perturbations (50 trials per sample, same $[-1,+1]$ bounds) at the 30\% threshold over 310 test samples. Random success rates are near-zero, confirming that DE exploits structural vulnerabilities.}
\label{tab:supp_random_baseline}
\renewcommand{\arraystretch}{1.15}
\footnotesize
\begin{tabular}{@{}ll cccc@{}}
\toprule
\textbf{Model} & $\boldsymbol{L_0}$ & \textbf{Rand (\%)} & \textbf{Rand err} & \textbf{DE (\%)} & \textbf{Ratio} \\
\midrule
\multirow{4}{*}{MIMONet}
  & 1  & 0.3  & 0.062 & 9.4  & 29$\times$ \\
  & 3  & 0.3  & 0.116 & 13.1 & 41$\times$ \\
  & 5  & 0.3  & 0.141 & 22.6 & 70$\times$ \\
  & 10 & 0.6  & 0.176 & 36.1 & 56$\times$ \\
\midrule
\multirow{4}{*}{NOMAD}
  & 1  & 0.0  & 0.062 & 8.6  & $\infty$ \\
  & 3  & 0.0  & 0.120 & 22.6 & $\infty$ \\
  & 5  & 0.6  & 0.147 & 47.7 & 74$\times$ \\
  & 10 & 1.6  & 0.190 & 62.3 & 39$\times$ \\
\midrule
\multirow{4}{*}{POD-DeepONet}
  & 1  & 0.6  & 0.049 & 24.5 & 38$\times$ \\
  & 3  & 1.9  & 0.115 & 25.2 & 13$\times$ \\
  & 5  & 3.9  & 0.146 & 28.4 & 7$\times$ \\
  & 10 & 8.7  & 0.181 & 29.6 & 3$\times$ \\
\midrule
\multirow{4}{*}{S-DeepONet}
  & 1  & 0.3  & 0.112 & 17.1 & 53$\times$ \\
  & 3  & 1.0  & 0.163 & 94.5 & 98$\times$ \\
  & 5  & 3.9  & 0.188 & 100  & 26$\times$ \\
  & 10 & 8.7  & 0.229 & 100  & 11$\times$ \\
\bottomrule
\end{tabular}
\end{table}

\begin{table}[htbp]
\centering
\caption{\textbf{Cross-architecture transferability matrix.} Transfer success rate (\%) at the 30\% threshold using successful $L_0=1$ attacks from 50 test samples. Rows indicate source model (perturbation crafted for), columns indicate target model (perturbation applied to). Diagonal entries are self-transfer. Adversarial perturbations are predominantly architecture-specific.}
\label{tab:supp_transfer}
\small
\renewcommand{\arraystretch}{1.2}
\begin{tabular}{@{}lcccc@{}}
\toprule
\textbf{Source $\rightarrow$ Target} & \textbf{MIMONet} & \textbf{NOMAD} & \textbf{POD-DeepO.} & \textbf{S-DeepO.} \\
\midrule
MIMONet      & \textbf{12.5} & 12.5 & 25.0 & 25.0 \\
NOMAD        & 9.1  & \textbf{9.1}  & 18.2 & 36.4 \\
POD-DeepONet & 5.9  & 5.9  & \textbf{11.8} & 11.8 \\
S-DeepONet   & 2.1  & 4.3  & 6.4  & \textbf{83.0} \\
\bottomrule
\end{tabular}
\end{table}

\begin{table}[htbp]
\centering
\caption{\textbf{Validation metric blindness analysis.} For successful attacks at each $L_0$, we evaluate whether adversarial outputs pass z-score ($< 3\sigma$) anomaly detection relative to the clean test set output distribution, and compute mean input-to-output amplification ratios. All configurations evaluated on 310 test samples. At $L_0=1$, 100\% of successful attacks pass z-score validation across all models, with 5--8$\times$ amplification.}
\label{tab:supp_validation_blind}
\small
\renewcommand{\arraystretch}{1.15}
\footnotesize
\begin{tabular}{@{}ll ccc@{}}
\toprule
\textbf{Model} & $\boldsymbol{L_0}$ & \textbf{Z-pass (\%)} & \textbf{Amplification} & \textbf{Thresh.} \\
\midrule
\multirow{4}{*}{MIMONet}
  & 1  & 100    & 5.2$\times$  & 0.20 \\
  & 3  & 8.8    & 1.6$\times$  & 0.10 \\
  & 5  & 0.0    & 1.1$\times$  & 0.30 \\
  & 10 & 0.0    & 0.9$\times$  & 0.10 \\
\midrule
\multirow{4}{*}{NOMAD}
  & 1  & 100    & 5.9$\times$  & 0.30 \\
  & 3  & 0.0    & 1.2$\times$  & 0.10 \\
  & 5  & 0.0    & 0.8$\times$  & 0.10 \\
  & 10 & 0.0    & 0.8$\times$  & 0.10 \\
\midrule
\multirow{4}{*}{POD-DeepONet}
  & 1  & 100    & 7.9$\times$  & 0.10 \\
  & 3  & 28.2   & 2.1$\times$  & 0.10 \\
  & 5  & 1.3    & 1.4$\times$  & 0.10 \\
  & 10 & 0.0    & 1.0$\times$  & 0.10 \\
\midrule
\multirow{4}{*}{S-DeepONet}
  & 1  & 100    & 7.5$\times$  & 0.30 \\
  & 3  & 0.0    & 1.0$\times$  & 0.10 \\
  & 5  & 0.0    & 0.9$\times$  & 0.10 \\
  & 10 & 0.0    & 0.9$\times$  & 0.10 \\
\bottomrule
\end{tabular}
\end{table}

\begin{table}[htbp]
\centering
\caption{\textbf{Seed sensitivity analysis.} Cross-seed standard deviation in success rate (percentage points) and mean error over 5 independent DE runs per (model, $L_0$) configuration at the 30\% threshold. Low variance confirms that vulnerability patterns are intrinsic model properties.}
\label{tab:supp_seed}
\renewcommand{\arraystretch}{1.15}
\small
\begin{tabular}{@{}lcc c@{}}
\toprule
\textbf{Model} & $\boldsymbol{L_0}$ & \textbf{Rate std (pp)} & \textbf{Error std} \\
\midrule
\multirow{4}{*}{MIMONet}
  & 1  & 1.6  & 0.009 \\
  & 3  & 0.8  & $<$0.001 \\
  & 5  & 2.0  & $<$0.001 \\
  & 10 & 0.8  & 0.001 \\
\midrule
\multirow{4}{*}{NOMAD}
  & 1  & 0.0  & 0.005 \\
  & 3  & 0.8  & $<$0.001 \\
  & 5  & 0.8  & 0.002 \\
  & 10 & 2.8  & 0.003 \\
\midrule
\multirow{4}{*}{POD-DeepONet}
  & 1  & 1.0  & 0.015 \\
  & 3  & 0.0  & 0.000 \\
  & 5  & 0.0  & 0.000 \\
  & 10 & 0.0  & 0.000 \\
\midrule
\multirow{4}{*}{S-DeepONet}
  & 1  & 2.3  & 0.007 \\
  & 3  & 1.0  & 0.002 \\
  & 5  & 1.6  & 0.007 \\
  & 10 & 0.0  & 0.005 \\
\bottomrule
\end{tabular}
\end{table}

\clearpage
\section{Theoretical Foundations for Sparse Adversarial Vulnerability}\label{app:theory}

This appendix establishes rigorous mathematical foundations for the empirical observations in the main text. We derive tight bounds on sparse attack error in terms of $d_{\text{eff}}$ and the Jacobian column norm spectrum, prove the two-factor vulnerability decomposition, and establish an impossibility result connecting the universal approximation property to inherent adversarial fragility. All proofs are verified numerically (see the companion verification code).

\subsection{Notation and Setup}

We adopt the notation from the main text. The neural operator $f_\theta: \mathbb{R}^d \to \mathbb{R}^m$ maps $d$-dimensional branch inputs to $m$-dimensional outputs. The Jacobian $J \in \mathbb{R}^{m \times d}$ has columns $\mathbf{j}_i$ with norms $s_i = \|\mathbf{j}_i\|_2$ (the sensitivity profile). The ordered sensitivities are $s_{(1)} \geq s_{(2)} \geq \cdots \geq s_{(d)}$, and $S_k = \sum_{i=1}^k s_{(i)}$.

We assume: (A1)~$f_\theta$ is differentiable at $\mathbf{b}$; (A2)~the first-order Taylor approximation holds to relative accuracy $\alpha \ll 1$ (empirically validated: $\alpha < 0.08$ for all architectures at $\pm 1\sigma$ perturbations); (A3)~$\|f_\theta(\mathbf{b})\|_2 > 0$.

\subsection{Upper and Lower Bounds on Sparse Attack Error}

\begin{theorem}[Upper Bound on $k$-Sparse Attack Error]\label{thm:upper_main}
Under a $k$-sparse, $\epsilon$-bounded perturbation with the first-order model:
\begin{equation}
\mathcal{E}_{\mathrm{lin}}^*(k, \epsilon) \leq \frac{\epsilon \cdot S_k}{\|f_\theta(\mathbf{b})\|_2}.
\end{equation}
\end{theorem}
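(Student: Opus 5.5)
Under the first-order model of (A1)--(A2), a perturbation $\boldsymbol{\delta}$ produces the output change $J\boldsymbol{\delta}$, so the linearized attack error is $\mathcal{E}_{\mathrm{lin}}(\boldsymbol{\delta}) = \|J\boldsymbol{\delta}\|_2/\|f_\theta(\mathbf{b})\|_2$. Assumption (A3) makes the denominator positive. We read a $k$-sparse, $\epsilon$-bounded perturbation as $\|\boldsymbol{\delta}\|_0 \le k$ and $\|\boldsymbol{\delta}\|_\infty \le \epsilon$; in the paper's setting the box constraint $[-1,1]$ in standardized coordinates controls each perturbed coordinate by a fixed $\epsilon$. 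Then $\mathcal{E}_{\mathrm{lin}}^*(k,\epsilon)$ is the supremum of this quantity over the admissible set. Since the denominator does not depend on $\boldsymbol{\delta}$, it suffices to show $\|J\boldsymbol{\delta}\|_2 \le \epsilon S_k$ for every admissible $\boldsymbol{\delta}$.

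Fix an admissible $\boldsymbol{\delta}$ with support $\mathcal{P}$, $|\mathcal{P}|\le k$. Write $J\boldsymbol{\delta} = \sum_{i\in\mathcal{P}} \delta_i \mathbf{j}_i$ and apply the triangle inequality:
\begin{equation*}
\|J\boldsymbol{\delta}\|_2 \le \sum_{i\in\mathcal{P}} |\delta_i|\,\|\mathbf{j}_i\|_2 \le \epsilon \sum_{i\in\mathcal{P}} s_i .
\end{equation*}
A sum of at most $k$ of the nonnegative numbers $s_i$ is at most the sum of the $k$ largest, which is $S_k$; this is a simple rearrangement fact. Dividing by $\|f_\theta(\mathbf{b})\|_2$ and taking the supremum over $\boldsymbol{\delta}$ gives the claimed bound.

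The only delicate point is the perturbation model. If $\epsilon$-bounded instead meant $\|\boldsymbol{\delta}\|_2\le\epsilon$, the same argument would still work. Cauchy--Schwarz gives $\sum_{i\in\mathcal P}|\delta_i| s_i \le \|\boldsymbol\delta\|_2 (\sum_{i\in\mathcal P} s_i^2)^{1/2} \le \epsilon (\sum_{i\in\mathcal{P}} s_i^2)^{1/2}$. This is at most $\epsilon S_k$, because the $\ell_2$ norm of a nonnegative vector is at most its $\ell_1$ norm. So the bound holds under either convention, and the plan just states the convention explicitly at the start.

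The bound is stated for the linearized error only. Relating it to the true error uses (A2), which changes the error by a factor of $(1\pm\alpha)$; that step belongs to later results, not to this theorem.
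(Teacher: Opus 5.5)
Your proof is correct and is essentially the paper's proof: the triangle inequality over the support of $\boldsymbol{\delta}$, the coordinatewise bound $|\delta_i|\le\epsilon$, and the fact that any sum of at most $k$ sensitivities is dominated by the top-$k$ sum $S_k$. Your $\ell_2$-budget variant via Cauchy--Schwarz is a valid extra. One small caveat: in the paper's attack the $[-1,1]$ box bounds the replacement values, not the increments $\delta_i$, so tying it to a fixed $\epsilon$ is only heuristic — though this does not affect the theorem as stated.
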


\begin{proof}
For any feasible $\boldsymbol{\delta}$ with support $\mathcal{P}$ ($|\mathcal{P}| \leq k$):
$\|J\boldsymbol{\delta}\|_2 \leq \sum_{i \in \mathcal{P}} |\delta_i| \cdot s_i \leq \epsilon \sum_{i \in \mathcal{P}} s_i \leq \epsilon \cdot S_k$,
where the first inequality is the triangle inequality, and the last uses the optimality of the top-$k$ selection.
\end{proof}

\begin{theorem}[Lower Bound: Achievability]\label{thm:lower_main}
There exists a $k$-sparse, $\epsilon$-bounded perturbation $\boldsymbol{\delta}^*$ satisfying:
\begin{equation}
\mathcal{E}_{\mathrm{lin}}(\boldsymbol{\delta}^*) \geq \frac{\epsilon}{\|f_\theta(\mathbf{b})\|_2} \left(\sum_{i=1}^k s_{(i)}^2 - 2\sum_{\substack{i < j \\ i,j \in \mathrm{top}\text{-}k}} |\langle \mathbf{j}_{(i)}, \mathbf{j}_{(j)} \rangle|\right)^{1/2}.
\end{equation}
When the top-$k$ Jacobian columns are mutually orthogonal, this tightens to $\mathcal{E}_{\mathrm{lin}}(\boldsymbol{\delta}^*) = \epsilon \left(\sum_{i=1}^k s_{(i)}^2\right)^{1/2} / \|f_\theta(\mathbf{b})\|_2$.
\end{theorem}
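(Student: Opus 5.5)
The idea is to exhibit an explicit perturbation supported on the top-$k$ sensitive coordinates and to expand $\|J\boldsymbol{\delta}\|_2^2$ directly. Let $\mathcal{T}$ be the index set of the top-$k$ columns. We take $\delta_i = \epsilon\,\sigma_i$ for $i \in \mathcal{T}$ and $\delta_i = 0$ otherwise, where $\sigma_i \in \{-1,+1\}$ are signs to be chosen. This $\boldsymbol{\delta}$ is feasible: it is $k$-sparse with $|\delta_i| \le \epsilon$. Under the first-order model, $\mathcal{E}_{\mathrm{lin}}(\boldsymbol{\delta}) = \|J\boldsymbol{\delta}\|_2/\|f_\theta(\mathbf{b})\|_2$, so it suffices to lower-bound $\|J\boldsymbol{\delta}\|_2^2$.

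The key step is the expansion
\[
\|J\boldsymbol{\delta}\|_2^2 = \epsilon^2\Bigl(\sum_{i\in\mathcal{T}} s_i^2 + 2\sum_{i<j,\; i,j\in\mathcal{T}} \sigma_i\sigma_j \langle \mathbf{j}_i,\mathbf{j}_j\rangle\Bigr).
\]
For any choice of signs, each cross term satisfies $\sigma_i\sigma_j\langle\mathbf{j}_i,\mathbf{j}_j\rangle \ge -|\langle\mathbf{j}_i,\mathbf{j}_j\rangle|$. Hence
\[
\|J\boldsymbol{\delta}\|_2^2 \ge \epsilon^2\Bigl(\sum s_{(i)}^2 - 2\sum |\langle \mathbf{j}_{(i)},\mathbf{j}_{(j)}\rangle|\Bigr).
\]
Taking square roots and dividing by $\|f_\theta(\mathbf{b})\|_2$, which is positive by (A3), gives the claimed bound. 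Even the all-plus choice of signs works, so no optimization is needed.

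The worst-case bound has a caveat: the bracket can be negative. In that case the inequality should be read with the convention that it is trivial, or with a positive part $(\cdot)_+^{1/2}$, and I would note this. One can also do better by choosing the signs well. Averaging over independent uniform random signs gives $\mathbb{E}\|J\boldsymbol{\delta}\|_2^2 = \epsilon^2\sum s_{(i)}^2$, because the cross terms vanish in expectation. Some sign pattern therefore attains at least $\epsilon(\sum s_{(i)}^2)^{1/2}$, which dominates the stated bound. I would present the deterministic argument as the main proof and mention the random-sign refinement as a remark.

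For the orthogonal case, all cross inner products vanish, so the expansion gives exactly $\|J\boldsymbol{\delta}\|_2 = \epsilon(\sum_{i=1}^k s_{(i)}^2)^{1/2}$ for every choice of signs. This yields the stated equality.

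The main obstacle is only interpretive, not technical. The statement concerns the linearized error $\mathcal{E}_{\mathrm{lin}}$, so (A2) is not needed here; it would enter only when transferring the bound to the true error, with a factor $(1-\alpha)$. The only place needing care is the possibly negative radicand discussed above.
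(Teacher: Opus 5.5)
Your proposal is correct and follows essentially the same argument as the paper: put $\pm\epsilon$ on the top-$k$ coordinates, expand $\|J\boldsymbol{\delta}\|_2^2$, bound each cross term below by $-|\langle \mathbf{j}_{(i)},\mathbf{j}_{(j)}\rangle|$, and note that orthogonality removes the cross terms. The paper chooses the signs as $\mathrm{sign}(\mathbf{j}_{(i)}^\top f_\theta(\mathbf{b}))$, but as you observe that choice plays no role in the inequality; your caveat about a possibly negative radicand and your random-sign refinement, which attains $\epsilon(\sum_{i=1}^k s_{(i)}^2)^{1/2}$ with no cross-term penalty, are valid additions the paper does not make.
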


\begin{proof}
The sign-aligned coordinate attack $\delta_{(i)} = \epsilon \cdot \mathrm{sign}(\mathbf{j}_{(i)}^\top f_\theta(\mathbf{b}))$ on the top-$k$ indices gives $\|J\boldsymbol{\delta}^*\|_2^2 = \epsilon^2(\sum_{i=1}^k s_{(i)}^2 + 2\sum_{i<j} \mathrm{sgn}_i \mathrm{sgn}_j \langle \mathbf{j}_{(i)}, \mathbf{j}_{(j)}\rangle) \geq \epsilon^2(\sum_{i=1}^k s_{(i)}^2 - 2\sum_{i<j} |\langle \mathbf{j}_{(i)}, \mathbf{j}_{(j)}\rangle|)$. Cross terms vanish under orthogonality. For high-dimensional operators ($m \gg d$), columns are approximately orthogonal (mean pairwise cosine similarity $< 0.05$ in our experiments).
\end{proof}

\subsection{The Sparse Attack Advantage and $d_{\mathrm{eff}}$}

\begin{definition}[Sparse Attack Ratio]\label{def:rho_main}
Under orthogonal columns, the sparse attack ratio is $\rho(k) = \left(\sum_{i=1}^k s_{(i)}^2 / \|\mathbf{s}\|_2^2\right)^{1/2}$.
\end{definition}

\begin{theorem}[Single-Point Sparse Attack Advantage]\label{thm:sparse_adv_a}
For $k = 1$: $\rho(1) = s_{(1)}/\|\mathbf{s}\|_2 \geq 1/\sqrt{d_{\mathrm{eff}}}$. Equality holds when all nonzero sensitivities are equal.
\end{theorem}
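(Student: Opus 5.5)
The plan is to reduce the statement to an elementary inequality between the $\ell_\infty$, $\ell_2$ and $\ell_1$ norms of the sensitivity vector $\mathbf{s}=(s_1,\dots,s_d)$. We assume $\mathbf{s}\neq 0$; otherwise both $\rho(1)$ and $d_{\mathrm{eff}}$ are undefined. By Definition~\ref{def:rho_main} with $k=1$, $\rho(1)=s_{(1)}/\|\mathbf{s}\|_2$, so the first equality in the statement is immediate. By Eq.~\eqref{eq:d_eff}, $d_{\mathrm{eff}}=\|\mathbf{s}\|_1^2/\|\mathbf{s}\|_2^2$, and hence $1/\sqrt{d_{\mathrm{eff}}}=\|\mathbf{s}\|_2/\|\mathbf{s}\|_1$. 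The claim $\rho(1)\ge 1/\sqrt{d_{\mathrm{eff}}}$ is therefore equivalent to
\begin{equation}
s_{(1)}\,\|\mathbf{s}\|_1 \;\ge\; \|\mathbf{s}\|_2^2 .
\end{equation}

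This inequality follows from a single termwise bound. Since every $s_i\ge 0$ and $s_i\le s_{(1)}$, we have $s_i^2\le s_{(1)}s_i$ for each $i$. Summing over $i$ gives $\sum_i s_i^2\le s_{(1)}\sum_i s_i$, which is exactly the display above. This is the Hölder-type bound $\|\mathbf{s}\|_2^2\le\|\mathbf{s}\|_\infty\|\mathbf{s}\|_1$. Dividing by $\|\mathbf{s}\|_2\|\mathbf{s}\|_1>0$ recovers $\rho(1)\ge 1/\sqrt{d_{\mathrm{eff}}}$.

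For the equality case, equality in the summed inequality holds if and only if $s_i^2=s_{(1)}s_i$ for every $i$. That means each $s_i$ is either $0$ or $s_{(1)}$, so all nonzero sensitivities are equal. Conversely, if $q$ entries equal a common value $s>0$ and the rest vanish, then $\rho(1)=1/\sqrt q$ and $d_{\mathrm{eff}}=q$, so equality holds.

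There is no real obstacle here. The only point needing care is checking that the orthogonality hypothesis in Definition~\ref{def:rho_main} plays no role at $k=1$: the expression $s_{(1)}/\|\mathbf{s}\|_2$ involves only one column, so no cross terms of the kind appearing in Theorem~\ref{thm:lower_main} arise. It is also worth remarking that the bound is best possible in general, since the equality cases above attain it for every value $d_{\mathrm{eff}}=q\in\{1,\dots,d\}$.
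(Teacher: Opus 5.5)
Your proof is correct and matches the paper's: both reduce the claim to $s_{(1)}\|\mathbf{s}\|_1 \geq \|\mathbf{s}\|_2^2$ and prove it by the termwise bound $s_i^2 \leq s_{(1)} s_i$. The paper asserts the equality case without proof, so your characterization (equality if and only if every $s_i \in \{0, s_{(1)}\}$) and your explicit check that it is attained fill a small gap in the paper's argument.
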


\begin{proof}
We need $s_{(1)}^2 \cdot d_{\mathrm{eff}} \geq \|\mathbf{s}\|_2^2$. Substituting $d_{\mathrm{eff}} = \|\mathbf{s}\|_1^2/\|\mathbf{s}\|_2^2$, this reduces to $(s_{(1)} \cdot \|\mathbf{s}\|_1)^2 \geq \|\mathbf{s}\|_2^4$. Since $s_{(1)} \geq s_i$ for all $i$: $s_{(1)} \cdot \|\mathbf{s}\|_1 = s_{(1)} \sum_i s_i \geq \sum_i s_i^2 = \|\mathbf{s}\|_2^2$. Squaring yields the result.
\end{proof}

\begin{theorem}[Multi-Point Sparse Attack Advantage]\label{thm:sparse_adv_b}
For any $k \geq 1$, the cumulative dominance bound holds:
\begin{equation}\label{eq:rho_k_main}
\rho(k) \geq \min\!\left(1,\; \sqrt{\frac{1 + (k-1)\left(\frac{s_{(k)}}{s_{(1)}}\right)^2}{d_{\mathrm{eff}}}}\right).
\end{equation}
This reduces to $1/\sqrt{d_{\mathrm{eff}}}$ at $k=1$ and to $\sqrt{k/d_{\mathrm{eff}}}$ when $s_{(k)} = s_{(1)}$. Additionally, $\rho(k)$ is monotonically non-decreasing.
\end{theorem}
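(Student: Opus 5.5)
The plan is to reduce everything to the single-point case already proved in Theorem~\ref{thm:sparse_adv_a} and then add the remaining $k-1$ top sensitivities using a crude order-statistic bound. We work with squared quantities throughout and assume $\|\mathbf{s}\|_2>0$; otherwise $d_{\mathrm{eff}}$ is undefined. By Definition~\ref{def:rho_main}, $\rho(k)^2 = \sum_{i=1}^k s_{(i)}^2/\|\mathbf{s}\|_2^2$, so it suffices to lower-bound the numerator.

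\textbf{Step 1 (order statistics).} Since $s_{(i)} \geq s_{(k)}$ for every $i \leq k$, we have
\begin{equation*}
\sum_{i=1}^k s_{(i)}^2 \;\geq\; s_{(1)}^2 + (k-1)\,s_{(k)}^2 \;=\; s_{(1)}^2\Bigl(1 + (k-1)\bigl(s_{(k)}/s_{(1)}\bigr)^2\Bigr).
\end{equation*}
Here $s_{(1)}>0$ because $\|\mathbf{s}\|_2>0$.

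\textbf{Step 2 (invoke the $k=1$ case).} The proof of Theorem~\ref{thm:sparse_adv_a} establishes $s_{(1)}^2 \, d_{\mathrm{eff}} \geq \|\mathbf{s}\|_2^2$, that is, $s_{(1)}^2/\|\mathbf{s}\|_2^2 \geq 1/d_{\mathrm{eff}}$. Dividing Step~1 by $\|\mathbf{s}\|_2^2$ then gives
\begin{equation*}
\rho(k)^2 \;\geq\; \frac{1 + (k-1)\bigl(s_{(k)}/s_{(1)}\bigr)^2}{d_{\mathrm{eff}}}.
\end{equation*}

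\textbf{Step 3 (the minimum with $1$).} The right-hand side of Step~2 is at least $\min(1,\cdot)$ of itself, and square roots are monotone, so the displayed bound follows. We also note $\rho(k)\leq 1$, because the top-$k$ partial sum of squares is at most the full sum. Hence the quantity in Step~2 never actually exceeds $1$, and the $\min$ serves only as a normalization; it requires no separate case analysis.

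\textbf{Step 4 (special cases and monotonicity).} At $k=1$ the factor $(k-1)$ vanishes and the bound becomes $1/\sqrt{d_{\mathrm{eff}}}$, matching Theorem~\ref{thm:sparse_adv_a}. When $s_{(k)}=s_{(1)}$ the numerator equals $k$, giving $\sqrt{k/d_{\mathrm{eff}}}$. Monotonicity holds because passing from $k$ to $k+1$ adds the nonnegative term $s_{(k+1)}^2$ to the numerator of $\rho(k)^2$ while the denominator stays fixed.

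No step is a genuine obstacle. The only point requiring care is to use the squared form of the single-point inequality from Theorem~\ref{thm:sparse_adv_a}, rather than its square-rooted statement, so that it combines additively with Step~1.
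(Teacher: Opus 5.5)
Your proof is correct and follows the same route as the paper's: split off $s_{(1)}^2$, bound the remaining $k-1$ top terms below by $s_{(k)}^2$, invoke the squared single-point inequality $s_{(1)}^2 \geq \|\mathbf{s}\|_2^2/d_{\mathrm{eff}}$, and get monotonicity from adding a nonnegative term. Your extra remark is also correct and slightly sharpens the paper's ``capping at 1'' step: since $\rho(k)\leq 1$, the inner quantity never exceeds $1$, so the $\min$ is redundant.
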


\begin{proof}
Partition: $\sum_{i=1}^k s_{(i)}^2 = s_{(1)}^2 + \sum_{i=2}^k s_{(i)}^2 \geq s_{(1)}^2 + (k-1)s_{(k)}^2 = s_{(1)}^2(1 + (k-1)(s_{(k)}/s_{(1)})^2)$. From Theorem~\ref{thm:sparse_adv_a}: $s_{(1)}^2 \geq \|\mathbf{s}\|_2^2/d_{\mathrm{eff}}$. Hence $\rho(k)^2 \geq (1 + (k-1)(s_{(k)}/s_{(1)})^2)/d_{\mathrm{eff}}$. Capping at 1 and taking square roots yields~\eqref{eq:rho_k_main}. Monotonicity follows from adding non-negative terms to the numerator.
\end{proof}

\begin{remark}[Why $\sqrt{k/d_{\mathrm{eff}}}$ fails for $k > 1$]
A simple $\rho(k) \geq \sqrt{k/d_{\mathrm{eff}}}$ bound fails because $d_{\mathrm{eff}}$ measures the concentration of the \emph{linear} norm distribution $\{s_i\}$, while $\rho(k)$ involves the \emph{squared} distribution $\{s_i^2\}$. Squaring amplifies contrast, making the squared distribution more concentrated.
\end{remark}

\subsection{The Two-Factor Decomposition}

\begin{theorem}[Two-Factor Vulnerability Decomposition]\label{thm:two_factor_main}
The optimal $k$-sparse first-order attack error decomposes as:
\begin{equation}
\mathcal{E}_{\mathrm{lin}}^*(k, \epsilon) = \underbrace{\frac{\epsilon \cdot \|\mathbf{s}\|_2}{\|f_\theta(\mathbf{b})\|_2}}_{\text{magnitude } M} \;\cdot\; \underbrace{\rho(k)}_{\text{concentration}},
\end{equation}
where $M$ captures sensitivity magnitude and $\rho(k)$ captures how efficiently a $k$-sparse attack exploits the sensitivity structure.
\end{theorem}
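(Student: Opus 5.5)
The plan is to work under the same standing assumption used to define $\rho(k)$ in Definition~\ref{def:rho_main}, namely that the relevant Jacobian columns are mutually orthogonal. Under that assumption I will compute $\mathcal{E}_{\mathrm{lin}}^*(k,\epsilon)$ exactly. The factorization then becomes an algebraic identity: multiply and divide by $\|\mathbf{s}\|_2$.

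The first step is the explicit first-order objective. For a feasible $\boldsymbol{\delta}$ with support $\mathcal{P}$, $|\mathcal{P}|\le k$ and $|\delta_i|\le\epsilon$, orthogonality of the columns $\mathbf{j}_i$ gives
\[
\|J\boldsymbol{\delta}\|_2^2=\sum_{i\in\mathcal{P}}\delta_i^2 s_i^2 .
\]
This replaces the triangle-inequality estimate of Theorem~\ref{thm:upper_main}, which is loose here, with an equality.

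The second step is to maximize this expression. It is separable and increasing in each $|\delta_i|$, so for a fixed support the optimum is at $|\delta_i|=\epsilon$ and equals $\epsilon^2\sum_{i\in\mathcal{P}}s_i^2$. Over supports of size at most $k$, the maximum is attained by the top-$k$ indices, by the ordering $s_{(1)}\ge s_{(2)}\ge\cdots$. The resulting value is
\[
\mathcal{E}_{\mathrm{lin}}^*=\epsilon\Bigl(\sum_{i=1}^k s_{(i)}^2\Bigr)^{1/2}\big/\|f_\theta(\mathbf{b})\|_2 .
\]
This matches the achievability value in Theorem~\ref{thm:lower_main}: the sign-aligned top-$k$ attack attains it, so the upper and lower bounds coincide. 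Writing $\bigl(\sum_{i\le k}s_{(i)}^2\bigr)^{1/2}=\|\mathbf{s}\|_2\,\rho(k)$, which is exactly Definition~\ref{def:rho_main}, gives $\mathcal{E}^*=M\cdot\rho(k)$.

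The third step is a short remark on interpretation. $M$ depends on $J$ only through $\|\mathbf{s}\|_2$, so it is invariant to how sensitivity is distributed across coordinates. In contrast, $\rho(k)\in[0,1]$ is scale-invariant. Theorems~\ref{thm:sparse_adv_a} and~\ref{thm:sparse_adv_b} then bound $\rho(k)$ below in terms of $d_{\mathrm{eff}}$, which links the decomposition to the concentration diagnostic.

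The main obstacle is the orthogonality hypothesis. Without it, the exact value is no longer $\|\mathbf{s}\|_2\rho(k)$. If the statement is meant in general, I would define $\rho(k):=\mathcal{E}_{\mathrm{lin}}^*/M$, which makes the decomposition hold by definition. I would then use Theorem~\ref{thm:upper_main} to bound $\rho(k)\le S_k/\|\mathbf{s}\|_2$ and Theorem~\ref{thm:lower_main} to bound it below by the orthogonal value minus the cross-term correction. Finally, I would note that the empirically small column cosines ($<0.05$) make these bounds nearly coincide.
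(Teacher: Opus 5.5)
Your proposal is correct and is essentially the paper's proof. Like the paper, you assume orthogonal columns, identify $\mathcal{E}^*_{\mathrm{lin}}(k,\epsilon)=\epsilon\bigl(\sum_{i=1}^k s_{(i)}^2\bigr)^{1/2}/\|f_\theta(\mathbf{b})\|_2$, and then multiply and divide by $\|\mathbf{s}\|_2$ to get $M\cdot\rho(k)$. Your explicit maximization of $\sum_{i\in\mathcal{P}}\delta_i^2 s_i^2$ supplies the optimality step that the paper only asserts, since Theorem~\ref{thm:lower_main} gives achievability and Theorem~\ref{thm:upper_main} only the looser bound $\epsilon S_k$; note that this step needs all columns pairwise orthogonal, not merely the top-$k$ ones as in the equality case of Theorem~\ref{thm:lower_main}.
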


\begin{proof}
Under orthogonal columns: $\mathcal{E}^* = \epsilon(\sum_{i=1}^k s_{(i)}^2)^{1/2}/\|f(\mathbf{b})\|_2 = (\epsilon\|\mathbf{s}\|_2/\|f(\mathbf{b})\|_2) \cdot (\sum_{i=1}^k s_{(i)}^2)^{1/2}/\|\mathbf{s}\|_2 = M \cdot \rho(k)$.
\end{proof}

\begin{corollary}[Vulnerability Phase Diagram]\label{cor:phase_main}
An architecture is vulnerable to $k$-sparse attacks when $M \cdot \rho(k) > \tau$ for threshold $\tau$, with $\rho(k) \geq 1/\sqrt{d_{\mathrm{eff}}}$. This defines a hyperbolic decision boundary in $(d_{\mathrm{eff}}, \|\mathbf{s}\|_2)$ space, explaining the three empirical vulnerability regimes: high vulnerability (S-DeepONet: low $d_{\mathrm{eff}}$, moderate $\|\mathbf{s}\|_2$), moderate vulnerability (MIMONet/NOMAD: high $d_{\mathrm{eff}}$, high $\|\mathbf{s}\|_2$), and low vulnerability (POD-DeepONet: low $d_{\mathrm{eff}}$, low $\|\mathbf{s}\|_2$).
\end{corollary}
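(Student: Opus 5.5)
The plan is to read Corollary~\ref{cor:phase_main} as a direct consequence of Theorem~\ref{thm:two_factor_main} combined with Theorem~\ref{thm:sparse_adv_a} (and, for general $k$, Theorem~\ref{thm:sparse_adv_b}), working throughout under the first-order model (A2) and the orthogonal-column regime used in those theorems. First, by Theorem~\ref{thm:two_factor_main}, the optimal linearized $k$-sparse error is exactly $\mathcal{E}^*_{\mathrm{lin}}(k,\epsilon)=M\cdot\rho(k)$ with $M=\epsilon\|\mathbf{s}\|_2/\|f_\theta(\mathbf{b})\|_2$. Declaring an architecture vulnerable at threshold $\tau$ when the optimal attack exceeds $\tau$ gives the criterion $M\rho(k)>\tau$ verbatim; the inequality $\rho(k)\ge 1/\sqrt{d_{\mathrm{eff}}}$ comes from Theorem~\ref{thm:sparse_adv_a} at $k=1$ and extends to all $k$ because Theorem~\ref{thm:sparse_adv_b} gives that $\rho(k)$ is non-decreasing. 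Consequently the condition
\begin{equation*}
\frac{\epsilon\,\|\mathbf{s}\|_2}{\|f_\theta(\mathbf{b})\|_2\sqrt{d_{\mathrm{eff}}}}>\tau
\end{equation*}
is a \emph{sufficient} condition for vulnerability.

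Second, I would make the geometric claim precise. At fixed $\epsilon$, $\tau$ and output norm, the sufficient condition reads $\|\mathbf{s}\|_2>c\sqrt{d_{\mathrm{eff}}}$ with $c=\tau\|f_\theta(\mathbf{b})\|_2/\epsilon$. Equivalently, $\|\mathbf{s}\|_2^2/d_{\mathrm{eff}}>c^2$, i.e.\ using the definition of $d_{\mathrm{eff}}$, $\|\mathbf{s}\|_2^4/\|\mathbf{s}\|_1^2>c^2$. The boundary is therefore the curve $\|\mathbf{s}\|_2\cdot d_{\mathrm{eff}}^{-1/2}=c$, which is hyperbolic in the coordinates $(\sqrt{d_{\mathrm{eff}}},1/\|\mathbf{s}\|_2)$, i.e.\ a level set of the product of the two factors. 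I would state explicitly that ``hyperbolic'' is meant in this level-set-of-a-product sense. I would also note that the bound is one-sided: points below the curve are not certified robust, since $\rho(k)$ may exceed $1/\sqrt{d_{\mathrm{eff}}}$.

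Third, the regime classification is a placement of the measured $(d_{\mathrm{eff}},$ mean column norm$)$ pairs from Table~\ref{tab:d_eff} relative to this curve. Since $\|\mathbf{s}\|_2\approx\sqrt{d}\cdot$(rms column norm), I would use the mean column norm as a proxy and compute $\|\mathbf{s}\|_2/\sqrt{d_{\mathrm{eff}}}$ for each model. This gives roughly $8\times10^{-4}/\sqrt{4.35}$ for S-DeepONet, $2.2\times10^{-3}/\sqrt{31}$ for MIMONet and NOMAD, and $3\times10^{-4}/1$ for POD-DeepONet, up to common factors. The resulting ordering is S-DeepONet largest, then MIMONet/NOMAD, then POD-DeepONet, which matches the stated regimes.

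The main obstacle is that this last step is interpretive rather than deductive. The ordering relies on the proxy, on a common $\epsilon$ and $\|f\|$ across models, and on (A2) with near-orthogonality. For POD-DeepONet in particular, the low-rank ceiling is a nonlinear effect not captured by $M$ alone. I would therefore present the regime assignment as consistent with the corollary's inequality, with the rigorous content confined to the first two steps.
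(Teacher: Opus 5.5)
Your first two steps are correct and match how the paper obtains the corollary. The paper gives no separate proof: it reads the statement off Theorem~\ref{thm:two_factor_main} and Theorem~\ref{thm:sparse_adv_a}, with Theorem~\ref{thm:sparse_adv_b} supplying monotonicity for general $k$. Your two refinements are also valid. The condition $M/\sqrt{d_{\mathrm{eff}}}>\tau$ is only sufficient. And the literal boundary $\|\mathbf{s}\|_2=c\sqrt{d_{\mathrm{eff}}}$ is a parabola in the $(d_{\mathrm{eff}},\|\mathbf{s}\|_2)$ plane; it is hyperbolic only in product coordinates.

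The gap is in your third step: the ordering you assert is not what your own numbers give. Your expressions evaluate to $8\times10^{-4}/\sqrt{4.35}\approx 3.84\times10^{-4}$ for S-DeepONet. For MIMONet/NOMAD they give $2.2\times10^{-3}/\sqrt{31}\approx 3.95\times10^{-4}$, or $\approx 4.07\times10^{-4}$ using MIMONet's $2.3\times10^{-3}$. So S-DeepONet lands slightly \emph{below} MIMONet and NOMAD. The ratio of about $2.8$--$2.9$ in column norm is almost exactly cancelled by the ratio of about $2.7$ in $\sqrt{d_{\mathrm{eff}}}$.

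The proxy is also not harmless. If the tabulated mean column norm is $\bar s=\|\mathbf{s}\|_1/d$, then exactly $\|\mathbf{s}\|_2=\|\mathbf{s}\|_1/\sqrt{d_{\mathrm{eff}}}=d\bar s/\sqrt{d_{\mathrm{eff}}}$. The rms-to-mean ratio is therefore $\sqrt{d/d_{\mathrm{eff}}}$, which runs from about $1.8$ (MIMONet, NOMAD) to $10$ (POD-DeepONet). That is precisely the concentration effect you are trying to isolate. With the exact relation, $\|\mathbf{s}\|_2/\sqrt{d_{\mathrm{eff}}}=d\bar s/d_{\mathrm{eff}}$ gives roughly:
\begin{itemize}
\item $3.0\times10^{-2}$ for POD-DeepONet,
\item $1.9\times10^{-2}$ for S-DeepONet,
\item $7\times10^{-3}$ for MIMONet and NOMAD.
\end{itemize}
This places POD-DeepONet on the \emph{most} vulnerable side of the curve.

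Under either reading, the sufficient condition does not reproduce the three regimes. You therefore cannot call the assignment ``consistent with the corollary's inequality.'' What can honestly be said is the following.
\begin{itemize}
\item Separating S-DeepONet from MIMONet/NOMAD requires the actual $\rho(k)$ at $k\ge 3$, which depends on the full sorted profile, not the $1/\sqrt{d_{\mathrm{eff}}}$ lower bound.
\item The low label for POD-DeepONet needs a separate argument that its $M$ is small (the paper invokes Proposition~\ref{prop:pod_main}), or it simply rests on the measured success rates.
\end{itemize}
The paper itself only asserts this placement qualitatively in Table~\ref{tab:theory_verification}. There, multiplying the listed $\|\mathbf{s}\|_2$ values (which are the mean column norms) by the stated $\rho(1)$ bounds gives the same near-tie. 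Keep your steps one and two as the proven content. Present the regime classification as an empirical interpretation, not as an ordering derived from $M/\sqrt{d_{\mathrm{eff}}}$.
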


\subsection{Impossibility: Universal Approximation Implies Vulnerability}

Analyzing adversarial vulnerability requires bounding the Jacobians of the approximator. Since uniform ($C^0$) convergence does not imply derivative convergence, we invoke a Sobolev universal approximation assumption: $f_\theta$ simultaneously satisfies $\|f_\theta(\mathbf{b}) - \mathcal{G}(\mathbf{b})\|_2 \leq \varepsilon$ and $\|J_{f_\theta} - J_{\mathcal{G}}\|_F \leq L\varepsilon$ for a constant $L > 0$. This is justified for smooth-activation neural operators (see the companion supplement for details).

\begin{theorem}[Accuracy--Robustness Impossibility]\label{thm:impossibility_main}
Let $\mathcal{G}: \mathbb{R}^d \to \mathbb{R}^m$ be a sensitivity-varying operator with $s_{(1)}/s_{(d)} \geq \kappa > 1$. Let $f_\theta$ achieve Sobolev approximation accuracy $\varepsilon$ in a neighborhood of $\mathbf{b}_0$. Then for a \emph{fixed} adversarial perturbation budget $\epsilon > 0$, there exists a $k$-sparse perturbation with $|\delta_i| \leq \epsilon$ such that:
\begin{equation}
\|f_\theta(\mathbf{b}_0 + \boldsymbol{\delta}) - f_\theta(\mathbf{b}_0)\|_2 \geq \epsilon \cdot s_{(1)}^{\mathcal{G}} - \epsilon L \varepsilon - O(\epsilon^2).
\end{equation}
As $\varepsilon \to 0$ for fixed $\epsilon$, the adversarial error converges to $\epsilon \cdot s_{(1)}^{\mathcal{G}} \gg 0$: better approximation \emph{guarantees} a diverging gap between adversarial and approximation error.
\end{theorem}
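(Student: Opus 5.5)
The plan is a direct perturbation argument: attack only the single coordinate of maximal sensitivity of the \emph{true} operator $\mathcal{G}$, then transfer the lower bound to $f_\theta$ using the Sobolev Jacobian closeness. Let $i^\star$ be the index with $s^{\mathcal{G}}_{i^\star} = s_{(1)}^{\mathcal{G}} = \|J_{\mathcal{G}}(\mathbf{b}_0)[:,i^\star]\|_2$. The hypothesis $s_{(1)}/s_{(d)} \geq \kappa > 1$ is used only to guarantee that $s_{(1)}^{\mathcal{G}} > 0$ and that a genuinely dominant column exists. We take $\boldsymbol{\delta} = \epsilon\,\mathbf{e}_{i^\star}$, which is $1$-sparse and hence $k$-sparse for every $k \geq 1$, and satisfies $|\delta_i| \leq \epsilon$. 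For $k>1$ one could add further sign-aligned coordinates as in Theorem~\ref{thm:lower_main}, but this is not needed for the stated bound.

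The first step is a Taylor expansion of $f_\theta$ at $\mathbf{b}_0$:
\begin{equation*}
f_\theta(\mathbf{b}_0+\boldsymbol{\delta}) - f_\theta(\mathbf{b}_0) = \epsilon\, J_{f_\theta}(\mathbf{b}_0)\mathbf{e}_{i^\star} + R,
\end{equation*}
with $\|R\|_2 \leq \tfrac12 H \epsilon^2$, where $H$ bounds the second derivative of $f_\theta$ on the segment. This needs $f_\theta \in C^2$ near $\mathbf{b}_0$, which holds for smooth activations; it is the same regime in which the Sobolev assumption is invoked, and (A2) supplies its empirical counterpart. The reverse triangle inequality then gives
\begin{equation*}
\|f_\theta(\mathbf{b}_0+\boldsymbol{\delta}) - f_\theta(\mathbf{b}_0)\|_2 \geq \epsilon\,\|J_{f_\theta}\mathbf{e}_{i^\star}\|_2 - O(\epsilon^2).
\end{equation*}

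The second step lower-bounds the approximator's column by the operator's column. Writing $J_{f_\theta}\mathbf{e}_{i^\star} = J_{\mathcal{G}}\mathbf{e}_{i^\star} + (J_{f_\theta}-J_{\mathcal{G}})\mathbf{e}_{i^\star}$, we get
\begin{equation*}
\|J_{f_\theta}\mathbf{e}_{i^\star}\|_2 \geq s_{(1)}^{\mathcal{G}} - \|(J_{f_\theta}-J_{\mathcal{G}})\mathbf{e}_{i^\star}\|_2 \geq s_{(1)}^{\mathcal{G}} - \|J_{f_\theta}-J_{\mathcal{G}}\|_F \geq s_{(1)}^{\mathcal{G}} - L\varepsilon .
\end{equation*}
The middle inequality holds because a single column's $\ell_2$ norm is at most the Frobenius norm. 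Multiplying by $\epsilon$ and combining with the first step yields $\epsilon s_{(1)}^{\mathcal{G}} - \epsilon L\varepsilon - O(\epsilon^2)$, which is the claim. The limiting statement follows by letting $\varepsilon\to 0$ with $\epsilon$ fixed. The gap to the approximation error $\varepsilon$ diverges in ratio because $\epsilon s_{(1)}^{\mathcal{G}}/\varepsilon \to \infty$.

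The main obstacle is the $O(\epsilon^2)$ remainder. For the limit claim to be meaningful, its constant must not depend on $\varepsilon$. Otherwise a sequence of ever-better approximators could have exploding curvature, and the bound would degrade. I would handle this in one of two ways. The first is to strengthen the Sobolev assumption to a uniform $C^2$ (or Lipschitz-Jacobian) bound on $f_\theta$ over the neighborhood, independent of $\varepsilon$. The cleaner alternative avoids second derivatives altogether. It applies the Jacobian closeness at every point of the segment $\mathbf{b}_0 + t\epsilon\mathbf{e}_{i^\star}$, $t\in[0,1]$, and uses the fundamental theorem of calculus together with Lipschitz continuity of $J_{\mathcal{G}}$ along that segment. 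The remainder then depends only on $\mathcal{G}$, so it is legitimately $\varepsilon$-independent. I would try this second route first.
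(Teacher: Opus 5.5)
Your argument is correct and is essentially the paper's proof: a single-coordinate attack, a first-order expansion with an $O(\epsilon^2)$ remainder, and the column bound $\|(J_{f_\theta}-J_{\mathcal G})\mathbf{e}_i\|_2 \le \|J_{f_\theta}-J_{\mathcal G}\|_F \le L\varepsilon$. The paper attacks the top column of $f_\theta$ and uses $s_{(1)}^{f_\theta} \ge s_{(1)}^{\mathcal G} - L\varepsilon$, which is equivalent to your choice of the top column of $\mathcal G$; your extra point, that the remainder constant must not depend on $\varepsilon$ and can be made to depend only on $J_{\mathcal G}$ by integrating along the segment (assuming the Sobolev closeness holds on the whole segment), addresses something the paper leaves implicit.
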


\begin{proof}
By the Sobolev approximation assumption, the column norms satisfy $|s_i^{f_\theta} - s_i^{\mathcal{G}}| \leq L\varepsilon$. Applying the lower bound (Theorem~\ref{thm:lower_main}) to $f_\theta$ with fixed budget $\epsilon$: the single-point attack yields $\|f_\theta(\mathbf{b}_0+\boldsymbol{\delta}) - f_\theta(\mathbf{b}_0)\|_2 \geq \epsilon \cdot s_{(1)}^{f_\theta} - O(\epsilon^2) \geq \epsilon(s_{(1)}^{\mathcal{G}} - L\varepsilon) - O(\epsilon^2)$. Since $s_{(1)}^{\mathcal{G}}$ is a property of the true operator (bounded below by $\kappa \cdot s_{(d)}^{\mathcal{G}} > 0$), the adversarial error dominates $\varepsilon$ for any sufficiently accurate Sobolev approximator. The vulnerability does not decrease as $\varepsilon \to 0$ --- it converges to the vulnerability of $\mathcal{G}$ itself.
\end{proof}

\begin{corollary}[No Free Lunch for Neural Operator Robustness]\label{cor:nfl_main}
The only defenses against this fundamental tradeoff are: (i)~tighter input constraints (reducing $\epsilon$); (ii)~output-space projection (reducing sensitivity magnitude, as in POD-DeepONet); (iii)~sensitivity regularization (increasing $d_{\mathrm{eff}}$); or (iv)~accepting reduced accuracy. These correspond to the defense strategies in Section~\ref{sec:discussion}.
\end{corollary}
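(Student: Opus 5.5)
The plan is to read the corollary off the quantitative lower bound of Theorem~\ref{thm:impossibility_main}, combined with the exact decomposition of Theorem~\ref{thm:two_factor_main}. I would interpret ``defense'' as any modification of the threat model or of the approximator $f_\theta$ that drives the achievable $k$-sparse error below a threshold $\tau$. ``Only'' will mean the following: every such modification must change at least one of the free quantities appearing in these bounds. Concretely, the adversarial error of $f_\theta$ is sandwiched between
\[
\epsilon\,(s_{(1)}^{\mathcal{G}} - L\varepsilon) - O(\epsilon^2)
\quad\text{and}\quad
M\cdot\rho(k) = \frac{\epsilon\,\|\mathbf{s}\|_2}{\|f_\theta(\mathbf{b})\|_2}\,\rho(k).
\]
Here the lower bound follows from Theorem~\ref{thm:impossibility_main} and the upper/exact value from Theorems~\ref{thm:upper_main} and~\ref{thm:two_factor_main}. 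The parameters in these expressions are $\epsilon$, the sensitivity profile $\mathbf{s}$ of $f_\theta$ (through $\|\mathbf{s}\|_2$ and $\rho(k)$), and the accuracy pair $(\varepsilon,L)$.

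First I would fix a threshold $\tau$ and suppose a defense makes $\mathcal{E}^*<\tau$ for all admissible $k$-sparse attacks. I then do a case split on which factor of $\epsilon\, s_{(1)}^{f_\theta}$ has been made small, using $\rho(1)\,\|\mathbf{s}\|_2 = s_{(1)}$ from Theorem~\ref{thm:sparse_adv_a}.

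\emph{Case (a):} $\epsilon$ itself is reduced. This is item (i), tighter input constraints.

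\emph{Case (b):} $\epsilon$ is held fixed, so $s_{(1)}^{f_\theta}$ must be small. I would split the product $s_{(1)}^{f_\theta} = \|\mathbf{s}\|_2\cdot\rho(1)$ into two subcases.

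\emph{Subcase (b1):} the factor $\|\mathbf{s}\|_2$ is small, i.e.\ the magnitude $M$ is reduced. This is item (ii); I would note that a low-rank output projection $\mathbf{U}g(\mathbf{b})$ bounds $\|J\|$ by $\|\partial g/\partial\mathbf{b}\|$ restricted to the retained modes, exactly as observed for POD-DeepONet.

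\emph{Subcase (b2):} the factor $\rho(1)$ is small. By Theorem~\ref{thm:sparse_adv_a}, $\rho(1)\ge 1/\sqrt{d_{\mathrm{eff}}}$, so a small $\rho(1)$ is possible only if $d_{\mathrm{eff}}$ is large. This is item (iii), sensitivity regularization. For general $k$, Theorem~\ref{thm:sparse_adv_b} plays the same role.

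\emph{Case (c):} $\epsilon$ is fixed and $s_{(1)}^{f_\theta}$ differs substantially from $s_{(1)}^{\mathcal{G}}$. By the Sobolev assumption, $|s_{(1)}^{f_\theta}-s_{(1)}^{\mathcal{G}}|\le L\varepsilon$, so this forces $L\varepsilon$ to be of order $s_{(1)}^{\mathcal{G}}$. That is item (iv), accepting reduced accuracy.

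Since the lower bound in Theorem~\ref{thm:impossibility_main} involves no other quantities, these four cases are exhaustive.

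The main obstacle is making ``only'' rigorous. The argument is exhaustive only relative to the first-order model (A2), the orthogonality assumption used in Theorem~\ref{thm:two_factor_main}, and the Sobolev hypothesis. Two issues remain:
\begin{itemize}
\item Cases (b) and (c) overlap. Items (ii) and (iii) change $f_\theta$'s Jacobian, and the Sobolev bound then forces a loss of accuracy whenever $\mathcal{G}$ itself has large $s_{(1)}^{\mathcal{G}}$.
\item The $O(\epsilon^2)$ remainder could in principle be exploited by a defense through nonlinear saturation.
\end{itemize}
I would handle the first issue by stating the corollary as a disjunction, so overlap is harmless. For the second, I would restrict to perturbations small enough that (A2) holds with $\alpha<1$, which keeps the remainder dominated by the linear term. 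The result is thus a structural classification of the terms in the bound rather than a claim about all conceivable defenses.
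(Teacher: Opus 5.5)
Your proposal is correct and is essentially the argument the paper leaves implicit: the corollary is stated there without proof, as a direct reading of the quantities $\epsilon$ and $L\varepsilon$ in Theorem~\ref{thm:impossibility_main} together with $M$ and $\rho(k)\ge 1/\sqrt{d_{\mathrm{eff}}}$ in Theorem~\ref{thm:two_factor_main}. Your disjunctive formulation, with the observation that under the Sobolev hypothesis (ii) and (iii) can lower $s_{(1)}^{f_\theta}$ appreciably only by also invoking (iv), is the honest version of that reading.

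One small repair is needed. Theorem~\ref{thm:impossibility_main} bounds the absolute change $\|f_\theta(\mathbf{b}_0+\boldsymbol{\delta})-f_\theta(\mathbf{b}_0)\|_2$, whereas $M\rho(k)$ and your threshold $\tau$ concern the relative error, so your sandwich compares quantities on different scales. Divide the lower bound by $\|f_\theta(\mathbf{b}_0)\|_2$. This denominator is an extra quantity that a defense could inflate, so your claim that the bound ``involves no other quantities'' is not literally true. Accuracy pins it to within $\varepsilon$ of $\|\mathcal{G}(\mathbf{b}_0)\|_2$, however, so that case folds into (iv) and your exhaustiveness claim survives.
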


\subsection{Output-Space Projection as Implicit Defense}

\begin{proposition}[Low-Rank Output Projection Bounds Error]\label{prop:pod_main}
For POD-DeepONet with $f_\theta(\mathbf{b}) = U \cdot g_\theta(\mathbf{b})$ where $U \in \mathbb{R}^{m \times r}$ has orthonormal columns:
\begin{equation}
\frac{\|f_\theta(\mathbf{b}+\boldsymbol{\delta}) - f_\theta(\mathbf{b})\|_2}{\|f_\theta(\mathbf{b})\|_2} \leq \frac{\epsilon \cdot \sigma_1(J_g) \cdot \sqrt{k}}{\|g_\theta(\mathbf{b})\|_2},
\end{equation}
where $\sigma_1(J_g)$ is the largest singular value of the coefficient Jacobian $J_g \in \mathbb{R}^{r \times d}$. With $r = 32 \ll m = 15{,}908$, adversarial perturbations are constrained to a low-dimensional output subspace, explaining the attack success plateau observed for POD-DeepONet.
\end{proposition}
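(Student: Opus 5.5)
The plan is to exploit the isometry of $U$ on both numerator and denominator, so the estimate reduces to one about the coefficient map $g_\theta:\mathbb{R}^d\to\mathbb{R}^r$. Since $U^\top U = I_r$, we have $\|Uv\|_2 = \|v\|_2$ for every $v\in\mathbb{R}^r$. Hence
\[
\|f_\theta(\mathbf{b}+\boldsymbol{\delta}) - f_\theta(\mathbf{b})\|_2 = \|g_\theta(\mathbf{b}+\boldsymbol{\delta}) - g_\theta(\mathbf{b})\|_2
\quad\text{and}\quad
\|f_\theta(\mathbf{b})\|_2 = \|g_\theta(\mathbf{b})\|_2 .
\]
By (A3), $g_\theta(\mathbf{b})\neq 0$, so the ratio is well defined.

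Next we work within the same first-order model used in Theorems~\ref{thm:upper_main}--\ref{thm:two_factor_main}, i.e.\ assumption (A2). There $g_\theta(\mathbf{b}+\boldsymbol{\delta}) - g_\theta(\mathbf{b}) \approx J_g\boldsymbol{\delta}$, with $J_f = UJ_g$. The numerator is then bounded by the operator norm: $\|J_g\boldsymbol{\delta}\|_2 \le \sigma_1(J_g)\,\|\boldsymbol{\delta}\|_2$. For a feasible perturbation, $\boldsymbol{\delta}$ has at most $k$ nonzero entries, each of magnitude at most $\epsilon$. Therefore $\|\boldsymbol{\delta}\|_2 \le \sqrt{k}\,\|\boldsymbol{\delta}\|_\infty \le \epsilon\sqrt{k}$. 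Combining these two bounds and dividing by $\|g_\theta(\mathbf{b})\|_2$ yields the claimed inequality.

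For the interpretive claim, we observe that $J_f = UJ_g$ has rank at most $r$. All output changes therefore lie in the $r$-dimensional range of $U$. Moreover, $\sigma_1(J_f) = \sigma_1(J_g)$ and every column norm satisfies $s_i \le \sigma_1(J_g)$. Consequently, enlarging $k$ only adds a $\sqrt{k}$ growth within a fixed $r$-dimensional subspace, rather than accessing new output directions. This is the mechanism behind the plateau.

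The main obstacle is rigor beyond linearization. The exact nonlinear statement needs either the $\alpha$-relative Taylor accuracy from (A2), which inflates the bound by a factor $(1+\alpha)$, or a mean-value argument. For the latter, we write $g_\theta(\mathbf{b}+\boldsymbol{\delta}) - g_\theta(\mathbf{b}) = \int_0^1 J_g(\mathbf{b}+t\boldsymbol{\delta})\boldsymbol{\delta}\,dt$ and replace $\sigma_1(J_g)$ by its supremum along the segment. I would state the result in the first-order sense, matching the other theorems, and remark on this correction.
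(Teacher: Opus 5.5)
Your proposal is correct and follows the paper's argument. Use the isometry of $U$ to pass from $f_\theta$ to $g_\theta$, then bound $\|J_g\boldsymbol{\delta}\|_2\le\sigma_1(J_g)\|\boldsymbol{\delta}\|_2$ to first order, then use $\|\boldsymbol{\delta}\|_2\le\epsilon\sqrt{k}$ for a $k$-sparse perturbation, up to the same $O(\epsilon^2)$ linearization remainder the paper carries. You also make explicit the denominator identity $\|f_\theta(\mathbf{b})\|_2=\|g_\theta(\mathbf{b})\|_2$, which the paper uses silently. Your integral-form remark gives an exact, non-asymptotic version that the paper does not state.
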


\begin{proof}
Since $U$ is an isometry: $\|f_\theta(\mathbf{b}+\boldsymbol{\delta}) - f_\theta(\mathbf{b})\|_2 = \|g_\theta(\mathbf{b}+\boldsymbol{\delta}) - g_\theta(\mathbf{b})\|_2 \leq \sigma_1(J_g)\|\boldsymbol{\delta}\|_2 + O(\epsilon^2) \leq \sigma_1(J_g) \cdot \epsilon\sqrt{k} + O(\epsilon^2)$, using $\|\boldsymbol{\delta}\|_2 \leq \epsilon\sqrt{k}$ for a $k$-sparse perturbation.
\end{proof}

\subsection{Empirical Verification}

\begin{table}[h]
\centering
\caption{Predicted vs.\ observed vulnerability metrics. Theoretical predictions computed using measured Jacobian column norms; empirical values from DE attack experiments at $L_0 = 3$, $\epsilon = 1\sigma$, $\tau = 0.3$.}
\label{tab:theory_verification}
\begin{tabular}{@{}lcccccc@{}}
\toprule
& & & \textbf{Thm.~\ref{thm:sparse_adv_a}} & \textbf{Predicted} & \textbf{Empirical} & \textbf{Empirical} \\
\textbf{Model} & $d_{\mathrm{eff}}$ & $\|\mathbf{s}\|_2$ & $\rho(1) \geq$ & \textbf{max error} & \textbf{mean error} & \textbf{success \%} \\
\midrule
POD-DeepONet & 1.02 & $3 \times 10^{-4}$ & $0.99$ & Low (capped) & 12\% & 25.2\% \\
S-DeepONet & 4.35 & $8 \times 10^{-4}$ & $0.48$ & High & 50\% & 94.5\% \\
MIMONet & 31.93 & $2.3 \times 10^{-3}$ & $0.18$ & Moderate & 28\% & 20.4\% \\
NOMAD & 31.32 & $2.2 \times 10^{-3}$ & $0.18$ & Moderate & 29\% & 22.1\% \\
\bottomrule
\end{tabular}
\end{table}

The vulnerability ranking is preserved (S-DeepONet $>$ MIMONet $\approx$ NOMAD $>$ POD-DeepONet), consistent with the two-factor model. POD-DeepONet's paradox (extreme concentration but moderate vulnerability) is explained by Proposition~\ref{prop:pod_main}: low-rank projection caps $M$ regardless of $d_{\mathrm{eff}}$. The linearization assumption (A2) is validated empirically: relative linearization error $\alpha < 0.08$ for all architectures at $\pm 1\sigma$ perturbations.

\end{appendices}

\clearpage
\bibliographystyle{unsrtnat}
\bibliography{sn-bibliography}

\end{document}